\documentclass[11pt]{article}

\usepackage[margin=1in]{geometry}
\usepackage{amsmath,amssymb,amsfonts,amsthm}
\usepackage{bm}
\usepackage{graphicx}
\usepackage{booktabs}
\usepackage{url}
\usepackage{enumerate}
\usepackage[round,authoryear]{natbib}  
\usepackage{amsmath,amssymb,amsfonts}
\usepackage{bm}
\usepackage{mathtools}
\usepackage{booktabs}
\usepackage{url}
\usepackage{enumerate}
\usepackage{multirow}
\usepackage{microtype}

\DeclareMathOperator*{\argmin}{arg\,min}

\theoremstyle{plain}
\newtheorem{theorem}{Theorem}[section]
\newtheorem{lemma}[theorem]{Lemma}
\newtheorem{proposition}[theorem]{Proposition}
\newtheorem{corollary}[theorem]{Corollary}

\theoremstyle{definition}

\newtheorem{remark}[theorem]{Remark}
\newtheorem{assumption}[theorem]{Assumption}

\newcommand{\PP}{\mathbb{P}}
\newcommand{\RR}{\mathbb{R}}
\newcommand{\XX}{\mathbf{X}}
\newcommand{\xx}{\mathbf{x}}

\newcommand{\bbeta}{\boldsymbol{\beta}}
\newcommand{\bxi}{\boldsymbol{\xi}}
\newcommand{\E}{\mathbb{E}}
\newcommand{\1}{\mathbf{1}}

\title{Elite-Driven Support Vector Machines for Classification}
\author{%
  Mohammad Jafari Jozani\thanks{Department of Statistics, University of Manitoba, Winnipeg, Manitoba, CANADA,  R3T2N2, Email:  \texttt{m\_jafari\_jozani@umanitoba.ca}}\and
  Bahram Moeinianfar\thanks{Department of Statistics, University of Manitoba, Winnipeg, Manitoba, CANADA, R3T2N2}
}
\date{\today}

\begin{document}

\maketitle

\begin{abstract}
Support vector machines (SVMs) are a standard tool for binary classification,
but their classical formulations are purely data-driven and offer no direct way
to encode trusted benchmark models or structured preferences on selected subsets
of the data. We propose Elite-Driven Support Vector Machines (EDSVM), a general
framework that augments regularized empirical risk minimization by guiding the
slack variables for a curated set of elite observations (typically the union of
support vectors from one or more reference SVMs). EDSVM combines the usual slack
loss with a deviation penalty that shrinks new slacks toward benchmark slack
values, defining a localized, margin-aligned notion of proximity to reference
models, unlike global function penalties in knowledge distillation or
teacher-student methods, and without requiring privileged features as in
SVM+/LUPI. Within this framework we develop two concrete models, C-EDSVM and
LS-EDSVM, based respectively on hinge-type and squared-slack losses. For both
variants we derive dual quadratic programs that can be implemented with modest
modifications of standard SVM solvers, and we give simple sufficient conditions
under which the induced margin losses are classification calibrated. Simulation
studies and experiments on several UCI benchmarks show that EDSVMs closely track
the behaviour induced by reference SVMs while achieving predictive performance
that is competitive with, and sometimes better than, C-SVM, LINEX-SVM, and
LS-SVM.
\end{abstract}

\textbf{Keyword:}support vector machines; benchmark models; elite observations; kernel methods; classification calibration; Rademacher complexity.

%
%

\section{Introduction}

Support vector machines (SVMs), introduced in their modern form by \citet{cortes1995svm} and rooted in earlier work by \citet{vapnik1964perceptrons}, are a cornerstone of statistical learning and machine learning \citep{vapnik1995nature}. In binary classification, SVMs construct a separating hyperplane with maximal margin between two classes and, through kernel methods, extend naturally to nonlinear decision boundaries in high-dimensional feature spaces. Their combination of geometric interpretability, convex optimization, and strong risk guarantees has led to widespread use in applications such as image recognition, bioinformatics, text mining, and medical diagnosis \citep[see, for example,][]{hastie2009esl,scholkopf2002kernels,moguerza2006svmApplications,steinwart2008svm}.

In the standard formulation, an SVM estimator is determined entirely by the training sample, a margin-based surrogate loss, and a regularization penalty; all observations enter the objective symmetrically. This yields attractive theoretical guarantees, but offers limited flexibility for incorporating structured prior information. In many applications, practitioners possess information that is not naturally encoded in the standard SVM objective. They might have access to benchmark classifiers fitted under alternative losses, kernels, or tuning schemes; expert-identified observations that are especially reliable or decision-critical; or scientific, regulatory, and ethical requirements concerning behaviour near selected subgroups or client segments. Such situations suggest that certain reference decision rules and certain training points should play a privileged role in constructing a new classifier.

Our starting point is that, in soft-margin SVMs, the local geometry of the decision boundary is governed by points on or inside the margin. These observations, which carry nonzero or borderline slack, encode much of the information about the classifier near the boundary. We refer to them as elite observations. Benchmark SVMs fitted under alternative losses, kernels, or tuning parameters produce their own elite sets and associated slacks, summarizing how those reference classifiers resolve ambiguity near the decision boundary. Rather than forcing the entire discriminant function $f$ to remain close to a benchmark $f^*$ across the whole feature space, we use these benchmark slacks only on a curated elite subset.

\noindent\textit{Positioning relative to related work.}
Our work is related to several lines of research that incorporate auxiliary information from reference models or privileged sources. In knowledge distillation and teacher-student methods, a student model is trained to mimic the predictive behaviour of one or more teachers by penalizing discrepancies between their outputs over the input space; see, for example, \citet{hinton2015distill,lopezpaz2016gendist}. Vapnik's learning using privileged information (LUPI) and SVM+ \citep{vapnik2009lupi,vapnik2015lupi} use additional training-only features to parametrize or correct slack variables in an auxiliary space. More broadly, benchmark-guided regularization often penalizes function-level discrepancies such as $\|f-f^*\|^2$, thereby enforcing global agreement even far from the decision boundary.

EDSVM differs from these approaches by anchoring the slack variables of the new classifier to benchmark slacks only on a selected elite set, while leaving the remaining empirical risk unchanged. This produces a localized, margin-aligned notion of proximity to benchmark models, does not require privileged features, and preserves the convex optimization structure of standard SVM training. To further clarify the role of auxiliary information, it is helpful to contrast EDSVM with several nearby formulations. In weighted SVMs, observation-specific penalties or class weights scale the loss coefficients, producing a global reweighting of the empirical objective without directly matching a benchmark model. In distillation or teacher-guided methods, auxiliary information enters through penalties on discrepancies between student and teacher outputs, again enforcing global agreement in function or output space. In SVM+ / LUPI, privileged features available only at training time are used to model or correct the slack variables through an auxiliary feature space. By contrast, EDSVM does not rely on privileged covariates and does not impose global function matching; instead, it anchors the new slack variables to benchmark slack values only on a curated elite subset, thereby introducing a localized form of guidance at the margin level on selected influential observations.

\noindent\textit{Contributions.}
The main contributions of this paper are as follows.
\begin{enumerate}[(i)]
\item We formulate a general elite-driven SVM (EDSVM) loss at the slack level, in which a conventional margin-based slack loss is combined with a deviation penalty that anchors the new model to benchmark slack values on a selected elite set. This yields a statistically interpretable regularization mechanism. The weight $\omega \in (0,1)$ plays the role of a shrinkage parameter that continuously interpolates between pure empirical risk minimization $(\omega=1)$ and pure benchmark imitation $(\omega \to 0)$, in a manner localized to the margin-relevant observations. The resulting framework is a principled extension of regularized empirical risk minimization with an explicit, tunable shrinkage toward trusted reference models.

\item Within this framework we develop two concrete instances: C-EDSVM, based on the hinge loss with an additional slack deviation term, and LS-EDSVM, which augments the least-squares SVM formulation with guided slack shrinkage. For both variants we derive dual representations, show how elite information modifies the underlying quadratic program, and give sufficient conditions under which the induced margin losses are classification calibrated.

\item We establish Rademacher complexity and excess-risk bounds for both C-EDSVM and LS-EDSVM (Section~\ref{sec:theory}) that have a clear statistical interpretation. The bounds decompose into three transparent components: (a) a bias term $\frac{1-\omega}{\omega}\frac{m}{n}\mathcal{E}_m^*$ proportional to the benchmark mismatch $\mathcal{E}_m^*$ on an elite set of size $m$, quantifying the statistical cost of using an imperfect benchmark; (b) a variance term controlled by the RKHS norm of the benchmark comparator; and (c) a complexity term driven by the effective radius of the function class, which can be strictly smaller than in standard SVM when $m\mathcal{E}_m^* < n\hat{R}_\phi(f_\phi^*)$ (Corollary~\ref{cor:vs-svm}). Thus, EDSVM is statistically preferable to standard SVM when benchmark quality is sufficiently high relative to the empirical risk of the baseline comparator. Under Tsybakov noise and sufficiently accurate benchmarks, the approach also preserves minimax-optimal fast rates (Theorem~\ref{thm:minimax}).

\item We analyze how elite information modifies the dual quadratic program through both the kernel matrix and the linear term, while preserving convexity and allowing implementation with only modest modifications of standard SVM solvers. The framework also accommodates multiple benchmark models via aggregation functionals on benchmark slacks.

\item Through simulation studies and experiments on several benchmark datasets, we illustrate that EDSVMs deliver robust and interpretable shrinkage toward benchmark classifiers and often achieve predictive performance that is competitive with, and in some settings better than, C-SVM, LINEX-SVM, and LS-SVM.
\end{enumerate}

Taken together, these contributions position EDSVM as a methodology for benchmark-anchored large-margin learning. In many applications, the objective is not only to maximize predictive efficiency, but also to construct a classifier whose behaviour remains close to trusted reference information on a curated set of influential observations. EDSVM addresses this need by encoding such information at the slack level, thereby preserving the large-margin structure while introducing localized guidance near the decision boundary. Our developed theory clarifies when this guidance is statistically useful. If the benchmark slacks are accurate on the elite set, so that $\mathcal{E}_m^*$ is small, and if the elite set is not too large relative to the sample size, then the benchmark-bias term is controlled and the effective complexity radius can be reduced. When $\omega=1$, the guidance term is removed and the corresponding standard SVM-type estimator is recovered. When $\mathcal{E}_m^*=0$, the benchmark-bias term in the generalization bound vanishes, although this reflects benchmark fidelity rather than equality of estimators for $\omega<1$.

The remainder of the paper is organized as follows. Section~\ref{sec:svm} briefly reviews standard SVM methodology and its formulation as regularized empirical risk minimization with margin-based losses. Section~\ref{sec:edsvm} introduces the EDSVM framework, formalizes the notion of elite observations, and develops the slack-guided loss together with the C-EDSVM and LS-EDSVM models. Section~\ref{sec:theory} develops generalization and excess-risk results for both C-EDSVM and LS-EDSVM, with more extensive analysis for the C-EDSVM case. Section~\ref{sec:numerical} presents simulation studies designed to probe the behaviour of EDSVMs under controlled conditions, and Section~\ref{sec:realdata} reports empirical results on several real datasets. We conclude in Section~\ref{sec:discussion} with a summary of our findings and a discussion of possible extensions and future research directions.

\section{Background on Support Vector Machines}
\label{sec:svm}

Let $\mathbf{X} \in \mathbb{R}^p$ denote a feature vector and $Y \in \{-1,+1\}$ a binary class label.  Consider a linear classifier based on a hyperplane in $\mathbb{R}^p$,
\[
  C_f(\mathbf{x}) = \operatorname{sign}\big(f(\mathbf{x})\big),  \qquad  f(\mathbf{x}) = \beta_0 + \mathbf{x}^\top \boldsymbol{\beta},
\]
with intercept $\beta_0 \in \mathbb{R}$ and normal vector $\boldsymbol{\beta} \in \mathbb{R}^p$.  The decision regions are
\[
  H_{+1} = \{\mathbf{x} : C_f(\mathbf{x}) = +1\},
  \qquad
  H_{-1} = \{\mathbf{x} : C_f(\mathbf{x}) = -1\},
\]
and the decision boundary is
\[
  \mathcal{D}_f = \{\mathbf{x} : f(\mathbf{x}) = 0\}.
\]
Given training data $\mathcal{T} = \{(\xx_i, y_i)\}_{i=1}^n \subset \RR^p \times \{-1,+1\}$, many hyperplanes may perfectly separate the classes whenever linear separability holds.\ The basic idea of support vector machines is to select, among all separating hyperplanes, the one that maximizes the margin, i.e., the minimal distance of any training point to the decision boundary \citep{cortes1995svm,vapnik1998slt}. In the ideal separable case, the hard-margin SVM solves
\begin{align*}
\min_{\beta_0,\bbeta} ~ \frac{1}{2} \|\bbeta\|_2^2 \quad 
\text{subject to} \quad  y_i \big( \beta_0 + \xx_i^\top \bbeta \big) \ge 1,
\quad i = 1,\ldots,n,
\end{align*}
so that maximizing the margin is equivalent to minimizing $\|\bbeta\|_2^2$ under correct classification constraints. In most applications, the data are not strictly linearly separable, and violations of the margin are allowed via slack variables $\xi_i \ge 0$, leading to the soft-margin formulation
\begin{equation}
\label{eq:softmargin}
\min_{\beta_0,\bbeta,\bxi} ~  
\frac{1}{2} \|\bbeta\|_2^2 + C \sum_{i=1}^n \xi_i \quad 
\text{subject to} \quad 
y_i\big(\beta_0 + \xx_i^\top \bbeta\big) \ge 1 - \xi_i, ~~\text{and}~~\xi_i \ge 0, \quad i=1,\ldots, n,
\end{equation}
where $C>0$ controls the trade-off between margin maximization and misclassification penalties.\ Points with $\xi_i>0$ lie strictly inside the margin or are misclassified. More generally, the observations that determine the fitted separator are the support vectors, which in the soft-margin SVM include points on the margin as well as points inside it; equivalently, they are characterized by nonzero dual coefficients. Problem~\eqref{eq:softmargin} admits an equivalent regularized empirical risk form based on the hinge loss $\varphi_{\mathrm{hinge}}(z) = \max(0, 1 - z)$, as follows
\begin{equation}
\label{eq:hingeERM}
\min_{\beta_0,\bbeta} 
\quad \frac{\lambda}{2} \|\bbeta\|_2^2 
+ \sum_{i=1}^n \max(0, 1-y_i(\beta_0 + \xx_i^\top \bbeta)),
\end{equation}
with $\lambda = 1/C$.\  We refer to this method as C-SVM throughout the paper \citep{scholkopf2002kernels} .\  More generally, one can replace the hinge loss by any convex, margin-based surrogate $\varphi_1 : \RR \to \RR_+$ and consider
\begin{equation}
\label{eq:generalERM}
\min_{\beta_0,\bbeta} 
~\sum_{i=1}^n \varphi_1\big(y_i(\beta_0 + \xx_i^\top \bbeta)\big)
+ \frac{\lambda}{2} \|\bbeta\|_2^2,
\end{equation}
where different choices of $\varphi_1$ lead to different decision boundaries and possibly different sets of support vectors; see, for example, \citet{bartlett2006convexity,hastie2009esl,steinwart2008svm}.\ In the SVM and statistical learning literature it is common to require that $\varphi_1$ be classification calibrated \citep{bartlett2006convexity}, meaning that minimizing the associated surrogate risk yields decision rules that are Bayes-optimal for the underlying $0$--$1$ loss.
In other words, optimizing the convex surrogate does not lead to systematically wrong decision boundaries, and in the large-sample limit one can recover the Bayes classifier by minimizing the $\varphi_1$-risk.\ The following sufficient condition for classification calibration of margin-based losses is standard; see, e.g., \citet{bartlett2006convexity}.

\begin{theorem}[A standard sufficient condition for classification calibration]
\label{thm:calibration}
Let $\varphi_1 : \RR \to [0,\infty)$ be a margin-based loss, so that the surrogate risk depends on a real-valued score $t$ only through the margin $y t$. If $\varphi_1$ is convex and differentiable at $0$ with $\varphi_1'(0) < 0$, then $\varphi_1$ is classification calibrated.
\end{theorem}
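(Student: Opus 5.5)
The plan is to follow the Bartlett--Jordan--McAuliffe route: reduce calibration to a pointwise statement about the conditional surrogate risk, and then exploit convexity together with the sign of the derivative at $0$. For $\eta\in[0,1]$ and $t\in\RR$, define the conditional $\varphi_1$-risk
\[
C_\eta(t) = \eta\,\varphi_1(t) + (1-\eta)\,\varphi_1(-t).
\]
Also define $H(\eta)=\inf_{t\in\RR} C_\eta(t)$ and $H^-(\eta)=\inf\{C_\eta(t): t(2\eta-1)\le 0\}$. Classification calibration means $H^-(\eta)>H(\eta)$ for every $\eta\neq 1/2$, i.e. scores with the wrong sign are strictly suboptimal. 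By symmetry ($\eta\leftrightarrow 1-\eta$, $t\leftrightarrow -t$) it suffices to treat $\eta>1/2$. In that case the constraint set is $t\le 0$.

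\textbf{Step 1 (wrong-sign scores are dominated by $t=0$).} Since $\varphi_1$ is convex, so is $C_\eta$. Because $\varphi_1$ is differentiable at $0$, $C_\eta$ is also differentiable at $0$, with
\[
C_\eta'(0)=\eta\varphi_1'(0)-(1-\eta)\varphi_1'(0)=(2\eta-1)\varphi_1'(0)<0,
\]
using $\eta>1/2$ and $\varphi_1'(0)<0$. For $t\le 0$, convexity gives the supporting-line inequality $C_\eta(t)\ge C_\eta(0)+C_\eta'(0)\,t\ge C_\eta(0)$, since both $C_\eta'(0)$ and $t$ are nonpositive. Therefore $H^-(\eta)=C_\eta(0)=\varphi_1(0)$, the infimum being attained at $t=0$.

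\textbf{Step 2 (a positive score does strictly better).} Because $C_\eta'(0)<0$, the definition of the derivative yields a small $\epsilon>0$ with $C_\eta(\epsilon)<C_\eta(0)$. Hence $H(\eta)\le C_\eta(\epsilon)<C_\eta(0)=H^-(\eta)$, which is exactly calibration at $\eta$. The case $\eta<1/2$ follows by the symmetry noted above.

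The only delicate point is Step 1. One must use differentiability at $0$, and not merely convexity, so that the tangent-line lower bound holds with slope exactly $(2\eta-1)\varphi_1'(0)$. One must also note that nonnegativity of $\varphi_1$ is not needed, beyond ensuring the infima are finite. Everything else is routine. Alternatively, the statement can simply be cited as Theorem 6 of \citet{bartlett2006convexity}; the argument above reproduces its proof.
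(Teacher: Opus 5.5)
Your argument is correct. For wrong-sign $t$, the tangent-line bound $C_\eta(t)\ge C_\eta(0)+(2\eta-1)\varphi_1'(0)\,t\ge \varphi_1(0)$, together with strict descent of $C_\eta$ just to the right of $0$, gives $H^-(\eta)=\varphi_1(0)>H(\eta)$ for $\eta>1/2$, and symmetry handles $\eta<1/2$. The paper gives no proof of its own and only cites Bartlett, Jordan and McAuliffe (2006), whose Theorem 6 is proved by essentially this conditional-risk argument, so your route matches the intended one.
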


This theorem is a convenient sufficient condition, but it does not cover every calibrated loss used in practice. In particular, the hinge loss is convex but not differentiable at $0$; nevertheless it is classification calibrated, as shown in \citet{bartlett2006convexity}. In addition to the standard hinge loss $\varphi_{\mathrm{hinge}}(z)$, two specific choices of classification calibrated margin-based loss $\varphi_1$ that we will use later are the least-squares (LS) and the linear-exponential (LINEX) loss functions, resulting in LS-SVM \citep{suykens1999lssvm,suykens2002lssvm} and LINEX-SVM \citep{ma2019linexsvm,tang2021mvlinex}, respectively. For LS-SVM one works with the quadratic margin-based loss
\begin{equation}\label{eq:ls}
  \varphi_{\mathrm{LS}}(z) = (1 - z)^2,
\end{equation}
which is equivalent to the standard LS-SVM formulation with equality-type constraints on the functional margin. To model asymmetric misclassification costs, one can consider the LINEX margin-based loss \citep{zellner1986asym}
\begin{equation}\label{eq:LINEX}
  \varphi_{\mathrm{LINEX}}(z)
  = \exp\!\big(a(1 - z)\big) - a(1 - z) - 1,
  \qquad a \in \RR\setminus\{0\},
\end{equation}
where the parameter $a$ controls the extent and direction of asymmetry in the penalty: $a>0$ puts heavier weight on negative margins than on large positive ones, and $a<0$ reverses this behaviour.\ This choice gives rise to the LINEX-SVM and its extensions  \citep{ma2019linexsvm,tang2021mvlinex}.  

Finally, it is worth mentioning that to capture nonlinear decision boundaries, SVMs are typically combined with feature maps $\Phi : \RR^p \to \mathcal{H}_{\Phi}$ into a (possibly infinite-dimensional) Hilbert space.\ A linear SVM is then fitted in $\mathcal{H}_{\Phi}$ using the transformed data $\Phi(\xx_i)$, but the resulting decision function can be expressed entirely in terms of a positive definite kernel $K(\xx_i,\xx_j) = \langle \Phi(\xx_i), \Phi(\xx_j)\rangle$ via the kernel trick:
\begin{equation}
\label{eq:kernelDecision}
\hat{f}(\xx) = \hat{\beta}_0 + \sum_{j=1}^n \hat{\alpha}_j y_j 
K(\xx_j,\xx).
\end{equation}
Standard choices of kernel functions  include the linear, polynomial, and Gaussian RBF kernels; see, e.g., \citet{scholkopf2002kernels,steinwart2008svm} for comprehensive treatments. In the next section we build on this classical framework to introduce elite-driven SVMs, which explicitly exploit additional information about benchmark models and influential observations.

%
%

\section{From Benchmark Models to Elite-Driven SVMs}
\label{sec:edsvm}

In standard soft-margin SVMs, the fitted decision boundary is governed primarily by observations on or inside the margin. Points far from the separating hyperplane have zero slack and typically exert no direct influence on the fitted classifier, whereas boundary and margin-violating observations determine the local geometry of the separator. We refer to such influential observations as elite observations. The purpose of EDSVM is to allow selected elite observations to carry additional benchmark information. These may be support vectors from previously fitted SVMs, expert-identified borderline cases, or scientifically important observations whose margin behaviour should be preserved in the new classifier. Thus, instead of treating all observations symmetrically or imposing a global penalty on the decision function, EDSVM gives a privileged role to selected observations and their associated benchmark slacks.

A straightforward way to incorporate a benchmark classifier $f^*$ into the learning of a new classifier $f$ is to penalize the discrepancy between their decision values, for example by minimizing an objective of the form
\[
  \omega \,\varphi_1\big(y f(\xx)\big)
  + (1-\omega)\big(f(\xx) - f^*(\xx)\big)^2,
\]
with $\omega \in [0,1]$ tuning the balance between empirical fit and proximity to the benchmark model. Although simple, this strategy has several limitations.\ First, it enforces a global alignment between $f$ and $f^*$: the penalty $(f(\xx) - f^*(\xx))^2$ is applied uniformly over the entire feature space, including regions where the benchmark is known to be suboptimal or where we deliberately wish to deviate from its behaviour.\  As a result, the new classifier may inherit undesirable artifacts of the benchmark model in regions of low interest.\ Second, this formulation does not naturally accommodate multiple benchmarks $f^*_{1},\ldots,f^*_{L}$ in a way that reflects heterogeneous trust across regions or subsets of observations.\ Finally, a global penalty on $f(\xx)$ does not exploit the core insight of margin-based methods: the classifier is determined by points that lie on or inside the margin.\ Penalizing $f(\xx)-f^*(\xx)$ for points deep inside a class region has little statistical justification and blurs the connection between regularization and decision-critical geometry.

For these reasons, we now work not at the level of the raw decision values $f(\xx)$, but at the level of quantities associated with the margin constraints.

\subsection{Slack-based deviation and elite sets}

Consider again the soft-margin SVM formulation \eqref{eq:softmargin}.\  For a  classifier $C_f(\xx) = \mathrm{sign}\big(f(\xx)\big)$ with decision function
$f(\xx) = \beta_0 + \xx^\top \bbeta$ and a training pair $(\xx_i,y_i)$, the primal constraints are
\[
  y_i f(\xx_i) \;\ge\; 1 - \xi_i, 
  \qquad \xi_i \;\ge\; 0.
\]
Because the objective is increasing in each $\xi_i$, at optimality one always chooses the smallest feasible slack, so that
\[
  \xi_i \;=\; \max\{0,\; 1 - y_i f(\xx_i)\}.
\]
If $y_i f(\xx_i) \le 1$ (the margin constraint is active), then $y_i f(\xx_i) = 1 - \xi_i$; if $y_i f(\xx_i) > 1$, then $\xi_i = 0$ and the constraint is strict. In this standard interpretation,
\begin{itemize}
  \item $\xi_i = 0$ and $y_i f(\xx_i) \ge 1$ means the point is correctly classified and lies on or outside the margin;
  \item $0 < \xi_i < 1$ means it is correctly classified but inside the margin;
  \item $\xi_i \ge 1$ means it is misclassified.
\end{itemize}
The nonnegative slack $\xi_i$ therefore quantifies how much the observation deviates from the large-margin ideal.\ For our purposes, it is convenient to work with the associated margin deviation
\[
  \delta_i \;:=\; 1 - y_i f(\xx_i),
\]
which can take positive or negative values.\  The usual SVM slack is then $\xi_i = \max\{0,\delta_i\}$, so $\delta_i$ coincides with $\xi_i$ on points with active constraints and is negative for correctly classified points strictly outside the margin.\ In particular, $\delta_i < 0$ iff $\xi_i = 0$ and $y_i f(\xx_i) > 1$, i.e., the observation is a correctly classified non-support vector lying strictly outside the margin.

Suppose that, in addition to $f$, we also have a benchmark classifier  $f^*(\xx)$, for example coming from a previously fitted SVM under a different  loss function or kernel. We define its margin deviations analogously by $\delta_i^* \;:=\; 1 - y_i f^*(\xx_i)$.\ Since $y_i^2 = 1$, we obtain
\begin{equation}
\label{eq:marginSlackRelation}
\big(f(\xx_i) - f^*(\xx_i)\big)^2
\;=\;
\big(y_i f(\xx_i) - y_i f^*(\xx_i)\big)^2
\;=\;
\big[(1 - \delta_i) - (1 - \delta_i^*)\big]^2
\;=\;
(\delta_i - \delta_i^*)^2.
\end{equation}
Thus, the squared difference between the signed decision values is exactly the squared difference between the deviations from the ideal margin level~$1$.\ The usual SVM slacks for the two classifiers are obtained by truncation, $\xi_i = \max\{0,\delta_i\}$, and  $\xi_i^* = \max\{0,\delta_i^*\}$,  so on points with active constraints the slacks coincide with the deviations.

This slack- and deviation-based view has two key advantages over working directly with $f(\xx_i)$ and $f^*(\xx_i)$.\ First, it yields a localized, margin-based notion of proximity:  $\xi_i$ and $\xi_i^*$ are nonzero exactly for points that lie strictly inside  the margin or are misclassified, that is, for points with $y_i f(\xx_i) < 1$  and $y_i f^*(\xx_i) < 1$, respectively.\  Penalizing discrepancies through $(\xi_i - \xi_i^*)^2$ therefore concentrates the influence of the benchmark on observations where the classification decision is genuinely nontrivial, and largely ignores regions where both classifiers already achieve a comfortable margin.\  In particular, when both $f$ and $f^*$ place $(\xx_i,y_i)$ strictly outside the margin, the corresponding slacks vanish and the penalty contributes nothing on that point. This stands in contrast to global penalties on $(f(\xx_i) - f^*(\xx_i))^2$, which treat all points equally, including those deep in the interior of each class region.\ Second, the deviation representation decouples benchmark guidance from the parametric form of the classifiers.\ The identity \eqref{eq:marginSlackRelation} does not rely on $f$ and $f^*$ sharing the same intercept, weight vector, or even function space: $f$ may be linear in the original input space, while $f^*$ may arise from a kernel SVM in a high-dimensional feature space, or vice versa. 


To formalize the idea that only some observations carry benchmark information, we let $\mathcal{S}^* \subseteq \{1,\ldots,n\}$ denote the set of elite indices.\ These are the training points for which we possess reliable benchmark slacks $\xi_i^*$ and on which we wish to enforce proximity between $\xi_i$ and $\xi_i^*$.\ A natural and practically relevant choice is to take  $\mathcal{S}^*$ as the union of the support vectors from one or more benchmark SVM models.\ In that case, $\mathcal{S}^*$ collects precisely the observations that are influential for at least one of the reference classifiers and therefore constitute the elite set whose margin behaviour we would like to preserve or synthesize in the new model.\ For indices $i \notin \mathcal{S}^*$, no benchmark information is used and the slacks $\xi_i$ are driven entirely by the data and the base loss.

Since the indexing of observations is arbitrary, we can, without loss of generality, reorder the data such that
$$\mathcal{S}^* = \{1,2,\ldots,m\}, \qquad m \le n.$$
This notational simplification allows us to write benchmark-guided penalties as simple sums over the first $m$ indices, which leads to compact expressions for the EDSVM loss and its dual formulation.\ In what follows, we exploit this structure to construct composite objectives that combine empirical risk minimization with elite-driven proximity to one or several benchmark classifiers.

%
%

\section{Elite-driven loss at the slack level}
\label{sec:edsvm-loss}

We now introduce the general elite-driven SVM (EDSVM) loss at the level of the slack variables. Recall that, after reordering, the elite indices are collected in $\mathcal{S}^* = \{1,\ldots,m\}$, and for each $i \in \mathcal{S}^*$ we assume access to a benchmark slack value $\xi_i^*$, computed as in Section~\ref{sec:edsvm} from one or more reference classifiers.\  For non-elite points $i > m$, no benchmark information is used.

Given a base slack loss $\varphi_1$ and a nonnegative distance function $d$ that measures deviation from benchmark behaviour, we define the composite. EDSVM loss
\begin{equation}
\label{eq:edsvmLoss}
\varphi_\omega(\xi_i) =
\begin{cases}
\omega \,\varphi_1(\xi_i) + (1-\omega)\, d(\xi_i,\xi_i^*), & i \le m, \\[0.25em]
\omega \,\varphi_1(\xi_i), & i > m,
\end{cases}
\end{equation}
where $\omega \in [0,1]$ is a tuning parameter.\ The function $\varphi_1$ is typically induced by a margin-based surrogate loss used in the underlying SVM formulation (for instance, hinge or squared-slack loss), and therefore embodies the usual empirical fit and margin properties. The term $d(\xi_i,\xi_i^*)$ penalizes departures of the new slack $\xi_i$ from its benchmark counterpart $\xi_i^*$, and is chosen so that $d(\xi_i,\xi_i^*) \ge 0$ with equality if and only if $\xi_i = \xi_i^*$.\  A natural example is the squared Euclidean distance $d(\xi_i,\xi_i^*) = (\xi_i-\xi_i^*)^2$, but other convex distances (absolute deviation, Huber-type distances) could be used to trade off fidelity and robustness.

The parameter $\omega$ regulates the relative weight given to the data-driven term and the benchmark-guided term on the elite set. When $\omega = 1$, the EDSVM loss reduces to the usual slack-based loss and the resulting estimator coincides with the standard SVM-type model associated with $\varphi_1$.\  At the opposite extreme, when $\omega = 0$, the slacks of elite points are encouraged to reproduce the benchmark slacks as closely as permitted by the constraints, so that the new decision boundary is essentially anchored to the margin behaviour of the reference classifier(s) on $\mathcal{S}^*$.\ For intermediate values $0 < \omega < 1$, we obtain a continuum of models that interpolate between pure empirical risk minimization and pure benchmark imitation: small values of $\omega$ induce strong shrinkage of $\xi_i$ towards $\xi_i^*$ on the elite set, while larger values prioritize the base loss and allow the new classifier to deviate more freely from the benchmark.\ Importantly, for non-elite points $i > m$ the penalty reduces to $\omega \varphi_1(\xi_i)$, so that benchmark information is injected only where it is available and deemed relevant.

Embedding the composite loss \eqref{eq:edsvmLoss} into the soft-margin SVM framework leads to the general EDSVM primal problem
\begin{equation}
\label{eq:edsvmERM}
\min_{\beta_0,\bbeta,\bxi}
~\frac{1}{2} \|\bbeta\|_2^2
+ C \sum_{i=1}^n \varphi_\omega(\xi_i),
\quad\text{subject to}\quad
y_i(\beta_0 + \xx_i^\top \bbeta) \ge 1 - \xi_i,\;
\xi_i \ge 0,\quad i=1,\ldots,n.
\end{equation}
Here $C>0$ plays the familiar role of a global regularization parameter.\ Provided that both $\varphi_1(\cdot)$ and $d(\cdot,\xi_i^*)$ are convex in their first argument, the composite loss $\varphi_\omega$ is convex for each fixed $\omega \in [0,1]$, and the optimization problem \eqref{eq:edsvmERM} remains a convex program, amenable to standard SVM solvers. From a geometric perspective, the regularization term $\frac{1}{2}\|\bbeta\|_2^2$ continues to control margin width, while the modified slack penalty reshapes the way in which margin violations are penalized on elite versus non-elite points.

The formulation in \eqref{eq:edsvmLoss} is also flexible enough to accommodate multiple benchmark models. Suppose that, for each observation $i$ in the elite set, we have $L$ benchmark classifiers, giving rise to a vector of benchmark slacks
\[
  \big(\xi_{i,1},\ldots,\xi_{i,L}\big),
\]
for example obtained from SVMs fitted with different kernels, loss functions, or tuning parameters. We then define a single target slack
\begin{equation*}
  \xi_i^* = g(\xi_{i,1},\ldots,\xi_{i,L}),
\end{equation*}
where $g$ is an aggregation functional chosen to reflect how the different benchmarks should be combined.\ Taking $g$ as the mean produces a compromise that pulls the new model towards the average margin behaviour across benchmarks; using the maximum or minimum yields conservative or optimistic alignment; median or trimmed means offer robustness when one of the reference classifiers is suspected to be misspecified.\  More elaborate choices of $g$ can encode class-specific or group-specific trust in particular benchmarks, or emphasize certain regions of the feature space through observation-dependent weights.\ In all cases, the resulting $\xi_i^*$ enters the EDSVM objective only through the distance term $d(\xi_i,\xi_i^*)$ on the elite indices, ensuring that benchmark guidance remains localized and interpretable at the slack level.

In the next subsections we further develop this general framework with two specific choices for the base loss and distance function, leading to the C-EDSVM (hinge-based) and LS-EDSVM (squared-slack-based) formulations, and derive their corresponding dual problems.

%
%

\subsection{C-EDSVM: Hinge loss with slack deviation penalty}

A natural first instance of the EDSVM framework is obtained by choosing
\begin{equation*}
\varphi_1(\xi_i) = \xi_i, \qquad 
d(\xi_i,\xi_i^*) = (\xi_i - \xi_i^*)^2 \quad (i \le m).
\end{equation*}
This yields the C-EDSVM objective
\begin{equation}
\label{eq:cedsvmPrimal}
\begin{aligned}
\min_{\beta_0,\bbeta,\bxi} \quad 
& \frac{1}{2} \|\bbeta\|_2^2 
+ C \Bigg[ \omega \sum_{i=1}^n \xi_i 
+ (1-\omega) \sum_{i=1}^m (\xi_i - \xi_i^*)^2 \Bigg] \\
\text{subject to} \quad 
& y_i(\beta_0 + \xx_i^\top \bbeta) \ge 1 - \xi_i, ~\text{and}~  \xi_i \ge 0, \quad i=1,\ldots,n.
\end{aligned}
\end{equation}
We refer to this model as C-EDSVM because it augments the standard C-SVM hinge-loss formulation with a quadratic penalty on deviations from benchmark slacks on elite points.\ To analyze the dual, consider the Lagrangian
\begin{equation*}
\begin{aligned}
\mathcal{L}(\beta_0,\bbeta,\bxi,\bm{\alpha},\bm{\lambda})
&= \frac{1}{2} \|\bbeta\|_2^2 
+ C \Bigg[ \omega \sum_{i=1}^n \xi_i 
+ (1-\omega) \sum_{i=1}^m (\xi_i - \xi_i^*)^2 \Bigg] \\
&\quad - \sum_{i=1}^n \alpha_i \big[ y_i(\beta_0 + \xx_i^\top \bbeta) - 1 + \xi_i \big]
- \sum_{i=1}^n \lambda_i \xi_i,
\end{aligned}
\end{equation*}
with multipliers $\alpha_i \ge 0$ for the margin constraints and $\lambda_i \ge 0$ for $\xi_i \ge 0$.\ Stationarity with respect to $\beta_0$ and $\bbeta$ leads to
\begin{equation}
\label{eq:cedsvmKKT1}
\sum_{i=1}^n \alpha_i y_i = 0,
\qquad
\bbeta^* = \sum_{i=1}^n \alpha_i y_i \xx_i.
\end{equation}
Differentiating with respect to $\xi_i$ gives, for $i \le m$,
\begin{equation}
\label{eq:cedsvmXiElite}
  C\omega + 2C(1-\omega)(\xi_i - \xi_i^*) - \alpha_i - \lambda_i = 0,
\end{equation}
and for $i > m$,
\begin{equation}
\label{eq:cedsvmXiNonElite}
  C\omega - \alpha_i - \lambda_i = 0.
\end{equation}
Solving \eqref{eq:cedsvmXiElite} for $\xi_i$ yields
\begin{equation}
\label{eq:cedsvmXiSolution}
\xi_i = \xi_i^* + \frac{1}{2(1-\omega)}\left(\frac{\alpha_i + \lambda_i}{C}
- \omega\right), \qquad i = 1,\ldots,m,
\end{equation}
while \eqref{eq:cedsvmXiNonElite} together with $\lambda_i\ge 0$ implies
$0\le \alpha_i \le C\omega$ for $i > m$.\ Substituting these expressions back into the Lagrangian and eliminating the
primal variables yields a dual problem of the form
\begin{equation}
\label{eq:cedsvmDual}
\begin{aligned}
\max_{\bm{\alpha}} \quad  
-\frac{1}{2}\bm{\alpha}^\top Q \bm{\alpha}
+ \bm{R}^\top \bm{\alpha} + D \quad 
\text{subject to} \quad 
&\bm{\alpha}^\top \bm{y} = 0,\\ \quad 
&\alpha_i \ge 0,\; i=1,\ldots,m, \\\quad
&0\le \alpha_i \le C\omega,\; i=m+1,\ldots,n,
\end{aligned}
\end{equation}
where $\bm{y} = (y_1,\ldots,y_n)^\top$, 
\begin{equation}
\label{eq:cedsvmQ}
  Q = H + \mathrm{diag}(\bm{d}), 
  \qquad
  H_{ij} = y_i y_j \xx_i^\top \xx_j,
\end{equation}
\begin{equation}
\label{eq:cedsvmDvec}
  d_i = 
  \begin{cases}
  \dfrac{1}{2C(1-\omega)}, & i \le m,\\[0.4em]
  0, & i > m,
  \end{cases}
\quad\text{and} \quad 
  R_i =
  \begin{cases}
  1 - \xi_i^* - \dfrac{\omega}{2(1-\omega)}, & i \le m,\\[0.4em]
  1, & i > m,
  \end{cases}
\end{equation}
and
\begin{equation}
\label{eq:cedsvmDconst}
  D = -\frac{m C \omega^2}{4(1-\omega)} + C\omega \sum_{i=1}^m \xi_i^*.
\end{equation}
To verify: substituting \eqref{eq:cedsvmXiSolution} into the elite-point terms of the Lagrangian, writing $s_i:=\alpha_i+\lambda_i-C\omega$, the constant (terms free of primal and dual variables) from point $i\le m$ is
\[
  C\omega\xi_i^* + \frac{\omega s_i}{2(1-\omega)} + \frac{s_i^2}{4C(1-\omega)}
  - (\alpha_i+\lambda_i)\xi_i^* - \frac{(\alpha_i+\lambda_i)s_i}{2C(1-\omega)}
  \xrightarrow{\text{set }s_i=0}
  -\frac{C\omega^2}{4(1-\omega)},
\]
summed over $i=1,\ldots,m$ gives $D = -\frac{mC\omega^2}{4(1-\omega)} + C\omega\sum_{i=1}^m\xi_i^*$.
This does not affect the optimizer but is required for correct evaluation of the dual objective.\  The box constraint $0\le\alpha_i\le C\omega$ for $i>m$ follows directly from the KKT condition \eqref{eq:cedsvmXiNonElite}: since $\lambda_i\ge0$, we have $\alpha_i = C\omega-\lambda_i\le C\omega$; and since $C\omega - \alpha_i = \lambda_i \ge 0$, we also have $\alpha_i \le C\omega$. Non-negativity $\alpha_i\ge0$ follows from the dual constraint directly.\ Elite points carry no finite upper bound because their stationarity condition \eqref{eq:cedsvmXiElite} does not pin $\alpha_i$ to a fixed value.\ Strong duality holds by Slater's condition (the primal is strictly feasible for any finite $\xi_i^*$), so the two problems are equivalent.\ The elite guidance appears in the dual through both the diagonal regularization $\mathrm{diag}(\bm{d})$ and the linear term $\bm{R}$.\ In a kernelized version, $H_{ij}$ in \eqref{eq:cedsvmQ} is replaced by $y_i y_j K(\xx_i,\xx_j)$ and the structure of the dual remains unchanged.

We now identify the margin-based loss implicitly induced by the C-EDSVM primal formulation in \eqref{eq:cedsvmPrimal} and give simple sufficient conditions under which it satisfies Theorem~\ref{thm:calibration}.\ Consider a real-valued decision function $f:\RR^p\to\RR$ with margin $u = y f(\xx)$. Ignoring the regularizer $\tfrac12\|\bbeta\|_2^2$ and the overall factor $C>0$, the per-example surrogate contribution in
\eqref{eq:cedsvmPrimal} is
\[
  \ell_i\bigl(y_i,f(\xx_i),\xi_i\bigr)
  \;=\;
  \omega\,\xi_i
  + (1-\omega)\,\1\{i\le m\}\,(\xi_i - \xi_i^*)^2,
\]
subject to $\xi_i \ge 0$ and $\xi_i \ge 1-u$.\ For a fixed pair $(\xx,y)$ and margin $u = y f(\xx)$, the induced margin loss at index $i$ is obtained by minimizing over feasible slacks:
\[
  \phi_i(u)
  \;:=\;
  \min_{\xi\ge 0}
  \Bigl\{\,
    \omega\,\xi
    + (1-\omega)\,\1\{i\le m\}\,(\xi-\xi_i^*)^2
    :\; \xi \ge 1-u
  \Bigr\}.
\]
For non-elite points ($i>m$), the quadratic term is absent and the constraint $\xi\ge 1-u$ immediately yields
\[
  \phi_i(u) \;=\; \omega (1-u)_+,
  \qquad i>m,
\]
that is, a positive scaling of the standard hinge loss.\ For elite points ($i\le m$), define
\[
  g_i(\xi) \;:=\; \omega\,\xi + (1-\omega)(\xi-\xi_i^*)^2,
  \qquad
  \bar\xi_i \;:=\; \xi_i^* - \frac{\omega}{2(1-\omega)}.
\]
The function $g_i$ is strictly convex in $\xi$ with unconstrained minimizer $\bar\xi_i$. Incorporating the feasibility constraints $\xi\ge 0$ and $\xi\ge 1-u$ therefore gives
\[
  \xi_i(u) \;=\; \max\bigl\{\,0,\;1-u,\;\bar\xi_i\,\bigr\},
  \qquad
  \phi_i(u) \;=\; g_i\bigl(\xi_i(u)\bigr),\qquad i\le m.
\]
Convexity of $\phi_i$ in $u$ follows from a two-step composition argument. {Step 1.} The inner function $h(u)=\max\{0,1-u,\bar\xi_i\}$ is convex because it is the pointwise maximum of three convex functions (two constants and the linear function $1-u$); pointwise maxima of convex functions are convex. Moreover, $h(u)\ge\bar\xi_i$ for all $u$ by definition of the maximum, so the range of $h$ is contained in $[\bar\xi_i,\infty)$. {Step 2.} The penalty $g_i(\xi)=\omega\xi+(1-\omega)(\xi-\xi_i^*)^2$ is strictly convex and nondecreasing on $[\bar\xi_i,\infty)$ (since $g_i'(\xi)=\omega+2(1-\omega)(\xi-\xi_i^*)\ge0$ for all $\xi\ge\bar\xi_i$, with equality only at $\xi=\bar\xi_i$); since the range of $h$ lies in $[\bar\xi_i,\infty)$ where $g_i$ is nondecreasing, the standard monotone-composition rule applies: $\phi_i=g_i\circ h$ is convex in $u$. 

To apply Theorem~\ref{thm:calibration}, we examine $\phi_i'(0)$. Suppose first that, for some neighborhood of $u=0$, the active constraint is $\xi_i(u)=1-u$. This holds whenever
\[
\bar\xi_i < 1
\quad\Longleftrightarrow\quad
\xi_i^* < 1 + \frac{\omega}{2(1-\omega)}.
\]
%
Under this condition we have, for $u$ in a neighborhood of $0$,
\[
  \phi_i(u)
  = \omega(1-u)
    + (1-\omega)\bigl((1-u)-\xi_i^*\bigr)^2,
\]
and a direct calculation yields
\[
  \phi_i'(u)
  = -\omega + (1-\omega)\bigl(2u + 2\xi_i^* - 2\bigr)
  \qquad
  \Rightarrow\qquad
  \phi_i'(0)
  = -\omega + (1-\omega)\bigl(2\xi_i^* - 2\bigr).
\]
If $\xi_i^*\le 1$, then $\phi_i'(0) \;\le\; -\omega \;<\; 0.$ More generally,
the inequality
\[
  \xi_i^*
  \;<\;
  1 + \frac{\omega}{2(1-\omega)}
\]
is sufficient to ensure $\phi_i'(0)<0$.\ We can now summarize the calibration properties of the C-EDSVM surrogate.

\begin{proposition}
\label{prop:cedsvm-calibration}
Assume $0<\omega<1$ and that all elite benchmark slacks satisfy
\[
  \xi_i^* \;<\; 1 + \frac{\omega}{2(1-\omega)},
  \qquad i=1,\ldots,m.
\]
Then, for every $i$, the induced margin loss $\phi_i$ is convex and differentiable at $0$ with $\phi_i'(0)<0$. In particular, the C-EDSVM surrogate loss satisfies the sufficient condition of Theorem~\ref{thm:calibration} and is therefore classification calibrated.
\end{proposition}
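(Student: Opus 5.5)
We split the indices into non-elite ($i>m$) and elite ($i\le m$) and, in each case, verify the hypotheses of Theorem~\ref{thm:calibration}: convexity of $\phi_i$, differentiability at $0$, and $\phi_i'(0)<0$.

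\emph{Non-elite case.} For $i>m$ we have $\phi_i(u)=\omega(1-u)_+$. This function is convex, and on the neighbourhood $u<1$ of $0$ it equals $\omega(1-u)$. It is therefore differentiable at $0$ with $\phi_i'(0)=-\omega<0$, because $\omega>0$. The kink sits at $u=1$, away from the origin, so the non-differentiability of the hinge loss causes no difficulty here.

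\emph{Elite case, convexity.} For $i\le m$ we use the representation $\phi_i=g_i\circ h$ with $h(u)=\max\{0,1-u,\bar\xi_i\}$, derived before the statement. Convexity then follows from the two-step monotone-composition argument already given: $h$ is convex with range in $[\bar\xi_i,\infty)$, and $g_i$ is convex and nondecreasing there. One point needs care: $g_i$ is convex on all of $\RR$, and we need that fact for the composition rule.

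\emph{Elite case, local form at $0$.} Next we pin down $\phi_i$ near $u=0$. At $u=0$ the candidate slacks are $0$, $1$ and $\bar\xi_i$. The hypothesis $\xi_i^*<1+\omega/(2(1-\omega))$ is equivalent to $\bar\xi_i<1$, which uses $0<\omega<1$ so that the fraction is well defined. Since also $0<1$, the maximum at $u=0$ is attained uniquely by $1-u$. By continuity there is $\varepsilon>0$ with $1-u>\max\{0,\bar\xi_i\}$ for $|u|<\varepsilon$, so that $\xi_i(u)=1-u$ on that neighbourhood. Hence $\phi_i(u)=\omega(1-u)+(1-\omega)(1-u-\xi_i^*)^2$ there. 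This is a polynomial, so $\phi_i$ is differentiable at $0$ with $\phi_i'(0)=-\omega-2(1-\omega)(1-\xi_i^*)$.

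\emph{Elite case, sign of the derivative.} This is the one place where the computation before the statement must be tightened, and I expect it to be the main obstacle. Rewrite
\[
\phi_i'(0) = -2(1-\omega)\Bigl(1+\tfrac{\omega}{2(1-\omega)}-\xi_i^*\Bigr).
\]
The factor $2(1-\omega)$ is positive, and the bracket is positive exactly by the assumed bound on $\xi_i^*$. Hence $\phi_i'(0)<0$, covering both $\xi_i^*\le 1$ and $1<\xi_i^*<1+\omega/(2(1-\omega))$.

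\emph{Conclusion.} Every $\phi_i$ is convex and differentiable at $0$ with negative derivative, so Theorem~\ref{thm:calibration} applies to each one. Each $\phi_i$ is therefore classification calibrated, which is the sense in which the C-EDSVM surrogate is calibrated.
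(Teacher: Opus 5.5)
Your proposal is correct and follows the paper's argument essentially step for step. The steps match: the scaled hinge for non-elite points, convexity via the monotone composition $\phi_i=g_i\circ h$, the identification $\xi_i(u)=1-u$ near $0$ from $\bar\xi_i<1$, and the derivative $\phi_i'(0)=-\omega-2(1-\omega)(1-\xi_i^*)$. Your factorization $\phi_i'(0)=-2(1-\omega)\bigl(1+\tfrac{\omega}{2(1-\omega)}-\xi_i^*\bigr)$ only writes out the one-line algebra the paper leaves implicit, so this step is not really an obstacle.
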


\begin{remark}
\label{rem:heterogeneous-family}
The EDSVM induces a heterogeneous family of margin losses $\{\phi_i\}$ indexed by the benchmark slacks $\xi_i^*$, rather than a single homogeneous loss $\phi$. The calibration argument is applied pointwise so that for each $i$, the induced loss $\phi_i$ satisfies the differentiability and negativity conditions of Theorem~\ref{thm:calibration}. This is a valid application because the theorem requires only that the per-example loss used in minimization be classification calibrated; here, different elite observations are governed by different but individually calibrated $\phi_i$. Non-elite observations use the homogeneous hinge loss scaled by $\omega$, which is also calibrated. The heterogeneity of $\phi_i$ across elite points does not invalidate the argument, but it should be borne in mind when interpreting the conditions that the bound $\xi_i^* < 1 + \omega/(2(1-\omega))$ must hold for each elite point separately.
\end{remark}

\begin{remark}
\label{rem:xi-bounded}
A particularly transparent sufficient condition is $\xi_i^*\le 1$ for all elite points. For example, if elites are chosen among training points with small hinge slack under the baseline SVM used to generate $\xi_i^*$, then $\xi_i^*\in[0,1]$ typically holds. In that case $\phi_i'(0)\le -\omega < 0$ for every $i\le m$, so C-EDSVM is classification calibrated for any $\omega\in(0,1)$.
\end{remark}

%
%

\subsection{LS-EDSVM: least-squares SVM with slack guidance}

We now consider a formulation inspired by least-squares SVM (LS-SVM), in which 
the slack variables enter quadratically. Let
\begin{equation*}
\varphi_1(\xi_i) = \xi_i^2,
\qquad
d(\xi_i,\xi_i^*) = (\xi_i - \xi_i^*)^2 \quad (i \le m).
\end{equation*}
The LS-EDSVM primal then becomes
\begin{equation}
\label{eq:lSEDPrimal}
\begin{aligned}
\min_{\beta_0,\bbeta,\bxi} \quad
& \frac{1}{2}\|\bbeta\|_2^2 
+ C\left[\omega \sum_{i=1}^n \xi_i^2 
+ (1-\omega) \sum_{i=1}^m (\xi_i - \xi_i^*)^2\right] \\
\text{subject to} \quad 
& y_i(\beta_0 + \xx_i^\top \bbeta) \ge 1 - \xi_i, ~\text{and}~ \xi_i \ge 0, \quad i=1,\ldots,n.
\end{aligned}
\end{equation}
The Lagrangian is
\begin{equation}
\begin{aligned}
\mathcal{L}(\beta_0,\bbeta,\bxi,\bm{\alpha},\bm{\lambda})
&= \frac{1}{2} \|\bbeta\|_2^2 
+ C \sum_{i=1}^n \omega \xi_i^2
+ C \sum_{i=1}^m (1-\omega)(\xi_i - \xi_i^*)^2 \\
&\quad - \sum_{i=1}^n \alpha_i \big[ y_i(\beta_0 + \xx_i^\top \bbeta) - 1 + \xi_i \big]
- \sum_{i=1}^n \lambda_i \xi_i.
\end{aligned}
\end{equation}
Stationarity with respect to $\beta_0$ and $\bbeta$ yields
\begin{equation*}
\sum_{i=1}^n \alpha_i y_i = 0,
\qquad
\bbeta^* = \sum_{i=1}^n \alpha_i y_i \xx_i.
\end{equation*}
Differentiating with respect to $\xi_i$ gives, for $i \le m$,
\begin{equation}
  2C\left[\omega \xi_i + (1-\omega)(\xi_i - \xi_i^*)\right] 
  - \alpha_i - \lambda_i = 0,
\end{equation}
and for $i > m$,
\begin{equation}
  2C\omega \xi_i - \alpha_i - \lambda_i = 0.
\end{equation}
Solving for $\xi_i$ results in
\begin{equation}
\label{eq:lSEDXiSolution}
\xi_i =
\begin{cases}
\dfrac{\alpha_i + \lambda_i}{2C} + (1-\omega)\xi_i^*, & i \le m, \\[0.5em]
\dfrac{\alpha_i + \lambda_i}{2C\omega}, & i > m.
\end{cases}
\end{equation}
Substituting into the Lagrangian and eliminating the primal variables leads to  a dual problem of the same structure as \eqref{eq:cedsvmDual},
\begin{equation}
\max_{\bm{\alpha}} ~-\frac{1}{2}\bm{\alpha}^\top Q \bm{\alpha} + \bm{R}^\top \bm{\alpha} + D \quad 
\text{subject to} \quad 
\bm{\alpha}^\top \bm{y} = 0,\quad 
\alpha_i \ge 0,\; i=1, \ldots, n,
\end{equation}
where now
\begin{equation}
\label{eq:lSEDQ}
  Q = H + \mathrm{diag}(\bm{d}), 
  \qquad
  H_{ij} = y_i y_j \xx_i^\top \xx_j,
\end{equation}
and 
\begin{equation}
\label{eq:lSEDDvec}
  d_i = 
  \begin{cases}
  \dfrac{1}{2C}, & i \le m,\\[0.4em]
  \dfrac{1}{2C\omega}, & i > m,
  \end{cases}
\quad\text{and}\quad 
  R_i =
  \begin{cases}
  1 - (1-\omega)\xi_i^*, & i \le m,\\[0.4em]
  1, & i > m.
  \end{cases}
\end{equation}
Also, 
\begin{equation}
\label{eq:lSEDDconst}
  D = C(1-\omega) \sum_{i=1}^m (\xi_i^*)^2.
\end{equation}
As in the C-EDSVM case, the elite guidance enters through a modified diagonal regularization and a shifted linear term. The kernelized version proceeds by replacing inner products with $K(\xx_i,\xx_j)$.

We now identify the margin-based loss implicitly induced by the LS-EDSVM primal formulation in \eqref{eq:lSEDPrimal} and give sufficient conditions under which it satisfies Theorem~\ref{thm:calibration}.\ The per-example contribution to the surrogate loss is 
\[
  \ell_i\bigl(y_i,f(\xx_i),\xi_i\bigr)
  \;=\;
  \omega\,\xi_i^2
  + (1-\omega)\,\1\{i\le m\}\,(\xi_i - \xi_i^*)^2,
\]
subject to $\xi_i \ge 0$ and $\xi_i \ge 1-u_i$ with $u_i = y_i f(\xx_i)$. For fixed $(\xx,y)$ and margin $u$, the induced margin loss at index $i$ is
\[
  \phi_i(u)
  \;:=\;
  \min_{\xi\ge 0}
  \Bigl\{\,
    \omega\,\xi^2
    + (1-\omega)\,\1\{i\le m\}\,(\xi-\xi_i^*)^2
    :\; \xi \ge 1-u
  \Bigr\}.
\]
For non-elite points ($i>m$), the guidance term is absent, and the objective simplifies to $\omega\xi^2$ with unconstrained minimizer $\xi=0$.\ The constraints then give
\[
  \xi_i(u) \;=\; \max\{0,1-u\},
  \qquad
  \phi_i(u)
  \;=\;
  \omega\bigl(\max\{0,1-u\}\bigr)^2,
  \qquad i>m.
\]
In particular, for $u$ in a neighborhood of $0$ we have $1-u>0$ and hence
$\xi_i(u)=1-u$, so that
\[
  \phi_i(u) = \omega(1-u)^2
  \quad\Rightarrow\quad
    \phi_i'(0) = -2\omega < 0.
\]
For elite points ($i\le m$), define
\[
  g_i(\xi) \;:=\; \omega\,\xi^2 + (1-\omega)(\xi-\xi_i^*)^2,
  \qquad
  \bar\xi_i \;:=\; (1-\omega)\,\xi_i^*.
\]
The function $g_i$ is strictly convex in $\xi$ with unconstrained minimizer $\bar\xi_i$. To apply Theorem~\ref{thm:calibration}, we analyze $\phi_i'(0)$.\  There are two regimes, depending on the relative size of $\bar\xi_i$ and $1$.\ If $\bar\xi_i < 1$, then for $u$ in a neighborhood of $0$ we have $1-u > \bar\xi_i$ and $1-u>0$, so that
\[
  \xi_i(u) = 1-u,
  \qquad
  \phi_i(u) = g_i(1-u).
\]
Using $g_i'(\xi) =  2(\xi - \bar\xi_i)$, we obtain
\[
  \phi_i'(u)
    = 2\bigl[\bar\xi_i - 1 + u\bigr],
\]
and hence
\[
  \phi_i'(0) = 2(\bar\xi_i - 1)
  = 2\bigl((1-\omega)\xi_i^* - 1\bigr).
\]
The condition $\bar\xi_i < 1$ is equivalent to
\[
  (1-\omega)\xi_i^* < 1
  \quad\Longleftrightarrow\quad
  \xi_i^* < \frac{1}{1-\omega},
\]
and under this condition we indeed have $\phi_i'(0)<0$.\ If $\bar\xi_i > 1$, then for $u$ sufficiently close to $0$ we have $1-u < \bar\xi_i$ and $0 \le 1-u$, so that
\[
  \xi_i(u) = \bar\xi_i,
  \qquad
  \phi_i(u) = g_i(\bar\xi_i)
\]
is locally constant in $u$.\  In this regime, $\phi_i'(0) = 0$ and the sufficient condition from Theorem~\ref{thm:calibration} fails.\  These observations yield the following sufficient condition for classification calibration of LS-EDSVM.

\begin{proposition}
\label{prop:lSED-calibration}
Assume $0<\omega<1$ and that all elite benchmark slacks satisfy
\[
  \xi_i^* \;<\; \frac{1}{1-\omega},
  \qquad i=1,\ldots,m.
\]
Then, for every $i$, the induced margin loss $\phi_i$ is convex and differentiable at $0$ with $\phi_i'(0)<0$.\ In particular, the LS-EDSVM surrogate loss satisfies the sufficient condition of Theorem~\ref{thm:calibration} and is therefore classification calibrated.
\end{proposition}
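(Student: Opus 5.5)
The plan mirrors the argument for Proposition~\ref{prop:cedsvm-calibration}. First, for each index $i$ I will write the induced margin loss $\phi_i$ explicitly as a composition $\phi_i = g_i\circ h_i$. Here $g_i$ is the strictly convex quadratic slack penalty and $h_i$ is the optimal-slack map.

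For non-elite points ($i>m$), $g_i(\xi)=\omega\xi^2$ has unconstrained minimizer $0$. Hence $h_i(u)=\max\{0,1-u\}$ and $\phi_i(u)=\omega(1-u)_+^2$.

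For elite points ($i\le m$), $g_i(\xi)=\omega\xi^2+(1-\omega)(\xi-\xi_i^*)^2$ can be rewritten as $(\xi-\bar\xi_i)^2$ plus a constant, with $\bar\xi_i=(1-\omega)\xi_i^*$. Minimizing a one-dimensional strictly convex function over the half-line $\xi\ge\max\{0,1-u\}$ gives $h_i(u)=\max\{0,1-u,\bar\xi_i\}$. Since $\xi_i^*\ge 0$, we have $\bar\xi_i\ge 0$, so the $0$ is redundant and $h_i(u)=\max\{1-u,\bar\xi_i\}$.

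Second, convexity follows by the same composition argument as in the C-EDSVM case.
\begin{itemize}
\item $h_i$ is a pointwise maximum of affine functions and hence convex.
\item Its range lies in $[\bar\xi_i,\infty)$ for elite points and in $[0,\infty)$ for non-elite points.
\item On that range $g_i$ is convex and nondecreasing, since $g_i'(\xi)=2(\xi-\bar\xi_i)$ for elite points and $2\omega\xi$ for non-elite points.
\item By the monotone composition rule, $g_i\circ h_i$ is therefore convex in $u$.
\end{itemize}

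Third, I will establish differentiability at $0$ and the sign of the derivative, which is the substantive step.
\begin{itemize}
\item For non-elite points, $1-u>0$ near $u=0$, so $\phi_i(u)=\omega(1-u)^2$ locally. This is smooth with $\phi_i'(0)=-2\omega<0$.
\item For elite points, the hypothesis $\xi_i^*<1/(1-\omega)$ is exactly $\bar\xi_i<1$. By continuity there is a neighborhood $|u|<1-\bar\xi_i$ on which $1-u>\bar\xi_i\ge0$, so $h_i(u)=1-u$ there and $\phi_i(u)=g_i(1-u)$ is a polynomial locally. The kink of $h_i$ at $u=1-\bar\xi_i$ lies strictly away from $0$.
\item The chain rule then gives $\phi_i'(0)=-g_i'(1)=2(\bar\xi_i-1)<0$.
\end{itemize}

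Finally, I will invoke Theorem~\ref{thm:calibration} for each $\phi_i$ separately, with the same pointwise interpretation as in Remark~\ref{rem:heterogeneous-family}. This yields calibration of every per-example loss, and hence of the LS-EDSVM surrogate.

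The only delicate point is the strictness of the hypothesis. In the boundary case $\bar\xi_i=1$, the kink sits at $u=0$, where $\phi_i$ is not differentiable and its one-sided derivatives are $0$ and $0$-from-the-constant side. The strict inequality is precisely what places the kink away from the origin. I also expect to state explicitly that $\xi_i^*\ge0$, which holds because benchmark slacks are truncated hinge values; this is what lets me drop the constraint $\xi\ge0$ for elite points.
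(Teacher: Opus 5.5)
Your proposal is correct and follows the paper's argument: the same shift $\bar\xi_i=(1-\omega)\xi_i^*$, the local identification $\phi_i(u)=g_i(1-u)$ near $u=0$ giving $\phi_i'(0)=2(\bar\xi_i-1)<0$ on elite points and $-2\omega$ otherwise, and convexity via the monotone-composition argument from the C-EDSVM case, which the paper leaves implicit here. One correction to your closing remark: since $g_i'(\bar\xi_i)=0$, the kink of $h_i$ is smoothed out and $\phi_i(u)=\bigl((1-\bar\xi_i)-u\bigr)_+^2+\omega(1-\omega)(\xi_i^*)^2$ is continuously differentiable, so at $\bar\xi_i=1$ the loss is differentiable at $0$ with $\phi_i'(0)=0$; the strict inequality is needed for the sign of the derivative, not for differentiability.
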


\begin{remark}
\label{rem:lSED-xi-bounded}
A particularly simple sufficient condition is $\xi_i^*\le 1$ for all elite points.\ In that case,
\[
  \phi_i'(0)
  = 2\bigl((1-\omega)\xi_i^* - 1\bigr)
  \;\le\;
   -2\omega < 0,
\]
so LS-EDSVM is classification calibrated for any $\omega\in(0,1)$.
\end{remark}

%
%

\section{Generalization Analysis of EDSVM}
\label{sec:theory}

The classification calibration results of Propositions~\ref{prop:cedsvm-calibration} and~\ref{prop:lSED-calibration} establish that the EDSVM surrogates are aligned with the Bayes rule in the population limit. The remaining questions are how benchmark quality enters the complexity bounds, when benchmark guidance is beneficial, and whether localization yields rates sharper than the global $n^{-1/2}$ bound. The distinguishing feature of EDSVM is that the empirical criterion contains an additional benchmark-guided term restricted to the elite set, and this alters the form of the natural oracle comparison. As a consequence, the complexity bounds depend explicitly on benchmark fidelity and on the guidance weight $\omega$, thereby exhibiting a trade-off between approximation to the benchmark and estimation error. In this section we develop this analysis in a unified notation that applies to both the hinge-based and least-squares formulations. Throughout, uppercase letters denote random variables and lowercase letters denote realized observations.

Let $\{(\XX_i,Y_i)\}_{i=1}^n$ be i.i.d.\ draws from a distribution $\PP$ on $\mathcal X\times\{-1,+1\}$, with $\mathcal X\subset\mathbb R^p$ compact. Let $K:\mathcal X\times\mathcal X\to\mathbb R$ be a continuous positive-definite kernel with RKHS $\mathcal H_K$, inner product $\langle\cdot,\cdot\rangle_K$, and norm $\|\cdot\|_K$. We assume
\[
K(\xx,\xx)\le \kappa^2,\qquad \xx\in\mathcal X.
\]
A generic decision function is written as
\[
f(\xx)=\beta_0+h(\xx),\qquad \beta_0\in\mathbb R,\quad h\in\mathcal H_K,
\]
and the regularization penalty is $\frac12\|h\|_K^2$, so only the RKHS component is penalized and the intercept remains unpenalized, as in the standard SVM literature. For empirical minimizers, the extended representer theorem yields the sample-dependent representation
\[
f(\xx)=\beta_0+\sum_{j=1}^n \alpha_jY_jK(\XX_j,\xx),
\]
where $\beta_0$ is determined by the KKT complementary-slackness conditions. We use the shorthand $\|f\|_K:=\|h\|_K$ to denote the RKHS norm of the kernel component of $f=\beta_0+h$.

Because the intercept is not penalized, RKHS norm control alone does not bound the full margin $f(\xx)$. To keep the empirical-process arguments rigorous while preserving the usual unpenalized-intercept formulation, we work with a deterministic intercept envelope: throughout this section we assume there exists a finite constant $M_0>0$ such that every comparator and estimator under consideration satisfies $|\beta_0|\le M_0$. This assumption is used only to control margins and loss envelopes; it does not add any penalty to the optimization problem. For $B>0$, define the working class
\[
\mathcal F(B,M_0)=\{\beta_0+h: |\beta_0|\le M_0,\ \|h\|_K\le B\}.
\]
Then, by the reproducing property,
\[
|f(\xx)|\le M_0+\kappa B,
\qquad f\in\mathcal F(B,M_0),\ \xx\in\mathcal X.
\]

Fix a realized sample $\{(\xx_i,y_i)\}_{i=1}^n$. For the hinge loss $\phi(z)=(1-z)_+$, define the population and empirical surrogate risks
\[
R_\phi(f)=\E\{\phi(Yf(\XX))\},\qquad
\hat R_\phi(f)=\frac1n\sum_{i=1}^n \phi\!\bigl(y_i f(\xx_i)\bigr),
\]
and let
\[
f_\phi^*=\beta_{0,\phi}^*+h_\phi^*\in\argmin_f R_\phi(f)
\]
be any fixed population comparator in the class under consideration. The $0$--$1$ risk is
\[
R_{0\text{-}1}(f)=\PP\!\bigl(Y\neq \mathrm{sign}(f(\XX))\bigr),\qquad
R^*=\inf_f R_{0\text{-}1}(f).
\]

For the realized sample, define the slack variables
\[
\xi_i(f)=(1-y_if(\xx_i))_+,\qquad i=1,\dots,n.
\]
The empirical C-EDSVM risk is
\begin{equation}
\label{eq:empirical-edsvm}
\hat R_n^{\mathrm{ED}}(f)
=
\omega\,\hat R_\phi(f)
+
\frac{1-\omega}{n}\sum_{i=1}^m \bigl(\xi_i(f)-\xi_i^*\bigr)^2,
\qquad \omega\in(0,1].
\end{equation}
Here $\{1,\dots,m\}$ indexes the elite subset after relabelling. The benchmark-quality functional is
\begin{equation}
\label{eq:benchqual}
\mathcal E_m^*
=
\frac1m \sum_{i=1}^m \bigl(\xi_i(f_\phi^*)-\xi_i^*\bigr)^2.
\end{equation}

The quantity $\mathcal E_m^*$ measures the fidelity of the supplied benchmark slacks relative to the population hinge-risk minimizer on the elite set only. The idealized case $\mathcal E_m^*=0$ can occur, for example, when the benchmark is generated from the same population comparator and the elite set is chosen so that the benchmark slacks agree with $\xi_i(f_\phi^*)$ on those points. In realistic noisy problems one should expect $\mathcal E_m^*>0$; the role of the theory is precisely to quantify how much benchmark mismatch can be tolerated.

For a class $\mathcal G$ of real-valued functions on $\mathcal X$, the empirical Rademacher complexity is
\[
\hat{\mathfrak R}_n(\mathcal G)
=
\mathbb E_{\bm\sigma}\!\left[
\sup_{g\in\mathcal G}\frac1n\sum_{i=1}^n \sigma_i g(\xx_i)
\;\middle|\;\xx_1,\dots,\xx_n
\right],
\]
where $\sigma_1,\dots,\sigma_n$ are i.i.d.\ Rademacher signs. For an RKHS ball of kernel components,
\[
\mathcal H_K(B)=\{h\in\mathcal H_K:\|h\|_K\le B\},
\]
the standard estimate is
\begin{equation}
\label{eq:radball}
\hat{\mathfrak R}_n\bigl(\mathcal H_K(B)\bigr)\le \frac{\kappa B}{\sqrt n}.
\end{equation}
For the augmented class with bounded intercept,
\begin{equation}
\label{eq:radball-intercept}
\hat{\mathfrak R}_n\bigl(\mathcal F(B,M_0)\bigr)
\le
\frac{\kappa B+M_0}{\sqrt n}.
\end{equation}
To see this, write $f=\beta_0+h$ and use the sub-additivity of Rademacher complexity: $\hat{\mathfrak R}_n(\mathcal F(B,M_0))\le\hat{\mathfrak R}_n(\mathcal H_K(B))+\hat{\mathfrak R}_n(\{c:\,|c|\le M_0\})$. The first term is bounded by $\kappa B/\sqrt{n}$ from \eqref{eq:radball}. For the second, the class of constant functions $\{c:|c|\le M_0\}$ has Rademacher complexity $M_0\,\E|\bar\sigma_n|\le M_0/\sqrt{n}$, where $\bar\sigma_n=n^{-1}\sum_{i=1}^n\sigma_i$ and the bound follows from Jensen's inequality: $\E|\bar\sigma_n|\le(\E\bar\sigma_n^2)^{1/2}=n^{-1/2}$. Adding gives \eqref{eq:radball-intercept}. Bounds of the form \eqref{eq:radball}--\eqref{eq:radball-intercept} follow from the reproducing property and standard symmetrization arguments; see, e.g., \citet{bartlett2002rademacher} and \citet{ledoux1991probability}.

\subsection{A benchmark-dependent norm bound}
\label{sec:normbound}

The first result identifies the effective radius of the class explored by the C-EDSVM estimator.

\begin{lemma}[Benchmark-dependent radius for C-EDSVM]
\label{lem:normbound}
Let $\hat f$ be any minimizer of the C-EDSVM primal \eqref{eq:cedsvmPrimal}. Then, for every comparator $f_0\in\mathcal F$,
\begin{equation}
\label{eq:normbound}
\frac{1}{2n}\|\hat f\|_K^2
\le
\frac{1}{2n}\|f_0\|_K^2
+
C\,\hat R_n^{\mathrm{ED}}(f_0).
\end{equation}
In particular, with $f_0=f_\phi^*$,
\begin{equation}
\label{eq:normboundstar}
\|\hat f\|_K^2
\le
\Lambda_n^2
:=
\|f_\phi^*\|_K^2
+
2Cn\omega\,\hat R_\phi(f_\phi^*)
+
2Cm(1-\omega)\mathcal E_m^*.
\end{equation}
\end{lemma}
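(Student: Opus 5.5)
The plan is a direct optimality comparison. First I will rewrite the C-EDSVM primal \eqref{eq:cedsvmPrimal}, in its kernelized form with $f=\beta_0+h$ and regularizer $\frac12\|h\|_K^2$, as an unconstrained problem over $f$. For any fixed $f$, the objective is increasing in each $\xi_i$ on the region where $g_i$ is nondecreasing. One complication is that for elite points the unconstrained minimizer $\bar\xi_i$ of $g_i$ may exceed $\max\{0,1-y_if(\xx_i)\}$, so the optimal slack is not always the hinge slack. The cleanest route avoids computing the exact induced loss and uses only a one-sided inequality. Let $J(f,\bxi)$ denote the primal objective. Then
\[
\min_{\bxi\ \text{feasible}} J(f,\bxi)\;\le\; J\bigl(f,\bxi(f)\bigr)
= \tfrac12\|f\|_K^2 + C\,n\,\hat R_n^{\mathrm{ED}}(f),
\]
because $\xi_i(f)=(1-y_if(\xx_i))_+$ is feasible. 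The equality uses the definition \eqref{eq:empirical-edsvm}, whose normalization $1/n$ matches $C\bigl[\omega\sum_i\xi_i+(1-\omega)\sum_{i\le m}(\xi_i-\xi_i^*)^2\bigr]=Cn\hat R_n^{\mathrm{ED}}(f)$.

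Next, let $(\hat f,\hat\bxi)$ be the minimizer. Since the slack penalty terms are nonnegative, we have $\frac12\|\hat f\|_K^2\le J(\hat f,\hat\bxi)$. By optimality, $J(\hat f,\hat\bxi)\le \min_{\bxi}J(f_0,\bxi)\le \frac12\|f_0\|_K^2+Cn\hat R_n^{\mathrm{ED}}(f_0)$ for any comparator $f_0$. Dividing by $n$ gives \eqref{eq:normbound}.

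For \eqref{eq:normboundstar}, I substitute $f_0=f_\phi^*$ and multiply by $2n$. The hinge part gives $2Cn\omega\hat R_\phi(f_\phi^*)$. The elite part gives $2C(1-\omega)\sum_{i\le m}(\xi_i(f_\phi^*)-\xi_i^*)^2=2Cm(1-\omega)\mathcal E_m^*$ by \eqref{eq:benchqual}.

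The main subtlety is the first step. I must not claim that the optimal slack equals the hinge slack; I only need the feasibility inequality, which holds regardless. A second point is that $f_0$ must be admissible for the primal, so its intercept is an allowed value. This is automatic, since the primal places no constraint on $\beta_0$, and the envelope $|\beta_0|\le M_0$ is not needed here. Nonnegativity of the dropped penalty terms at $\hat f$ holds because $\omega\in(0,1]$ and $\hat\xi_i\ge0$.
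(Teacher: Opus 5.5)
Your proof is correct and uses the same basic-inequality argument as the paper: compare the optimal primal value with the value at the comparator $f_0$, drop the nonnegative loss terms at $\hat f$, and expand $\hat R_n^{\mathrm{ED}}(f_\phi^*)$ to obtain $\Lambda_n^2$. Your feasibility step is in fact more careful than the paper's proof. The paper writes the optimality inequality with $\hat R_n^{\mathrm{ED}}$ evaluated at both $\hat f$ and $f_0$, which implicitly identifies the optimal elite slacks with hinge slacks; that identification fails whenever $\bar\xi_i=\xi_i^*-\omega/(2(1-\omega))$ exceeds $(1-y_i\hat f(\xx_i))_+$, whereas your chain $\tfrac12\|\hat f\|_K^2\le J(\hat f,\hat\bxi)\le J(f_0,\bxi(f_0))$ does not need it.
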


\begin{proof}
Optimality of $\hat f$ gives
\[
\frac12\|\hat f\|_K^2 + Cn\,\hat R_n^{\mathrm{ED}}(\hat f)
\le
\frac12\|f_0\|_K^2 + Cn\,\hat R_n^{\mathrm{ED}}(f_0).
\]
Dropping the nonnegative term $Cn\,\hat R_n^{\mathrm{ED}}(\hat f)$ yields \eqref{eq:normbound}. Substituting $f_0=f_\phi^*$ and expanding \eqref{eq:empirical-edsvm} gives \eqref{eq:normboundstar}. \qedhere
\end{proof}

The bound \eqref{eq:normboundstar} makes the bias--variance mechanism explicit. Relative to standard C-SVM, the empirical-risk contribution is shrunk by the factor $\omega$, while benchmark mismatch contributes the additive term $2Cm(1-\omega)\mathcal E_m^*$. Thus the benchmark is helpful when the reduction in effective radius outweighs the benchmark-induced bias. This is precisely the regime where EDSVM can improve on classical SVM bounds.

For comparison, the standard C-SVM minimizer $\hat f_{\mathrm{SVM}}$ satisfies
\begin{equation}
\label{eq:normboundSVM}
\|\hat f_{\mathrm{SVM}}\|_K^2
\le
\|f_\phi^*\|_K^2 + 2Cn\,\hat R_\phi(f_\phi^*).
\end{equation}

\begin{corollary}[Deterministic high-probability radius]
\label{cor:det-radius}
Assume the hinge loss is bounded by $B_\phi$ on the range of margins achieved by $f_\phi^*$. Under the intercept envelope and the reproducing property, one may take
\[
B_\phi = 1+|\beta_{0,\phi}^*|+\kappa\|h_\phi^*\|_K \le 1+M_0+\kappa\|h_\phi^*\|_K,
\]
which is finite on the compact input space. Let $\bar{\mathcal{E}}_m$ be any nonrandom constant satisfying $\mathcal E_m^*\le \bar{\mathcal{E}}_m$ almost surely; for example, since $\xi_i(f_\phi^*)\le B_\phi$ and the benchmark slacks $\xi_i^*$ are fixed, one may take $\bar{\mathcal{E}}_m=\frac{1}{m}\sum_{i=1}^m(B_\phi+\xi_i^*)^2$. Define
\begin{equation}
\label{eq:det-radius}
\Lambda_{n,\delta}^{\circ\,2}
:=
\|f_\phi^*\|_K^2
+
2Cn\omega\left\{
R_\phi(f_\phi^*)
+
B_\phi\sqrt{\frac{\log(1/\delta)}{2n}}
\right\}
+
2Cm(1-\omega)\bar{\mathcal{E}}_m.
\end{equation}
Then $\Lambda_{n,\delta}^{\circ}$ is nonrandom and, with probability at least $1-\delta$,
\[
\|\hat f\|_K\le \Lambda_{n,\delta}^{\circ}.
\]
\end{corollary}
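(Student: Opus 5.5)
The plan is to start from the deterministic inequality \eqref{eq:normboundstar} of Lemma~\ref{lem:normbound}, which holds on every sample:
\[
\|\hat f\|_K^2\le \|f_\phi^*\|_K^2+2Cn\omega\,\hat R_\phi(f_\phi^*)+2Cm(1-\omega)\mathcal E_m^*.
\]
I will then replace each of the two random quantities on the right-hand side by a nonrandom upper bound that holds with the required probability.

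\textbf{The benchmark term.} First I check that $B_\phi$ really bounds the hinge loss of the fixed comparator. By the reproducing property, $|h_\phi^*(\xx)|\le \kappa\|h_\phi^*\|_K$ for all $\xx\in\mathcal X$. Hence $|y f_\phi^*(\xx)|\le |\beta_{0,\phi}^*|+\kappa\|h_\phi^*\|_K$, and so
\[
0\le (1-yf_\phi^*(\xx))_+\le 1+|\beta_{0,\phi}^*|+\kappa\|h_\phi^*\|_K=B_\phi\le 1+M_0+\kappa\|h_\phi^*\|_K .
\]
The last step uses the intercept envelope. Consequently $\xi_i(f_\phi^*)\in[0,B_\phi]$ for every $i$. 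Since $\xi_i^*\ge 0$, we get $(\xi_i(f_\phi^*)-\xi_i^*)^2\le (B_\phi+\xi_i^*)^2$, so the example choice of $\bar{\mathcal E}_m$ is a valid almost-sure bound. For any admissible $\bar{\mathcal E}_m$, the elite term is therefore bounded surely by $2Cm(1-\omega)\bar{\mathcal E}_m$.

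\textbf{The empirical-risk term.} Because $f_\phi^*$ is a fixed function that does not depend on the sample, $\hat R_\phi(f_\phi^*)$ is an average of $n$ i.i.d.\ variables taking values in $[0,B_\phi]$, with mean $R_\phi(f_\phi^*)$. Hoeffding's one-sided inequality then gives, with probability at least $1-\delta$,
\[
\hat R_\phi(f_\phi^*)\le R_\phi(f_\phi^*)+B_\phi\sqrt{\log(1/\delta)/(2n)}.
\]

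\textbf{Combining.} Since $2Cn\omega\ge 0$, substituting both bounds into \eqref{eq:normboundstar} yields $\|\hat f\|_K^2\le \Lambda_{n,\delta}^{\circ\,2}$ on that event. Taking square roots finishes the argument. Nonrandomness of $\Lambda^\circ_{n,\delta}$ is immediate: it is built only from population quantities, the fixed $\xi_i^*$, $\bar{\mathcal E}_m$, and the constants $C,\omega,m,n,\delta$.

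\textbf{Main obstacle.} The delicate point is the claim that $\bar{\mathcal E}_m$ is nonrandom. The elite set, and hence the $\xi_i^*$, are often produced from the same sample by benchmark SVMs. I would treat the statement's standing assumption that the $\xi_i^*$ are fixed (conditionally on the elite indices) as given, or else use the sample-free bound $\xi_i^*\le$ a deterministic envelope. Hoeffding is applied only to the unconditioned average over all $n$ points, so that step is unaffected.
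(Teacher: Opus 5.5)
Your proposal is correct and follows the paper's proof essentially verbatim. Both arguments start from the sure bound \eqref{eq:normboundstar} of Lemma~\ref{lem:normbound}, control $\hat R_\phi(f_\phi^*)$ by one-sided Hoeffding for the fixed comparator with losses in $[0,B_\phi]$, and replace $\mathcal E_m^*$ by its almost-sure upper bound $\bar{\mathcal{E}}_m$. Your extra checks (that $B_\phi$ bounds the hinge loss of $f_\phi^*$, that the example $\bar{\mathcal{E}}_m$ is valid) and your caveat that fixed $\xi_i^*$ is an assumption when the elite set comes from the same sample are correct additions that the paper leaves implicit.
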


\begin{proof}
By Lemma~\ref{lem:normbound}, we have
\[
\|\hat f\|_K^2
\le
\|f_\phi^*\|_K^2
+
2Cn\omega\,\hat R_\phi(f_\phi^*)
+
2Cm(1-\omega)\mathcal E_m^*.
\]
Since $f_\phi^*$ is fixed, Hoeffding's inequality implies that, with probability at least $1-\delta$,
\[
\hat R_\phi(f_\phi^*)
\le
R_\phi(f_\phi^*)
+
B_\phi\sqrt{\frac{\log(1/\delta)}{2n}}.
\]
Since $\mathcal E_m^*\le \bar{\mathcal{E}}_m$ holds deterministically, substituting both bounds into the previous display yields
\[
\|\hat f\|_K^2
\le
\Lambda_{n,\delta}^{\circ\,2}.
\]
\end{proof}

\subsection{A corrected global bound for C-EDSVM}
\label{sec:genbound}

We first state a uniform $n^{-1/2}$ bound. It is useful as a baseline and for comparison with the standard global Rademacher theory.

\begin{theorem}[Global surrogate-risk bound for C-EDSVM]
\label{thm:genbound}
Since $|f(\xx)|\le M_0+\kappa\Lambda_{n,\delta/4}^{\circ}$ for all $f\in\mathcal F(\Lambda_{n,\delta/4}^{\circ},M_0)$, the hinge loss is bounded on this class by
\[
B_{n,\delta/4} := 1 + M_0 + \kappa\Lambda_{n,\delta/4}^{\circ}.
\]
Then, for every $\delta\in(0,1)$, with probability at least $1-\delta$,
\begin{equation}
\label{eq:genbound}
R_\phi(\hat f)-R_\phi(f_\phi^*)
\le
\frac{\|f_\phi^*\|_K^2}{2Cn\omega}
+
\frac{1-\omega}{\omega}\frac{m}{n}\mathcal E_m^*
+
\frac{4(\kappa\,\Lambda_{n,\delta/4}^{\circ}+M_0)}{\sqrt n}
+(
B_{n,\delta/4}
+
B_\phi)\sqrt{\frac{\log(4/\delta)}{2n}},
\end{equation}
where $\Lambda_{n,\delta/4}^{\circ}$ is defined by \eqref{eq:det-radius} with $\delta$ replaced by $\delta/4$.
\end{theorem}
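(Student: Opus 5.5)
The argument follows the standard oracle-inequality template: a basic inequality from optimality, then uniform deviation on the ball of radius $\Lambda^\circ_{n,\delta/4}$, then a Hoeffding bound for the fixed comparator. The probability budget is split into four events, each of probability at least $1-\delta/4$:
\begin{enumerate}[(i)]
\item the radius event of Corollary~\ref{cor:det-radius} (with $\delta/4$), so that $\hat f\in\mathcal F(\Lambda^\circ_{n,\delta/4},M_0)$;
\item a uniform upper deviation $R_\phi(f)-\hat R_\phi(f)$ over this class;
\item Hoeffding's inequality for $\hat R_\phi(f_\phi^*)-R_\phi(f_\phi^*)$, with envelope $B_\phi$;
\item one spare event, to absorb the slightly loose constants.
\end{enumerate}
A union bound gives total failure probability at most $\delta$.

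\textbf{Deterministic step.} Optimality of $\hat f$ against $f_0=f_\phi^*$, exactly as in the proof of Lemma~\ref{lem:normbound}, gives
\[
\tfrac12\|\hat f\|_K^2+Cn\omega\hat R_\phi(\hat f)+C(1-\omega)\textstyle\sum_{i\le m}(\xi_i(\hat f)-\xi_i^*)^2
\le \tfrac12\|f_\phi^*\|_K^2+Cn\omega\hat R_\phi(f_\phi^*)+C(1-\omega)m\mathcal E_m^*.
\]
Drop the nonnegative terms $\tfrac12\|\hat f\|_K^2$ and the elite deviation of $\hat f$ on the left, then divide by $Cn\omega$. This yields
\[
\hat R_\phi(\hat f)\le \hat R_\phi(f_\phi^*)+\frac{\|f_\phi^*\|_K^2}{2Cn\omega}+\frac{1-\omega}{\omega}\frac mn\mathcal E_m^*,
\]
which produces the first two terms of \eqref{eq:genbound}.

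\textbf{Decomposition.} Write
\[
R_\phi(\hat f)-R_\phi(f_\phi^*)=[R_\phi(\hat f)-\hat R_\phi(\hat f)]+[\hat R_\phi(\hat f)-\hat R_\phi(f_\phi^*)]+[\hat R_\phi(f_\phi^*)-R_\phi(f_\phi^*)].
\]
The middle bracket is handled by the deterministic step. The last bracket is at most $B_\phi\sqrt{\log(4/\delta)/(2n)}$ by Hoeffding, since $f_\phi^*$ is fixed and its hinge loss is bounded by $B_\phi$.

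\textbf{Uniform deviation (main obstacle).} For the first bracket, apply the standard symmetrization and McDiarmid bound over $\mathcal G=\{(\xx,y)\mapsto\phi(yf(\xx)):f\in\mathcal F(\Lambda^\circ_{n,\delta/4},M_0)\}$, whose losses lie in $[0,B_{n,\delta/4}]$. With probability at least $1-\delta/4$,
\[
\sup_{\mathcal G}(R_\phi-\hat R_\phi)\le 2\mathfrak R_n(\mathcal G)+B_{n,\delta/4}\sqrt{\log(4/\delta)/(2n)}.
\]
The Ledoux--Talagrand contraction applies because the hinge loss is $1$-Lipschitz, and the labels $y_i$ can be absorbed into the Rademacher signs; this gives $\mathfrak R_n(\mathcal G)\le \mathfrak R_n(\mathcal F)$. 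Bound \eqref{eq:radball-intercept} then controls the latter by $(\kappa\Lambda^\circ+M_0)/\sqrt n$.

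The main difficulty is that $\hat f$ is data-dependent. This is exactly why the radius must be nonrandom: the uniform bound is applied to the fixed class of radius $\Lambda^\circ_{n,\delta/4}$ supplied by Corollary~\ref{cor:det-radius}, and is then used for $\hat f$ on the intersection with event (i). The contraction-with-constant factor gives a coefficient of $2$ or $4$ depending on the version used. Taking the conservative factor $4$ (covering the two-sided or expectation-to-empirical conversion) matches the statement, and any leftover slack is absorbed by the spare event. Summing the three brackets on the intersection of the events yields \eqref{eq:genbound}.
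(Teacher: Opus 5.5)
Your proposal is correct and follows the paper's proof essentially step for step. It uses the same three-term decomposition and the same deterministic optimality bound for the middle term. It uses the radius event of Corollary~\ref{cor:det-radius} at level $\delta/4$ to place $\hat f$ in the fixed class $\mathcal F(\Lambda^{\circ}_{n,\delta/4},M_0)$, then contraction plus McDiarmid for the first term and Hoeffding for the last. The only differences are cosmetic. Your symmetrization factor $2$ already implies the stated factor $4$, so no ``spare event'' is needed; a leftover probability budget cannot absorb multiplicative constants anyway. The paper simply uses three events, each with failure probability $\delta/4$, for a total of $3\delta/4\le\delta$.
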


\noindent\textbf{Statistical interpretation.} The bound \eqref{eq:genbound} decomposes into three statistically transparent components. The first term, $\|f_\phi^*\|_K^2/(2Cn\omega)$, is a variance-type penalty for the complexity of the population comparator, shrunk by the factor $\omega$ relative to the standard SVM rate. The second term, $\frac{1-\omega}{\omega}\frac{m}{n}\mathcal{E}_m^*$, is a bias term measuring the cost of benchmark mismatch on the elite set: it vanishes if the benchmark is perfect ($\mathcal{E}_m^*=0$), and grows with the fraction of elite points $m/n$ and the misalignment $(1-\omega)/\omega$. The complexity term (third and fourth terms) is controlled by the benchmark-dependent radius $\Lambda_{n,\delta/4}^\circ$, which is strictly smaller than the standard SVM radius whenever $m\mathcal{E}_m^*< n\hat{R}_\phi(f_\phi^*)$. Thus, a high-quality benchmark on a not-too-large elite set simultaneously reduces the function-class complexity \emph{and} introduces only mild bias, yielding a net improvement over standard C-SVM.

\begin{proof}
We decompose
\begin{align}
R_\phi(\hat f)-R_\phi(f_\phi^*)
&=
\bigl\{R_\phi(\hat f)-\hat R_\phi(\hat f)\bigr\}
+
\bigl\{\hat R_\phi(\hat f)-\hat R_\phi(f_\phi^*)\bigr\}
+
\bigl\{\hat R_\phi(f_\phi^*)-R_\phi(f_\phi^*)\bigr\} \notag\\
&=: (I)+(II)+(III).
\label{eq:3termdecomp}
\end{align}

\noindent\textbf{Bounding $(I)$.}
Apply Corollary~\ref{cor:det-radius} at level $\delta/4$: with probability at least $1-\delta/4$,
$\|\hat f\|_K\le \Lambda_{n,\delta/4}^{\circ}$. Call this event $\mathcal A_1$. Under the intercept envelope, this implies $\hat f\in \mathcal F(\Lambda_{n,\delta/4}^{\circ},M_0)$, and
\[
(I)
\le
\sup_{f\in \mathcal F(\Lambda_{n,\delta/4}^{\circ},M_0)}
\bigl\{R_\phi(f)-\hat R_\phi(f)\bigr\}.
\]
The hinge loss is $1$-Lipschitz, so the contraction principle together with \eqref{eq:radball-intercept} gives
\[
\hat{\mathfrak R}_n\bigl(\{\phi(y\,f(\cdot)):f\in \mathcal F(\Lambda_{n,\delta/4}^{\circ},M_0)\}\bigr)
\le
\frac{\kappa\,\Lambda_{n,\delta/4}^{\circ}+M_0}{\sqrt n}.
\]
A standard symmetrisation and McDiarmid concentration argument (cf.~\citealp{bartlett2002rademacher}) then yields, on an event $\mathcal A_2$ of probability at least $1-\delta/4$,
\[
(I)
\le
\frac{4(\kappa\,\Lambda_{n,\delta/4}^{\circ}+M_0)}{\sqrt n}
+
B_{n,\delta/4}\sqrt{\frac{\log(4/\delta)}{2n}}.
\]

\noindent\textbf{Bounding $(II)$.}
From \eqref{eq:empirical-edsvm}, $\hat R_n^{\mathrm{ED}}(f)\ge \omega \hat R_\phi(f)$ for all $f\in\mathcal H_K$.
Optimality of $\hat f$ gives
\[
\frac12\|\hat f\|_K^2 + Cn\,\hat R_n^{\mathrm{ED}}(\hat f)
\le
\frac12\|f_\phi^*\|_K^2 + Cn\,\hat R_n^{\mathrm{ED}}(f_\phi^*).
\]
Dropping $\frac12\|\hat f\|_K^2\ge0$ from the left and dividing by $Cn$,
\[
\hat R_n^{\mathrm{ED}}(\hat f)
\le
\hat R_n^{\mathrm{ED}}(f_\phi^*) + \frac{\|f_\phi^*\|_K^2}{2Cn}.
\]
Since $\hat R_\phi(\hat f)\le \hat R_n^{\mathrm{ED}}(\hat f)/\omega$,
\[
\hat R_\phi(\hat f)
\le
\frac{\hat R_n^{\mathrm{ED}}(f_\phi^*)}{\omega}
+
\frac{\|f_\phi^*\|_K^2}{2Cn\omega}.
\]
Expanding $\hat R_n^{\mathrm{ED}}(f_\phi^*)=\omega\,\hat R_\phi(f_\phi^*)+(1-\omega)\frac{m}{n}\mathcal E_m^*$,
\[
(II)=\hat R_\phi(\hat f)-\hat R_\phi(f_\phi^*)
\le
\frac{1-\omega}{\omega}\frac{m}{n}\mathcal E_m^*
+
\frac{\|f_\phi^*\|_K^2}{2Cn\omega}.
\]
This bound is deterministic (no probability event is needed for $(II)$).

\noindent\textbf{Bounding $(III)$.}
Since $f_\phi^*$ is fixed, the losses $\phi(Y_if_\phi^*(\xx_i))$ are i.i.d.\ and bounded by $B_\phi$.
Hoeffding's inequality gives, on an event $\mathcal A_3$ of probability at least $1-\delta/4$,
\[
(III)
\le
B_\phi\sqrt{\frac{\log(4/\delta)}{2n}}.
\]

\noindent\textbf{Union bound.}
Three probabilistic events are in play: the radius event $\mathcal A_1$ from Corollary~\ref{cor:det-radius} at level $\delta/4$ (failure probability $\delta/4$); the McDiarmid concentration event $\mathcal A_2$ for $(I)$ at level $\delta/4$ (failure probability $\delta/4$); and the Hoeffding event $\mathcal A_3$ for $(III)$ at level $\delta/4$ (failure probability $\delta/4$). Since $\delta/4+\delta/4+\delta/4=3\delta/4\le\delta$, the union bound gives that all three hold simultaneously with probability at least $1-\delta$, and \eqref{eq:genbound} follows.
\qedhere
\end{proof}
\begin{remark}
\label{rem:useful}
The bound \eqref{eq:genbound} is informative, but not uniformly sharper than the standard C-SVM bound. Its benefit is regime-dependent. In particular, it is advantageous when the benchmark is accurate on the elite set, so that $\mathcal E_m^*$ is small, and when $m/n$ is moderate or small, so that the benchmark-bias term remains controlled. In that regime the deterministic radius $\Lambda_{n,\delta/4}^{\circ}$ can be smaller than its C-SVM analogue, while the additional benchmark term remains mild. Conversely, a poor benchmark or a very large elite subset can erase this gain. Thus the theory should be interpreted as quantifying a bias--variance trade-off induced by the benchmark, rather than as establishing uniform dominance over standard C-SVM.
\end{remark}

\subsection{Comparison with standard C-SVM}
\label{sec:vs-svm}

The next comparison is stated at the level of the empirical radius bound from Lemma~\ref{lem:normbound}; it clarifies when the benchmark reduces the effective RKHS radius relative to standard C-SVM.

\begin{corollary}[Empirical radius comparison under a good benchmark]
\label{cor:vs-svm}
Let
\[
\Lambda_n^{\mathrm{SVM}}
=
\Bigl[\|f_\phi^*\|_K^2+2Cn\,\hat R_\phi(f_\phi^*)\Bigr]^{1/2}.
\]
Then
\[
\Lambda_n^2-\bigl(\Lambda_n^{\mathrm{SVM}}\bigr)^2
=
-2Cn(1-\omega)\hat R_\phi(f_\phi^*) + 2Cm(1-\omega)\mathcal E_m^*.
\]
In particular, if $\mathcal E_m^*=0$, then
\begin{equation}
\label{eq:norm-improvement}
\Lambda_n\le \Lambda_n^{\mathrm{SVM}}.
\end{equation}
More generally, EDSVM yields a smaller empirical radius whenever
\[
m\,\mathcal E_m^* < n\,\hat R_\phi(f_\phi^*).
\]
\end{corollary}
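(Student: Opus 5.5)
The argument is a direct algebraic comparison of the two squared radii, followed by two sign observations. No probabilistic ingredients are needed, because both $\Lambda_n$ and $\Lambda_n^{\mathrm{SVM}}$ are defined on the same realized sample and use the same comparator $f_\phi^*$.

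\textbf{Step 1 (the identity).} Take $\Lambda_n^2$ from \eqref{eq:normboundstar} in Lemma~\ref{lem:normbound} and $(\Lambda_n^{\mathrm{SVM}})^2$ from the definition in the statement, which is the right-hand side of \eqref{eq:normboundSVM}. Subtract them term by term.
\begin{itemize}
\item The terms $\|f_\phi^*\|_K^2$ cancel.
\item The empirical-risk terms combine as $2Cn\omega\,\hat R_\phi(f_\phi^*)-2Cn\,\hat R_\phi(f_\phi^*)=-2Cn(1-\omega)\hat R_\phi(f_\phi^*)$.
\item The remaining term is $2Cm(1-\omega)\mathcal E_m^*$.
\end{itemize}
This gives the displayed identity exactly.

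\textbf{Step 2 (the case $\mathcal E_m^*=0$).} The difference reduces to $-2Cn(1-\omega)\hat R_\phi(f_\phi^*)$. This is nonpositive because $C>0$, $\omega\in(0,1]$, and the hinge loss is nonnegative, so $\hat R_\phi(f_\phi^*)\ge 0$. Hence $\Lambda_n^2\le(\Lambda_n^{\mathrm{SVM}})^2$. Both radii are nonnegative square roots, so taking square roots preserves the inequality and yields \eqref{eq:norm-improvement}.

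\textbf{Step 3 (the general criterion).} Factor the difference as $2C(1-\omega)\bigl[m\,\mathcal E_m^*-n\,\hat R_\phi(f_\phi^*)\bigr]$. When $\omega<1$, the prefactor $2C(1-\omega)$ is strictly positive, so the condition $m\,\mathcal E_m^*<n\,\hat R_\phi(f_\phi^*)$ makes the difference strictly negative. Monotonicity of the square root on $[0,\infty)$ then gives $\Lambda_n<\Lambda_n^{\mathrm{SVM}}$.

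There is one edge case. At $\omega=1$ the two radii coincide, so "smaller" has to be read as non-strict, or one can state the claim for $\omega\in(0,1)$. I would note this explicitly.

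No step is a real obstacle. The only care needed is the sign bookkeeping and the nonnegativity of $\hat R_\phi$ and of the radii when passing from squares to square roots.
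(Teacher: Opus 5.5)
Your proposal is correct and follows the paper's proof essentially verbatim: substitute $\Lambda_n^2$ from \eqref{eq:normboundstar} and the definition of $(\Lambda_n^{\mathrm{SVM}})^2$, subtract, and read off the signs. Your remark on the edge case $\omega=1$ is a valid refinement that the paper's proof omits. There the difference vanishes identically, so the strict conclusion $\Lambda_n<\Lambda_n^{\mathrm{SVM}}$ requires $\omega<1$.
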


\begin{proof}
By \eqref{eq:normboundstar},
\[
\Lambda_n^2
=
\|f_\phi^*\|_K^2
+
2Cn\omega\,\hat R_\phi(f_\phi^*)
+
2Cm(1-\omega)\mathcal E_m^*.
\]
By \eqref{eq:normboundSVM},
\[
\bigl(\Lambda_n^{\mathrm{SVM}}\bigr)^2
=
\|f_\phi^*\|_K^2
+
2Cn\,\hat R_\phi(f_\phi^*).
\]
Subtracting gives
\[
\Lambda_n^2-\bigl(\Lambda_n^{\mathrm{SVM}}\bigr)^2
=
-2Cn(1-\omega)\hat R_\phi(f_\phi^*)
+
2Cm(1-\omega)\mathcal E_m^*.
\]
If $\mathcal E_m^*=0$, the right-hand side is nonpositive, which yields \eqref{eq:norm-improvement}. More generally, the right-hand side is negative whenever
\[
m\,\mathcal E_m^* < n\,\hat R_\phi(f_\phi^*),
\]
and in that case $\Lambda_n<\Lambda_n^{\mathrm{SVM}}$.
\end{proof}

The last inequality makes precise when the benchmark helps at the level of the empirical radius bound: the average elite-set mismatch must be small enough relative to the comparator's empirical hinge risk.

\begin{remark}[Benchmark quality and its origin]
The quantity $\mathcal{E}_m^*$ measures elite-set slack agreement between
the supplied benchmark and the population minimizer $f_\phi^*$; it says
nothing about the global quality of the benchmark. In particular,
$\mathcal{E}_m^*=0$ does not require the benchmark to be statistically
optimal: it holds whenever the benchmark slacks happen to coincide with
those of $f_\phi^*$ on $\mathcal{S}^*$, which can occur even for a
domain- or customer-guided benchmark that departs substantially from
$f_\phi^*$ outside the elite set. This decoupling is by design---EDSVM
penalizes only elite-set slack deviations, so a practitioner-specified
benchmark that is trusted on $\mathcal{S}^*$ but unreliable elsewhere
incurs no additional penalty from the theory. The relevant question is
therefore not whether the benchmark is globally optimal, but whether
\[
  m\,\mathcal{E}_m^* \;<\; n\,\hat{R}_\phi(f_\phi^*),
\]
the condition from Corollary~\ref{cor:vs-svm}. In practice, both sides
are estimable: $\mathcal{E}_m^*$ is approximated by
$\frac{1}{m}\sum_{i=1}^m(\hat\xi_i - \xi_i^*)^2$, where $\hat\xi_i$
are the hinge slacks of a cross-validated reference fit, and
$n\,\hat{R}_\phi(f_\phi^*)$ is approximated by the empirical hinge risk
of the same fit. The ratio of left to right side serves as a
pre-training diagnostic: a ratio well below~$1$ indicates the benchmark
is likely to reduce the effective complexity, suggesting that small
values of $\omega$ are appropriate; a ratio near or above~$1$ recommends
keeping $\omega$ close to~$1$, effectively falling back toward standard
C-SVM.
\end{remark}

\subsection{A localized Rademacher bound for C-EDSVM}
\label{sec:localized}

The global bound \eqref{eq:genbound} scales as $\Lambda_n/\sqrt n$. Under a standard low-noise or Bernstein condition one can sharpen this to a localized bound of order $\Lambda_0^2/n$, which is the appropriate fast-rate scale for localized convex-margin classes. To avoid using a sample-dependent radius inside the localized class, we use the same nonrandom envelope $\bar{\mathcal{E}}_m$ introduced in Corollary~\ref{cor:det-radius} and define
\begin{equation}
\label{eq:Lambda0}
\Lambda_{0,\delta}^2
:=
\|f_\phi^*\|_K^2
+
2Cn\omega\left\{
R_\phi(f_\phi^*)
+
B_\phi\sqrt{\frac{\log(2/\delta)}{2n}}
\right\}
+
2Cm(1-\omega)\bar{\mathcal{E}}_m.
\end{equation}
Because $\bar{\mathcal{E}}_m$ is nonrandom and every other term involves only the fixed population minimizer $f_\phi^*$, the quantity $\Lambda_{0,\delta}$ is genuinely nonrandom. Under the intercept envelope, the corresponding deterministic working class is $\mathcal F(\Lambda_{0,\delta},M_0)$.

Define the localized excess-loss class
\[
\mathcal G(r)
=
\Bigl\{
(\xx,y)\mapsto \phi\!\bigl(yf(\xx)\bigr)-\phi\!\bigl(yf_\phi^*(\xx)\bigr):
f\in \mathcal F(\Lambda_{0,\delta},M_0),\;
\PP\!\bigl[\phi(Yf(\XX))-\phi\!\bigl(Yf_\phi^*(\XX)\bigr)\bigr]\le r
\Bigr\}.
\]
Let
\[
\Psi_n(r)
=
\E\sup_{g\in\mathcal G(r)}\left|\frac1n\sum_{i=1}^n \sigma_i g(\XX_i,Y_i)\right|.
\]

\begin{theorem}[Localized oracle inequality for C-EDSVM]
\label{thm:localized}
Assume the hinge excess loss satisfies a Bernstein condition on the deterministic class $\mathcal F(\Lambda_{0,\delta},M_0)$: there exists $A_\phi>0$ such that
\begin{equation}
\label{eq:bernstein}
\PP g^2 \le A_\phi\, \PP g,\qquad g\in \mathcal G(r),\ r>0.
\end{equation}
Let $r_n^*$ be any fixed point satisfying
\begin{equation}
\label{eq:fixedpoint}
\Psi_n(r)\le \frac{r}{16A_\phi}\qquad \text{for all } r\ge r_n^*.
\end{equation}
Then there exists a universal constant $c>0$ such that, with probability at least $1-\delta$,
\begin{equation}
\label{eq:localizedbound}
R_\phi(\hat f)-R_\phi(f_\phi^*)
\le
c\left\{
r_n^*
+
\frac{\|f_\phi^*\|_K^2}{2Cn\omega}
+
\frac{1-\omega}{\omega}\frac{m}{n}\mathcal E_m^*
+
\frac{\log(4/\delta)}{n}
\right\}.
\end{equation}
\end{theorem}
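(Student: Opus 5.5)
The plan follows the proof of Theorem~\ref{thm:genbound}, with two changes. The global uniform deviation bound for term $(I)$ is replaced by a localized (Bartlett--Bousquet--Mendelson type) ratio inequality over $\mathcal G(r)$. The deterministic bound on $(II)$ is kept unchanged. We first fix the working class. By Corollary~\ref{cor:det-radius} applied with $\delta/2$ in place of $\delta$ (which is exactly \eqref{eq:Lambda0}), there is an event $\mathcal A_1$ of probability at least $1-\delta/2$ on which $\|\hat f\|_K\le\Lambda_{0,\delta}$. Together with the intercept envelope, this gives $\hat f\in\mathcal F(\Lambda_{0,\delta},M_0)$. Since $\Lambda_{0,\delta}$ is nonrandom, the class $\mathcal G(r)$ does not depend on the sample. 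Consequently $\hat g:=\phi(yf_{\hat f})-\phi(yf_\phi^*)$ is an element of $\bigcup_r\mathcal G(r)$, and every excess-loss function is bounded by $b:=2(1+M_0+\kappa\Lambda_{0,\delta})$.

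\textbf{Step 1 (empirical excess risk).} As in the bound on $(II)$ in Theorem~\ref{thm:genbound}, optimality of $\hat f$ together with $\hat R_n^{\mathrm{ED}}\ge\omega\hat R_\phi$ yields, deterministically,
\[
P_n\hat g=\hat R_\phi(\hat f)-\hat R_\phi(f_\phi^*)\le \frac{\|f_\phi^*\|_K^2}{2Cn\omega}+\frac{1-\omega}{\omega}\frac mn\mathcal E_m^*=:\Delta_n .
\]

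\textbf{Step 2 (localized concentration).} We invoke the standard localized Talagrand/Bousquet inequality, applied through the peeling or star-hull argument of \citet{bartlett2002rademacher}, to the class $\{g\in\mathcal G(r)\}$ for all $r\ge r_n^*$ simultaneously. The Bernstein condition \eqref{eq:bernstein} gives $\mathrm{Var}(g)\le A_\phi\PP g\le A_\phi r$. The fixed-point condition \eqref{eq:fixedpoint} then converts $\Psi_n(r)$ into $r/(16A_\phi)$; the constant $16$ is what leaves room for the symmetrization factor $2$ and the peeling slack. The conclusion we aim for is: on an event $\mathcal A_2$ of probability at least $1-\delta/2$, uniformly over $g$ in the class,
\[
\PP g\le 2P_n g + c_1\Bigl(r_n^* + (A_\phi+b)\frac{\log(4/\delta)}{n}\Bigr).
\]
This is the usual form $\PP g\le \frac{K}{K-1}P_ng+c\,r^*/A_\phi+\dots$, obtained with $K=2$. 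I would state this as a lemma and import it with a citation, absorbing $A_\phi$ and $b$ into the constant. Strictly, the constant then depends on $A_\phi$ and $b$ rather than being universal. I would flag this, or normalize by assuming $A_\phi\ge1$, $b\lesssim A_\phi$, so that the statement's ``universal $c$'' holds.

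\textbf{Step 3 (combine).} On $\mathcal A_1\cap\mathcal A_2$, which has probability at least $1-\delta$, we apply the Step~2 inequality to $\hat g$ and insert Step~1. This gives $R_\phi(\hat f)-R_\phi(f_\phi^*)=\PP\hat g\le 2\Delta_n+c_1 r_n^*+c_1\log(4/\delta)/n$, which is \eqref{eq:localizedbound} with $c=\max(2,c_1)$.

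The main obstacle is Step~2. The difficulty is that the ratio-type inequality must hold uniformly over the random radius $r=\PP\hat g$. This requires $\Psi_n$ to be sub-root, or to be replaced by its star-hull, so that the fixed-point condition holds for all $r\ge r_n^*$. It also requires the peeling constants to be tracked so that only $\log(4/\delta)/n$ survives, with no $\sqrt{\cdot}$ term. I would handle this by replacing $\mathcal G(r)$ with its star-hull around $0$. This is legitimate because the Bernstein condition and the sup-norm bound are preserved under scaling by $[0,1]$. I would then apply Theorem~3.3 of Bartlett, Bousquet and Mendelson directly.
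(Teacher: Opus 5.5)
Your outline matches the paper's proof step for step. You take the radius event from Corollary~\ref{cor:det-radius} at level $\delta/2$, which reproduces $\Lambda_{0,\delta}$ in \eqref{eq:Lambda0} exactly. You take the deterministic bound on $\PP_n\hat g$ from the proof of Theorem~\ref{thm:genbound}, invoke Theorem~3.3 of \citet{bartlett2005local} at level $\delta/2$, and finish with a union bound. Your caveat about the constant is also right. The $\log/n$ term in that theorem carries the loss range and the Bernstein constant. Moreover, with the $1/(16A_\phi)$ normalization in \eqref{eq:fixedpoint}, symmetrization only gives $\E\sup_{g\in\mathcal G(r)}(\PP g-\PP_n g)\le r/(8A_\phi)$. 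That is small relative to $r$ only when $A_\phi$ is bounded below, so a normalization like $A_\phi\ge 1$ is genuinely needed for the argument to close.

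The step that would fail is your repair of the sub-root issue.
\begin{itemize}
\item Here the localization is by $\PP g$, which is homogeneous of degree one under scaling. The lemma in \citet{bartlett2005local} that makes localized complexities of star-shaped classes sub-root requires $T(\alpha g)\le\alpha^2T(g)$, as for $T(g)=\PP g^2$, so it does not apply. Star-hulling only makes the hull's localized complexity $\Psi^\star$ satisfy that $\Psi^\star(r)/r$ is nonincreasing.
\item \eqref{eq:fixedpoint} is a hypothesis on $\Psi_n$ over $\mathcal G(r)$, not over the hull. Transferring it by peeling produces $\sum_k 2^{-k}\Psi_n(2^{k+1}r)$, which a bound linear in $r$ does not make summable.
\item More basically, a sub-root $\psi$ with fixed point $r^*$ satisfies $\psi(r)\le\sqrt{r r^*}$ for $r\ge r^*$. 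By contrast, \eqref{eq:fixedpoint} allows $\Psi_n$ to grow linearly in $r$ up to the envelope $b$. So \eqref{eq:fixedpoint} alone cannot supply the sub-root majorant with a fixed point of order $r_n^*$ that Theorem~3.3 requires. The paper's one-line citation has the same defect.
\end{itemize}

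What does work with \eqref{eq:fixedpoint} exactly as stated is a direct shell argument with growing confidence levels.
\begin{itemize}
\item Take $\delta_j=2^j\delta_0$ with $\delta_0\ge r_n^*$.
\item On each $\mathcal G(\delta_j)$, apply Bousquet's inequality at confidence level $e^{-t2^j}$. Use $\E\sup_{g\in\mathcal G(\delta_j)}(\PP g-\PP_n g)\le 2\Psi_n(\delta_j)\le\delta_j/(8A_\phi)$, together with the variance bound $A_\phi\delta_j$ from \eqref{eq:bernstein}.
\item Because $t_j=t2^j$ grows in proportion to $\delta_j$, every deviation term is at most a small multiple of $\delta_j$ once $\delta_0\gtrsim(A_\phi+b)\,t/n$. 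The failure probabilities then sum to $O(e^{-2t})$.
\item For $A_\phi\ge1$ this yields $\PP g\le\max\{\delta_0,\,2\PP_n g\}$ simultaneously for all $g$ in the class.
\end{itemize}
Combined with your Step~1 and $t=\tfrac12\log(4/\delta)$, this gives \eqref{eq:localizedbound} with no sub-root hypothesis. The constant depends on $A_\phi$ and $b$, as you anticipated.
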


\begin{proof}
From the optimization argument in the proof of Theorem~\ref{thm:genbound},
\begin{equation}
\label{eq:localized-erm}
\hat R_\phi(\hat f)-\hat R_\phi(f_\phi^*)
\le
\frac{\|f_\phi^*\|_K^2}{2Cn\omega}
+
\frac{1-\omega}{\omega}\frac{m}{n}\mathcal E_m^*.
\end{equation}
By Lemma~\ref{lem:normbound} and Hoeffding's inequality on $\hat R_\phi(f_\phi^*)$ (together with the deterministic bound $\mathcal E_m^*\le\bar{\mathcal{E}}_m$), we have with
probability at least $1-\delta/2$ that $\|\hat f\|_K\le \Lambda_{0,\delta}$.
Under the intercept envelope, $\hat f$ therefore belongs to the fixed class $\mathcal F(\Lambda_{0,\delta},M_0)$. Conditional on this radius event, apply a localized empirical-process inequality for bounded excess-loss classes under a Bernstein condition, such as Theorem~3.3 of \citet{bartlett2005local}, at confidence level $\delta/2$. This yields a second event of probability at least $1-\delta/2$ on which
\[
\PP\ell_{\hat f}-\PP\ell_{f_\phi^*}
\le
c\left\{
r_n^*
+
\bigl(\PP_n\ell_{\hat f}-\PP_n\ell_{f_\phi^*}\bigr)
+
\frac{\log(4/\delta)}{n}
\right\},
\]
where $\PP_n g=n^{-1}\sum_{i=1}^n g(\XX_i,Y_i)$ and $\ell_f(\xx,y)=\phi(yf(\xx))$,
and $c>0$ is a universal constant. Intersecting the two events and using a union bound gives probability at least $1-\delta$. Substituting \eqref{eq:localized-erm} proves
\eqref{eq:localizedbound}. \qedhere
\end{proof}

Theorem~\ref{thm:localized} is the sharper finite-sample result for the hinge-based method. It shows that benchmarking affects the fast-rate bound through two channels only: the deterministic effective radius $\Lambda_0$, which enters the fixed point $r_n^*$, and the additive benchmark-bias term $((1-\omega)/\omega)(m/n)\mathcal E_m^*$. In particular, if the benchmark is good enough to shrink $\Lambda_0$ substantially and if the Bernstein condition holds, then EDSVM inherits a smaller local complexity than standard C-SVM.

The estimate $\Psi_n(r)\lesssim (\kappa\Lambda_{0,\delta}+M_0)\sqrt{r/n}$ follows by bounding the
supremum of $\Psi_n$ over the localized class $\mathcal G(r)$ using the affine-class Rademacher bound
\eqref{eq:radball-intercept} applied on the fixed class $\mathcal F(\Lambda_{0,\delta},M_0)$, together with the
standard peeling argument that extracts the $\sqrt{r}$ factor from the variance condition
\eqref{eq:bernstein}; see \citet{bartlett2005local}, Lemma~3.2. Substituting into the
fixed-point condition $\Psi_n(r)\le r/(16A_\phi)$ then yields
\[
r_n^*\lesssim \frac{A_\phi (\kappa\Lambda_{0,\delta}+M_0)^2}{n}.
\]
Hence, under \eqref{eq:bernstein},
\begin{equation}
\label{eq:localized-simple}
R_\phi(\hat f)-R_\phi(f_\phi^*)
\lesssim
\frac{A_\phi(\kappa\Lambda_{0, \delta}+M_0)^2}{n}
+
\frac{\|f_\phi^*\|_K^2}{2Cn\omega}
+
\frac{1-\omega}{\omega}\frac{m}{n}\mathcal E_m^*
+
\frac{\log(4/\delta)}{n}.
\end{equation}
This improves the global $O\bigl((\kappa\Lambda_{n,\delta/4}^{\circ}+M_0)/\sqrt n\bigr)$ term to the fast-rate scale $O\bigl(A_\phi(\kappa\Lambda_{0,\delta}+M_0)^2/n\bigr)$, up to universal constants.\\

\noindent\textbf{Statistical interpretation.} The localized bound \eqref{eq:localizedbound} upgrades the global $O(1/\sqrt{n})$ rate to the fast scale $O(A_\phi(\kappa\Lambda_{0,\delta}+M_0)^2/n)$ under a low-noise Bernstein condition. The benchmark enters only through the effective radius $\Lambda_{0,\delta}$ and the additive bias $((1-\omega)/\omega)(m/n)\mathcal{E}_m^*$. When the benchmark is sufficiently accurate, $\Lambda_{0,\delta}$ is smaller than its standard SVM analogue, so C-EDSVM inherits a smaller local complexity---meaning faster rates of learning---than plain C-SVM in the low-noise regime.

\subsection{Excess classification risk for C-EDSVM}
\label{sec:excessrisk}

The surrogate bound translates into a $0$--$1$ bound through classification calibration. Since the hinge loss is classification-calibrated, there exists a nondecreasing function $\psi:[0,\infty)\to[0,\infty)$ with $\psi(0)=0$ such that
\begin{equation}
\label{eq:calibtransform}
R_{0\text{-}1}(f)-R^*\le \psi^{-1}\!\bigl(R_\phi(f)-R_\phi^*\bigr),
\end{equation}
where $R_\phi^*=\inf_f R_\phi(f)$.
Under a linear calibration regime (e.g., a Tsybakov margin condition on $\PP$), the excess
classification risk inherits the same rate as the surrogate excess risk.

\begin{theorem}[Excess $0$--$1$ risk for C-EDSVM]
\label{thm:excessrisk}
Assume the conditions of Theorem~\ref{thm:localized}.
Suppose additionally that the chosen comparator satisfies $R_\phi(f_\phi^*)=R_\phi^*$, and that there exists $c_\phi>0$ and $u_0>0$ such that
\[
\psi(t)\ge c_\phi\,t \qquad\text{for all } 0\le t\le u_0,
\]
where $\psi$ is the calibration function in \eqref{eq:calibtransform}. Assume moreover that
\begin{equation}
\label{eq:small-excess-hinge}
c\left\{
r_n^*
+
\frac{\|f_\phi^*\|_K^2}{2Cn\omega}
+
\frac{1-\omega}{\omega}\frac{m}{n}\mathcal E_m^*
+
\frac{\log(4/\delta)}{n}
\right\}\le u_0,
\end{equation}
where $c$ is the constant from Theorem~\ref{thm:localized} (for example, this holds for all sufficiently large $n$ whenever the right-hand side tends to zero).
Then, with probability at least $1-\delta$,
\begin{equation}
\label{eq:excessrisk}
R_{0\text{-}1}(\hat f)-R^*
\le
\frac{c}{c_\phi}\left\{
r_n^*
+
\frac{\|f_\phi^*\|_K^2}{2Cn\omega}
+
\frac{1-\omega}{\omega}\frac{m}{n}\mathcal E_m^*
+
\frac{\log(4/\delta)}{n}
\right\},
\end{equation}
where $c$ is the universal constant from Theorem~\ref{thm:localized}.
\end{theorem}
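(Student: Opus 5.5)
The plan is to combine Theorem~\ref{thm:localized} with the calibration transform \eqref{eq:calibtransform}, using the linear lower bound on $\psi$ near the origin to invert it explicitly. Let $\Gamma_n$ denote the bracketed quantity on the right-hand side of \eqref{eq:localizedbound}. We work on the event $\mathcal A$ of probability at least $1-\delta$ furnished by Theorem~\ref{thm:localized}. On $\mathcal A$ we have $R_\phi(\hat f)-R_\phi(f_\phi^*)\le c\,\Gamma_n$. Since the comparator is assumed to satisfy $R_\phi(f_\phi^*)=R_\phi^*$, this reads
\[
t:=R_\phi(\hat f)-R_\phi^*\le c\,\Gamma_n\le u_0,
\]
where the last inequality is exactly assumption \eqref{eq:small-excess-hinge}. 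Note also that $t\ge 0$ because $R_\phi^*$ is the infimum.

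The next step is to invert $\psi$ on $[0,u_0]$. I interpret $\psi^{-1}$ in \eqref{eq:calibtransform} as the generalized inverse
\[
\psi^{-1}(t)=\inf\{s\ge 0:\psi(s)\ge t\}.
\]
Equivalently, the calibration statement can be read in its usual Bartlett--Jordan--McAuliffe form $\psi(R_{0\text{-}1}(f)-R^*)\le R_\phi(f)-R_\phi^*$, which is the form I would actually use. Set $e:=R_{0\text{-}1}(\hat f)-R^*\ge 0$. I then split into two cases.

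\textbf{Case 1: $e\le u_0$.} The linear lower bound applies, giving $c_\phi e\le\psi(e)\le t\le c\,\Gamma_n$. Hence $e\le (c/c_\phi)\Gamma_n$, which is \eqref{eq:excessrisk}.

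\textbf{Case 2: $e>u_0$.} Monotonicity of $\psi$ gives $\psi(e)\ge\psi(u_0)\ge c_\phi u_0$, so $t\ge c_\phi u_0$. This is only a contradiction with $t\le u_0$ if $c_\phi\ge 1$, so Case 2 is the main obstacle: the literal hypotheses do not exclude it. To handle it I would use the form \eqref{eq:calibtransform} directly with the generalized inverse. For $t\le c_\phi u_0$, the point $s=t/c_\phi\le u_0$ satisfies $\psi(s)\ge c_\phi s=t$, so $\psi^{-1}(t)\le t/c_\phi$. This yields $e\le t/c_\phi\le (c/c_\phi)\Gamma_n$.

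That argument requires $t\le c_\phi u_0$. To guarantee it, I would either strengthen \eqref{eq:small-excess-hinge} to $c\Gamma_n\le c_\phi u_0$, or note that for the hinge loss $\psi(\theta)=\theta$, so $c_\phi\le 1$ may be taken and the condition $t\le u_0$ is then sufficient after replacing $u_0$ by $c_\phi u_0$. I would state this reduction explicitly as the one delicate point. With it in place, no further probabilistic events are needed, so the bound \eqref{eq:excessrisk} holds on $\mathcal A$, with probability at least $1-\delta$.
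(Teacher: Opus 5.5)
Your route is the paper's: on the event of Theorem~\ref{thm:localized}, comparator optimality and \eqref{eq:small-excess-hinge} give $0\le t:=R_\phi(\hat f)-R_\phi^*\le c\,\Gamma_n\le u_0$, and then the linear lower bound on $\psi$ is inverted. The paper's proof simply asserts that $t\in[0,u_0]$ yields $\psi^{-1}(t)\le t/c_\phi$. You are right that, for a merely nondecreasing $\psi$, this only follows when $t\le c_\phi u_0$. So when $c_\phi<1$ the window $c_\phi u_0<t\le u_0$ is left open by the written argument, both yours and the paper's.

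The gap in your proposal is that you do not close this window under the stated hypotheses. Replacing \eqref{eq:small-excess-hinge} by $c\,\Gamma_n\le c_\phi u_0$ proves a weaker theorem. The hinge remark as phrased (``after replacing $u_0$ by $c_\phi u_0$'') is not an argument for the stated result: if $\psi(\theta)=\theta$ is available, the window disappears and no smallness condition is needed at all. Also, in your Case~2 with $c_\phi=1$ you get $t=u_0$, not a contradiction.

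The missing ingredient is convexity of the calibration function. The $\psi$ in \eqref{eq:calibtransform} is the Bartlett--Jordan--McAuliffe $\psi$-transform, which is convex with $\psi(0)=0$.
\begin{itemize}
\item For $0<a<b$, convexity gives $\psi(a)\le (a/b)\,\psi(b)$, so $s\mapsto\psi(s)/s$ is nondecreasing.
\item Hence $\psi(s)\ge(\psi(u_0)/u_0)\,s\ge c_\phi s$ for every $s\ge u_0$ as well, so the linear lower bound holds on all of $[0,\infty)$.
\item With $e:=R_{0\text{-}1}(\hat f)-R^*$ you get $c_\phi e\le\psi(e)\le t\le c\,\Gamma_n$ directly, with no case split. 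In fact \eqref{eq:small-excess-hinge} becomes superfluous.
\end{itemize}
Convexity together with $\psi(s)\ge c_\phi s>0$ also makes $\psi$ continuous and strictly increasing. Then the two forms you switch between, $\psi(e)\le t$ and $e\le\psi^{-1}(t)$, coincide; they are not interchangeable when $\psi$ has flat stretches.

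Alternatively, use the hinge-specific fact $\psi(\theta)=\theta$, which the paper itself invokes in the proof of Corollary~\ref{cor:tsybakov-fast}. The hypothesis $\psi(t)\ge c_\phi t$ on $(0,u_0]$ then forces $c_\phi\le 1$, and $e\le t\le c\,\Gamma_n\le (c/c_\phi)\,\Gamma_n$ follows immediately.
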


\begin{proof}
Since $R_\phi(f_\phi^*)=R_\phi^*$ by assumption,
\[
R_\phi(\hat f)-R_\phi^*
=
R_\phi(\hat f)-R_\phi(f_\phi^*).
\]
Theorem~\ref{thm:localized} therefore gives, with probability at least $1-\delta$,
\[
R_\phi(\hat f)-R_\phi^*
\le
c\left\{
r_n^*
+
\frac{\|f_\phi^*\|_K^2}{2Cn\omega}
+
\frac{1-\omega}{\omega}\frac{m}{n}\mathcal E_m^*
+
\frac{\log(4/\delta)}{n}
\right\}.
\]
By \eqref{eq:small-excess-hinge}, the surrogate excess bound lies in the interval $[0,u_0]$, so the calibration inequality \eqref{eq:calibtransform} and the lower bound $\psi(t)\ge c_\phi t$ imply
\[
R_{0\text{-}1}(\hat f)-R^*
\le
\psi^{-1}\!\bigl(R_\phi(\hat f)-R_\phi^*\bigr)
\le
\frac{1}{c_\phi}\bigl(R_\phi(\hat f)-R_\phi^*\bigr).
\]
Combining the two displays yields \eqref{eq:excessrisk}. \qedhere
\end{proof}

\begin{remark}[On the universality assumption]
\label{rem:universality}
A sufficient condition for $R_\phi(f_\phi^*)=R_\phi^*$ is that the hypothesis class be rich enough to approximate the population hinge-risk minimizer arbitrarily well; universal RKHSs such as the Gaussian RBF kernel on compact $\mathcal X$ are a standard example. For non-universal kernels, the same bound holds with $R_\phi^*$ replaced by the in-class optimum $R_\phi(f_\phi^*)$, at the cost of the corresponding approximation error.
\end{remark}

\subsection{Fast classification rates under Tsybakov noise}
\label{sec:tsybakov}

We now record a standard regime under which the localized bound yields a fast excess classification-risk rate. Let
\[
\eta(\xx):=\PP(Y=1\mid \XX=\xx).
\]
\begin{assumption}[Tsybakov noise condition]
\label{ass:tsybakov}
The distribution satisfies a Tsybakov noise (or margin) condition with exponent $q\ge 0$ if there exists $C_q>0$ such that
\begin{equation}
\label{eq:tsybakov}
\PP\!\left(0<\bigl|2\eta(\XX)-1\bigr|\le t\right)\le C_q t^q,
\qquad t>0.
\end{equation}
\end{assumption}
This condition controls the mass of points near the Bayes decision boundary. In standard localized empirical-process arguments for convex classification losses, Assumption~\ref{ass:tsybakov} is a standard sufficient condition for the Bernstein-type low-noise regime used in localized analyses for the excess loss on suitably bounded function classes; see, e.g., \citet{bartlett2006convexity}. 

The next corollary states the resulting fast classification rate for C-EDSVM on the localized RKHS ball. It should be read as a consequence of Theorems~\ref{thm:localized} and~\ref{thm:excessrisk} once the Bernstein condition is available under \eqref{eq:tsybakov}.

\begin{corollary}[Fast excess classification risk under Tsybakov noise]
\label{cor:tsybakov-fast}
Assume that the conditions of Theorem~\ref{thm:localized} hold on the deterministic ball
\[
\mathcal F(\Lambda_{0,\delta},M_0),
\]
that the chosen comparator satisfies $R_\phi(f_\phi^*)=R_\phi^*$,
and suppose in addition that the distribution satisfies the Tsybakov noise condition \eqref{eq:tsybakov}. If the localized complexity satisfies
\[
\Psi_n(r)\lesssim (\kappa \Lambda_{0,\delta}+M_0)\sqrt{\frac{r}{n}},
\]
so that
\[
r_n^*\lesssim \frac{A_\phi (\kappa\Lambda_{0,\delta}+M_0)^2}{n},
\]
then there exists a universal constant $C>0$ such that, with probability at least $1-\delta$,
\begin{equation}
\label{eq:tsybakov-fast-rate}
R_{0\text{-}1}(\hat f)-R^*
\le
C\left\{
\frac{A_\phi (\kappa\Lambda_{0,\delta}+M_0)^2}{n}
+
\frac{\|f_\phi^*\|_K^2}{2Cn\omega}
+
\frac{1-\omega}{\omega}\frac{m}{n}\mathcal E_m^*
+
\frac{\log(4/\delta)}{n}
\right\}.
\end{equation}
\end{corollary}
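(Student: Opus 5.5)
The corollary follows by chaining Theorem~\ref{thm:localized} and Theorem~\ref{thm:excessrisk}, with the fixed point $r_n^*$ made explicit through the assumed complexity estimate. I would proceed in three steps.

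\textbf{Step 1: make $r_n^*$ explicit.} Set $\Lambda_0:=\Lambda_{0,\delta}$. I check that a choice of order $r_n^*\asymp A_\phi(\kappa\Lambda_0+M_0)^2/n$ satisfies the fixed-point condition \eqref{eq:fixedpoint}. Suppose $\Psi_n(r)\le c_1(\kappa\Lambda_0+M_0)\sqrt{r/n}$. Then the requirement $c_1(\kappa\Lambda_0+M_0)\sqrt{r/n}\le r/(16A_\phi)$ is equivalent to
\[
r\ge (16c_1)^2A_\phi^2(\kappa\Lambda_0+M_0)^2/n.
\]
The resulting threshold carries $A_\phi^2$ rather than $A_\phi$. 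Since $A_\phi$ is a fixed problem constant, I would either absorb the extra factor into the universal constant or note that the corollary takes the displayed bound on $r_n^*$ as a hypothesis. So I simply take $r_n^*$ equal to the hypothesised bound, up to constants. Because the condition in \eqref{eq:fixedpoint} is monotone in $r$, this choice is admissible.

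\textbf{Step 2: surrogate bound.} Theorem~\ref{thm:localized} applies on the deterministic ball $\mathcal F(\Lambda_{0,\delta},M_0)$; its Bernstein condition is granted by hypothesis, and the paper notes this is supplied in the Tsybakov regime. Substituting the Step 1 value of $r_n^*$ into \eqref{eq:localizedbound} gives, with probability at least $1-\delta$, a surrogate excess risk bounded by $c$ times the bracket in \eqref{eq:tsybakov-fast-rate}. This is exactly \eqref{eq:localized-simple}.

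\textbf{Step 3: pass to $0$--$1$ risk.} I invoke Theorem~\ref{thm:excessrisk}, using $R_\phi(f_\phi^*)=R_\phi^*$. This step needs two inputs.
\begin{enumerate}[(i)]
\item A linear lower bound $\psi(t)\ge c_\phi t$ near $0$. For the hinge loss the calibration function is $\psi(t)=t$ (Bartlett, Jordan and McAuliffe), and indeed $R_{0\text{-}1}(f)-R^*\le R_\phi(f)-R_\phi^*$ holds globally. So $c_\phi=1$ and $u_0=\infty$, and the smallness condition \eqref{eq:small-excess-hinge} is vacuous.
\item The Tsybakov exponent does not degrade the rate here, because the hinge comparison is already linear. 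The Tsybakov condition enters only through the Bernstein condition used in Step 2.
\end{enumerate}
Combining the two inputs yields \eqref{eq:tsybakov-fast-rate} with $C=c\,c_1'$, where $c_1'$ collects the constants from Step 1.

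\textbf{Main obstacle.} The delicate point is not the chaining but justifying the Bernstein condition on the whole ball $\mathcal F(\Lambda_{0,\delta},M_0)$ from \eqref{eq:tsybakov}. For the hinge loss, Bernstein with exponent one typically requires $q=\infty$ (a Massart-type condition), or else it holds only in a weakened form $\PP g^2\lesssim(\PP g)^{q/(q+1)}$, which yields slower rates. I would handle this by reading the corollary as the paper phrases it: the Bernstein condition is assumed, as part of the ``conditions of Theorem~\ref{thm:localized}'', and Tsybakov noise is cited as its motivating source. Under that reading the proof is a direct composition of the two theorems with the explicit fixed-point substitution, and the only care needed is in tracking constants and the confidence level $\delta$ through both applications.
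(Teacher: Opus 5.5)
Your proposal is correct and follows the paper's proof: apply Theorem~\ref{thm:localized} on the deterministic ball and substitute the bound on $r_n^*$. Then pass to the $0$--$1$ risk via the linear hinge comparison $R_{0\text{-}1}(\hat f)-R^*\le R_\phi(\hat f)-R_\phi^*$ together with $R_\phi(f_\phi^*)=R_\phi^*$; the paper uses this comparison directly rather than routing through Theorem~\ref{thm:excessrisk}, which is immaterial.

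You are right that solving the fixed-point inequality actually gives a threshold of order $A_\phi^2(\kappa\Lambda_{0,\delta}+M_0)^2/n$, a point the paper's proof glosses over. The sound fix is the one you adopt: take the displayed bound on $r_n^*$ as a hypothesis. Absorbing the extra factor into the constant is not an option, because $A_\phi$ is problem-dependent and the constant is claimed to be universal.
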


\begin{proof}
By Theorem~\ref{thm:localized},
\[
R_\phi(\hat f)-R_\phi(f_\phi^*)
\le
c\left\{
r_n^*
+
\frac{\|f_\phi^*\|_K^2}{2Cn\omega}
+
\frac{1-\omega}{\omega}\frac{m}{n}\mathcal E_m^*
+
\frac{\log(4/\delta)}{n}
\right\}
\]
with probability at least $1-\delta$. Under the displayed bound on $\Psi_n(r)$, the fixed-point condition implies
\[
r_n^*\lesssim \frac{A_\phi (\kappa\Lambda_{0,\delta}+M_0)^2}{n}.
\]
Moreover, for the hinge loss the calibration transform is linear, i.e.\ $\psi(t)=t$, so $R_{0\text{-}1}(\hat f)-R^* \le R_\phi(\hat f)-R_\phi^*$; see \citet{bartlett2006convexity}, Theorem~2.
Since $R_\phi(f_\phi^*)=R_\phi^*$ by assumption,
\[
R_\phi(\hat f)-R_\phi^*
=
R_\phi(\hat f)-R_\phi(f_\phi^*),
\]
and the right-hand side is bounded by the localized surrogate bound above.
Substituting completes the proof.
\end{proof}

Under the Tsybakov condition, the localized analysis yields a fast classification scale of order $O\!\bigl(A_\phi(\kappa\Lambda_{0,\delta}+M_0)^2/n\bigr)$, up to constants, rather than the global scale $O\!\bigl((\kappa\Lambda_{n,\delta/4}^{\circ}+M_0)/\sqrt n\bigr)$. The benchmark is beneficial when it reduces the deterministic local radius $\Lambda_{0,\delta}$ enough to offset the additive benchmark-bias term
\[
\frac{1-\omega}{\omega}\frac{m}{n}\mathcal E_m^*.
\]
Thus, in low-noise problems with a sufficiently accurate elite benchmark, C-EDSVM can improve both the local complexity and the resulting classification error bound relative to standard C-SVM.

\subsection{Generalization analysis of LS-EDSVM}
\label{sec:ls-theory}

We now treat the least-squares variant. The main structural difference is that the primal objective directly controls both the RKHS norm and the empirical squared-slack risk, which leads to a joint radius-risk bound.  The analysis mirrors that of Section~\ref{sec:normbound}--\ref{sec:vs-svm}
with $\phi$ replaced by the squared hinge loss. The quantities
$\mathcal{E}_m^{*,\mathrm{LS}}$, $\Gamma_n^{\mathrm{LS}}$,
$\Gamma_{n,\delta}^{\mathrm{LS}}$, $\Gamma_n^{\mathrm{LS,SVM}}$, and
$B_{\mathrm{LS}}$ are the exact LS analogues of $\mathcal{E}_m^*$,
$\Lambda_n^2$, $\Lambda_{n,\delta}^{\circ\,2}$, $\Lambda_n^{\mathrm{SVM}}$,
and $B_\phi$ respectively; their definitions below follow the same
template with $\hat R_\phi$ replaced by $\hat R_{\mathrm{LS}}$.
The only structurally new quantity is the local Lipschitz constant $L_{n,\delta}^{\mathrm{LS}}$, which is needed because the squared hinge loss is not globally $1$-Lipschitz and so the contraction argument requires a radius-dependent bound. As in the hinge-loss analysis, we retain the unpenalized intercept but work under the same deterministic intercept envelope $|\beta_0|\le M_0$.

Let
\[
R_{\mathrm{LS}}(f)=\E\bigl[(1-Yf(\XX))_+^2\bigr],\qquad
\hat R_{\mathrm{LS}}(f)=\frac1n\sum_{i=1}^n (1-y_if(\xx_i))_+^2,
\]
and define
\[
f_{\mathrm{LS}}^*\in \argmin_{f\in\mathcal F} R_{\mathrm{LS}}(f).
\]
The empirical LS-EDSVM risk is
\begin{equation}
\label{eq:empirical-lsedsvm}
\hat R_n^{\mathrm{LS,ED}}(f)
=
\omega\,\hat R_{\mathrm{LS}}(f)
+
\frac{1-\omega}{n}\sum_{i=1}^m \bigl(\xi_i(f)-\xi_i^*\bigr)^2.
\end{equation}
The LS benchmark-quality functional is
\begin{equation}
\label{eq:benchqual-ls}
\mathcal E_m^{*,\mathrm{LS}}
=
\frac1m\sum_{i=1}^m \bigl(\xi_i(f_{\mathrm{LS}}^*)-\xi_i^*\bigr)^2.
\end{equation}
Set
\begin{equation}
\label{eq:Gamma-ls}
\Gamma_n^{\mathrm{LS}}
=
\|f_{\mathrm{LS}}^*\|_K^2
+
2Cn\omega\,\hat R_{\mathrm{LS}}(f_{\mathrm{LS}}^*)
+
2Cm(1-\omega)\mathcal E_m^{*,\mathrm{LS}},
\end{equation}
and define the deterministic high-probability counterpart, in which $\bar{\mathcal{E}}_m^{\mathrm{LS}}$ is any nonrandom constant satisfying $\mathcal E_m^{*,\mathrm{LS}}\le \bar{\mathcal{E}}_m^{\mathrm{LS}}$ almost surely (e.g.\ $\bar{\mathcal{E}}_m^{\mathrm{LS}}=\frac{1}{m}\sum_{i=1}^m(B_{\mathrm{LS}}+\xi_i^*)^2$):
\begin{equation}
\label{eq:Gamma-ls-det}
\Gamma_{n,\delta}^{\mathrm{LS}}
=
\|f_{\mathrm{LS}}^*\|_K^2
+
2Cn\omega\left\{
R_{\mathrm{LS}}(f_{\mathrm{LS}}^*)
+
B_{\mathrm{LS}}\sqrt{\frac{\log(1/\delta)}{2n}}
\right\}
+
2Cm(1-\omega)\bar{\mathcal{E}}_m^{\mathrm{LS}}.
\end{equation}

\subsubsection{Joint norm-risk control}

\begin{lemma}[Joint norm-risk bound for LS-EDSVM]
\label{lem:ls-normbound}
Let $\hat f_{\mathrm{LS}}$ minimize \eqref{eq:lSEDPrimal}. Then
\begin{equation}
\label{eq:ls-joint-bound}
\|\hat f_{\mathrm{LS}}\|_K^2 + 2Cn\omega\,\hat R_{\mathrm{LS}}(\hat f_{\mathrm{LS}})
\le
\Gamma_n^{\mathrm{LS}}.
\end{equation}
In particular,
\[
\|\hat f_{\mathrm{LS}}\|_K\le \sqrt{\Gamma_n^{\mathrm{LS}}},\qquad
\hat R_{\mathrm{LS}}(\hat f_{\mathrm{LS}})\le \frac{\Gamma_n^{\mathrm{LS}}}{2Cn\omega}.
\]
\end{lemma}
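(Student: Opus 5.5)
The argument is a direct comparison of the LS-EDSVM primal objective at $\hat f_{\mathrm{LS}}$ with its value at the comparator $f_{\mathrm{LS}}^*$, in the same way as the proof of Lemma~\ref{lem:normbound}. The one difference is that the empirical squared-slack risk of $\hat f_{\mathrm{LS}}$ is kept on the left-hand side rather than discarded.

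\textbf{Step 1: recast the primal.} At any feasible point of \eqref{eq:lSEDPrimal}, for fixed $(\beta_0,h)$ the optimal slacks are the smallest feasible ones, $\xi_i=(1-y_if(\xx_i))_+=\xi_i(f)$. This requires monotonicity of the slack penalty on the feasible set. For non-elite points $\omega\xi^2$ is increasing on $\xi\ge 0$, so this is clear. For elite points the unconstrained minimizer of $g_i$ is $\bar\xi_i=(1-\omega)\xi_i^*$, so monotonicity needs a short check.

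I expect this reduction to be the one delicate step. The strategy is to avoid it: I will work with the minimized primal objective directly, writing $\hat f_{\mathrm{LS}}$ as the $(\beta_0,h)$-part of the minimizer with slacks $\hat\xi_i$. Then
\[
\frac12\|\hat f_{\mathrm{LS}}\|_K^2 + C\Bigl[\omega\sum_{i}\hat\xi_i^2+(1-\omega)\sum_{i\le m}(\hat\xi_i-\xi_i^*)^2\Bigr]
\]
is at most the objective evaluated at the feasible point $(f_{\mathrm{LS}}^*,\xi(f_{\mathrm{LS}}^*))$. By \eqref{eq:empirical-lsedsvm} and \eqref{eq:benchqual-ls}, the right-hand side equals
\[
\frac12\|f_{\mathrm{LS}}^*\|_K^2 + Cn\omega\hat R_{\mathrm{LS}}(f_{\mathrm{LS}}^*) + Cm(1-\omega)\mathcal E_m^{*,\mathrm{LS}} = \tfrac12\Gamma_n^{\mathrm{LS}}.
\]

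\textbf{Step 2: lower-bound the left side.} Feasibility gives $\hat\xi_i\ge (1-y_i\hat f_{\mathrm{LS}}(\xx_i))_+\ge 0$. Hence $\hat\xi_i^2\ge(1-y_i\hat f_{\mathrm{LS}}(\xx_i))_+^2$, and so $\omega\sum_i\hat\xi_i^2\ge n\omega\hat R_{\mathrm{LS}}(\hat f_{\mathrm{LS}})$. This holds whether or not the slacks are tight, so the monotonicity issue from Step 1 disappears.

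I then drop the nonnegative elite deviation term $(1-\omega)\sum_{i\le m}(\hat\xi_i-\xi_i^*)^2$. Multiplying by $2$ gives \eqref{eq:ls-joint-bound}.

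\textbf{Step 3: read off the consequences.} Each summand on the left of \eqref{eq:ls-joint-bound} is nonnegative. Discarding one or the other yields the two displayed inequalities, after taking a square root and dividing by $2Cn\omega$ (valid for $\omega>0$).

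The only point needing care is the bookkeeping convention that $\|\cdot\|_K$ penalizes only $h$ and that the intercept is unpenalized. This is consistent with how the primal is written in kernelized form, so the comparison in Step 1 is legitimate for any comparator $f_{\mathrm{LS}}^*=\beta_0^*+h^*$ with $h^*\in\mathcal H_K$.
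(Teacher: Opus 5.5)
Your proof is correct and follows the same comparison-of-objectives argument as the paper. You evaluate the objective at the feasible competitor $(f_{\mathrm{LS}}^*,\xi(f_{\mathrm{LS}}^*))$, whose value is $\tfrac12\Gamma_n^{\mathrm{LS}}$. You then discard the nonnegative elite-deviation term on the left while keeping $Cn\omega\hat R_{\mathrm{LS}}(\hat f_{\mathrm{LS}})$.

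The one difference matters, because it makes your version logically tighter than the paper's. The paper applies optimality of $\hat f_{\mathrm{LS}}$ to the tight-slack criterion $\tfrac12\|f\|_K^2+Cn\hat R_n^{\mathrm{LS,ED}}(f)$. But the primal \eqref{eq:lSEDPrimal} is free to choose $\hat\xi_i=\max\{\xi_i(\hat f_{\mathrm{LS}}),(1-\omega)\xi_i^*\}$ on elite points. This exceeds the tight slack whenever $(1-\omega)\xi_i^*>\xi_i(\hat f_{\mathrm{LS}})$, which is the same phenomenon the paper records in its own calibration analysis. So $\hat f_{\mathrm{LS}}$ need not minimize that criterion.

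You instead compare primal values directly and use only feasibility, $\hat\xi_i\ge(1-y_i\hat f_{\mathrm{LS}}(\xx_i))_+$. This gives the stated bound for the primal as written, and the slack-reduction step is not needed at all.
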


\begin{proof}
Optimality of $\hat f_{\mathrm{LS}}$ yields
\[
\frac12\|\hat f_{\mathrm{LS}}\|_K^2 + Cn\,\hat R_n^{\mathrm{LS,ED}}(\hat f_{\mathrm{LS}})
\le
\frac12\|f_{\mathrm{LS}}^*\|_K^2 + Cn\,\hat R_n^{\mathrm{LS,ED}}(f_{\mathrm{LS}}^*).
\]
Since $\hat R_n^{\mathrm{LS,ED}}(\hat f_{\mathrm{LS}})\ge \omega \hat R_{\mathrm{LS}}(\hat f_{\mathrm{LS}})$, multiplying by $2$ gives
\[
\|\hat f_{\mathrm{LS}}\|_K^2 + 2Cn\omega\,\hat R_{\mathrm{LS}}(\hat f_{\mathrm{LS}})
\le
\|f_{\mathrm{LS}}^*\|_K^2 + 2Cn\,\hat R_n^{\mathrm{LS,ED}}(f_{\mathrm{LS}}^*),
\]
and substituting \eqref{eq:empirical-lsedsvm} at $f_{\mathrm{LS}}^*$ yields \eqref{eq:ls-joint-bound}.
\end{proof}

\begin{corollary}[Deterministic high-probability radius for LS-EDSVM]
\label{cor:ls-det-radius}
Assume the squared hinge loss is bounded by $B_{\mathrm{LS}}$ on the range of margins achieved by $f_{\mathrm{LS}}^*$. Then $\Gamma_{n,\delta}^{\mathrm{LS}}$ as defined in \eqref{eq:Gamma-ls-det} is nonrandom and, with probability at least $1-\delta$,
\[
\|\hat f_{\mathrm{LS}}\|_K^2 \le \Gamma_{n,\delta}^{\mathrm{LS}}.
\]
\end{corollary}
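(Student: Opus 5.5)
The plan is to mirror the proof of Corollary~\ref{cor:det-radius} exactly, with Lemma~\ref{lem:ls-normbound} playing the role of Lemma~\ref{lem:normbound}. Lemma~\ref{lem:ls-normbound} gives the deterministic inequality
\[
\|\hat f_{\mathrm{LS}}\|_K^2 \le \|\hat f_{\mathrm{LS}}\|_K^2 + 2Cn\omega\,\hat R_{\mathrm{LS}}(\hat f_{\mathrm{LS}}) \le \Gamma_n^{\mathrm{LS}}
=
\|f_{\mathrm{LS}}^*\|_K^2
+
2Cn\omega\,\hat R_{\mathrm{LS}}(f_{\mathrm{LS}}^*)
+
2Cm(1-\omega)\mathcal E_m^{*,\mathrm{LS}} .
\]
We drop the nonnegative empirical-risk term on the left. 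It then suffices to replace the two random quantities on the right, $\hat R_{\mathrm{LS}}(f_{\mathrm{LS}}^*)$ and $\mathcal E_m^{*,\mathrm{LS}}$, by nonrandom upper bounds that hold on an event of probability at least $1-\delta$.

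For the first quantity, note that $f_{\mathrm{LS}}^*$ is a fixed, data-independent function. The summands $(1-Y_if_{\mathrm{LS}}^*(\XX_i))_+^2$ are therefore i.i.d. with mean $R_{\mathrm{LS}}(f_{\mathrm{LS}}^*)$. By assumption they take values in $[0,B_{\mathrm{LS}}]$; concretely, one may take $B_{\mathrm{LS}}=(1+M_0+\kappa\|f_{\mathrm{LS}}^*\|_K)^2$ via the reproducing property and the intercept envelope. The one-sided Hoeffding inequality then gives, with probability at least $1-\delta$,
\[
\hat R_{\mathrm{LS}}(f_{\mathrm{LS}}^*)\le R_{\mathrm{LS}}(f_{\mathrm{LS}}^*)+B_{\mathrm{LS}}\sqrt{\log(1/\delta)/(2n)}.
\]
For the second quantity, the bound $\mathcal E_m^{*,\mathrm{LS}}\le\bar{\mathcal E}_m^{\mathrm{LS}}$ holds almost surely by the defining property of $\bar{\mathcal E}_m^{\mathrm{LS}}$, so no additional probability budget is needed. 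We also check that the suggested choice is legitimate: $\xi_i(f_{\mathrm{LS}}^*)\le\sqrt{B_{\mathrm{LS}}}$, and $\sqrt{B_{\mathrm{LS}}}\le B_{\mathrm{LS}}$ whenever $B_{\mathrm{LS}}\ge1$, which holds for the concrete choice above. Hence $(\xi_i(f_{\mathrm{LS}}^*)-\xi_i^*)^2\le(B_{\mathrm{LS}}+\xi_i^*)^2$.

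Substituting both bounds into $\Gamma_n^{\mathrm{LS}}$, using $2Cn\omega\ge0$ and $2Cm(1-\omega)\ge0$ to preserve the inequalities, gives $\|\hat f_{\mathrm{LS}}\|_K^2\le\Gamma_{n,\delta}^{\mathrm{LS}}$ on the Hoeffding event. Nonrandomness of $\Gamma_{n,\delta}^{\mathrm{LS}}$ follows because every term in it depends only on the population minimizer $f_{\mathrm{LS}}^*$, the fixed constants $C,\omega,m,n,\delta,B_{\mathrm{LS}}$, and the nonrandom envelope $\bar{\mathcal E}_m^{\mathrm{LS}}$.

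The only genuinely delicate point is treating $f_{\mathrm{LS}}^*$ as fixed while the elite indices and the benchmark slacks $\xi_i^*$ may depend on the sample. Because the elite term is handled by the almost-sure envelope rather than by concentration, this dependence does not affect the Hoeffding step, which involves only the full-sample average of a fixed function. So I expect no real obstacle beyond stating this carefully.
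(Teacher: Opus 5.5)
Your proposal is correct and follows the paper's own proof essentially verbatim: the deterministic bound from Lemma~\ref{lem:ls-normbound}, a one-sided Hoeffding bound for the fixed comparator $f_{\mathrm{LS}}^*$, and the almost-sure envelope $\bar{\mathcal E}_m^{\mathrm{LS}}$, which uses no part of the probability budget. Your extra check that the suggested envelope $\frac1m\sum_{i=1}^m(B_{\mathrm{LS}}+\xi_i^*)^2$ is valid, via $\xi_i(f_{\mathrm{LS}}^*)\le\sqrt{B_{\mathrm{LS}}}\le B_{\mathrm{LS}}$ when $B_{\mathrm{LS}}\ge 1$, is a small refinement the paper leaves implicit.
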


\begin{proof}
By Lemma~\ref{lem:ls-normbound},
\[
\|\hat f_{\mathrm{LS}}\|_K^2
\le
\Gamma_n^{\mathrm{LS}}
=
\|f_{\mathrm{LS}}^*\|_K^2
+
2Cn\omega\,\hat R_{\mathrm{LS}}(f_{\mathrm{LS}}^*)
+
2Cm(1-\omega)\mathcal E_m^{*,\mathrm{LS}}.
\]
Since $f_{\mathrm{LS}}^*$ is fixed, Hoeffding's inequality gives, with probability at least $1-\delta$,
\[
\hat R_{\mathrm{LS}}(f_{\mathrm{LS}}^*)
\le
R_{\mathrm{LS}}(f_{\mathrm{LS}}^*)
+
B_{\mathrm{LS}}\sqrt{\frac{\log(1/\delta)}{2n}}.
\]
Since $\mathcal E_m^{*,\mathrm{LS}}\le \bar{\mathcal{E}}_m^{\mathrm{LS}}$ holds deterministically, substituting both bounds gives $\|\hat f_{\mathrm{LS}}\|_K^2 \le \Gamma_{n,\delta}^{\mathrm{LS}}$.
\end{proof}

\subsubsection{A corrected global bound for LS-EDSVM}

To control the squared loss we use the same intercept envelope $|\beta_0|\le M_0$. For the affine class
\[
\mathcal F_{\mathrm{LS}}(B,M_0):=\{\beta_0+h: |\beta_0|\le M_0,\ \|h\|_K\le B\},
\]
we have
\[
|f(\xx)|\le M_0 + \kappa B,\qquad f\in \mathcal F_{\mathrm{LS}}(B,M_0),
\]
so on the deterministic class $\mathcal F_{\mathrm{LS}}(\sqrt{\Gamma_{n,\delta}^{\mathrm{LS}}},M_0)$ the squared hinge loss is Lipschitz with constant
\begin{equation}
\label{eq:Ln-ls}
L_{n,\delta}^{\mathrm{LS}} = 2\bigl(1+M_0+\kappa\sqrt{\Gamma_{n,\delta}^{\mathrm{LS}}}\bigr),
\end{equation}
a quantity defined for each $\delta\in(0,1)$; in particular $L_{n,\delta/4}^{\mathrm{LS}}=2(1+M_0+\kappa\sqrt{\Gamma_{n,\delta/4}^{\mathrm{LS}}})$.

\begin{theorem}[Global surrogate-risk bound for LS-EDSVM]
\label{thm:genbound-ls}
Assume the squared hinge loss is bounded by $B_n^{\mathrm{LS}}$ on the range of margins attained by functions in the deterministic class $\mathcal F_{\mathrm{LS}}(\sqrt{\Gamma_{n,\delta/4}^{\mathrm{LS}}},M_0)$ and by $f_{\mathrm{LS}}^*$. Then, for every $\delta\in(0,1)$, with probability at least $1-\delta$,
\begin{equation}
\label{eq:genbound-ls}
R_{\mathrm{LS}}(\hat f_{\mathrm{LS}})-R_{\mathrm{LS}}(f_{\mathrm{LS}}^*)
\le
\frac{\|f_{\mathrm{LS}}^*\|_K^2}{2Cn\omega}
+
\frac{1-\omega}{\omega}\frac{m}{n}\mathcal E_m^{*,\mathrm{LS}}
+
\frac{4L_{n,\delta/4}^{\mathrm{LS}}\bigl(\kappa\sqrt{\Gamma_{n,\delta/4}^{\mathrm{LS}}}+M_0\bigr)}{\sqrt n}
+
2B_n^{\mathrm{LS}}\sqrt{\frac{\log(8/\delta)}{2n}},
\end{equation}
where $\Gamma_{n,\delta/4}^{\mathrm{LS}}$ and $L_{n,\delta/4}^{\mathrm{LS}}=2(1+M_0+\kappa\sqrt{\Gamma_{n,\delta/4}^{\mathrm{LS}}})$ are the radius and local Lipschitz constant evaluated at confidence level $\delta/4$.
\end{theorem}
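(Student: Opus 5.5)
The proof follows the template of Theorem~\ref{thm:genbound}, with the local Lipschitz constant $L_{n,\delta/4}^{\mathrm{LS}}$ replacing the global $1$-Lipschitz property of the hinge loss. We decompose
\[
R_{\mathrm{LS}}(\hat f_{\mathrm{LS}})-R_{\mathrm{LS}}(f_{\mathrm{LS}}^*)
=(I)+(II)+(III),
\]
where $(I)=R_{\mathrm{LS}}(\hat f_{\mathrm{LS}})-\hat R_{\mathrm{LS}}(\hat f_{\mathrm{LS}})$, $(II)=\hat R_{\mathrm{LS}}(\hat f_{\mathrm{LS}})-\hat R_{\mathrm{LS}}(f_{\mathrm{LS}}^*)$ and $(III)=\hat R_{\mathrm{LS}}(f_{\mathrm{LS}}^*)-R_{\mathrm{LS}}(f_{\mathrm{LS}}^*)$. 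Each term is then bounded on an event of failure probability at most $\delta/4$ (or deterministically), and the events are combined by a union bound.

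\textbf{Term $(II)$.} This step is deterministic and copies the proof of Theorem~\ref{thm:genbound}. Comparing the primal objective at $\hat f_{\mathrm{LS}}$ and at $f_{\mathrm{LS}}^*$, dropping $\tfrac12\|\hat f_{\mathrm{LS}}\|_K^2$, and using $\hat R_n^{\mathrm{LS,ED}}\ge\omega\hat R_{\mathrm{LS}}$ gives
\[
\hat R_{\mathrm{LS}}(\hat f_{\mathrm{LS}})\le \hat R_{\mathrm{LS}}(f_{\mathrm{LS}}^*)+\frac{1-\omega}{\omega}\frac mn\mathcal E_m^{*,\mathrm{LS}}+\frac{\|f_{\mathrm{LS}}^*\|_K^2}{2Cn\omega}.
\]
Here the penalty is $\tfrac12\|h\|_K^2$ on the kernel part, consistent with Lemma~\ref{lem:ls-normbound}.

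\textbf{Radius event.} By Corollary~\ref{cor:ls-det-radius} at level $\delta/4$, with probability at least $1-\delta/4$ we have $\|\hat f_{\mathrm{LS}}\|_K\le\sqrt{\Gamma_{n,\delta/4}^{\mathrm{LS}}}$. Together with the intercept envelope, this gives $\hat f_{\mathrm{LS}}\in\mathcal F_{\mathrm{LS}}(\sqrt{\Gamma_{n,\delta/4}^{\mathrm{LS}}},M_0)$. This class is nonrandom, which is what allows a uniform deviation bound afterwards.

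\textbf{Term $(I)$.} On the radius event, $(I)\le\sup_{f\in\mathcal F_{\mathrm{LS}}}\{R_{\mathrm{LS}}(f)-\hat R_{\mathrm{LS}}(f)\}$. On this class every margin satisfies $|yf(\xx)|\le M_0+\kappa\sqrt{\Gamma_{n,\delta/4}^{\mathrm{LS}}}$. The map $z\mapsto(1-z)_+^2$ therefore has derivative bounded by $2(1+|z|)\le L_{n,\delta/4}^{\mathrm{LS}}$ on the relevant range, so it is $L_{n,\delta/4}^{\mathrm{LS}}$-Lipschitz there. Talagrand's contraction lemma combined with \eqref{eq:radball-intercept} bounds the empirical Rademacher complexity of the loss class by $L_{n,\delta/4}^{\mathrm{LS}}(\kappa\sqrt{\Gamma_{n,\delta/4}^{\mathrm{LS}}}+M_0)/\sqrt n$.

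Symmetrization then gives a factor $2$. Converting expected Rademacher complexity to empirical complexity costs a second application of McDiarmid, which, like the first, has bounded differences $B_n^{\mathrm{LS}}/n$. Absorbing constants into the factor $4$ and allocating these two McDiarmid steps confidence $\delta/8$ each, so that they together use the $\delta/4$ budget for $(I)$, yields the $B_n^{\mathrm{LS}}\sqrt{\log(8/\delta)/(2n)}$ contribution. This accounting explains the $\log(8/\delta)$ in \eqref{eq:genbound-ls}.

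\textbf{Term $(III)$.} Since $f_{\mathrm{LS}}^*$ is fixed and its losses are bounded by $B_n^{\mathrm{LS}}$ (by hypothesis), Hoeffding's inequality at level $\delta/8$ bounds $(III)$ by a further $B_n^{\mathrm{LS}}\sqrt{\log(8/\delta)/(2n)}$. Summing the two $B_n^{\mathrm{LS}}$ contributions gives the coefficient $2B_n^{\mathrm{LS}}$.

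\textbf{Union bound.} The failure probabilities are at most $\delta/4+\delta/4+\delta/8<\delta$, so all bounds hold simultaneously with probability at least $1-\delta$.

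The main obstacle is the contraction step, because the Lipschitz constant is only local. It must be justified on the deterministic class, not on a class depending on $\hat f_{\mathrm{LS}}$, and one must check that the Lipschitz bound $2(1+M_0+\kappa\sqrt{\Gamma})$ covers all margins attained there. Choosing the constant $\Gamma_{n,\delta/4}^{\mathrm{LS}}$ deterministically, as in Corollary~\ref{cor:ls-det-radius}, resolves this. The only remaining care is the bookkeeping of confidence levels so that the stated $\log(8/\delta)$ terms appear.
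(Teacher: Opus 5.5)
Your proof has the same structure as the paper's. It uses the decomposition into $(I)+(II)+(III)$, the deterministic bound on $(II)$ from optimality and $\hat R_n^{\mathrm{LS,ED}}\ge\omega\hat R_{\mathrm{LS}}$, and the radius event of Corollary~\ref{cor:ls-det-radius} at level $\delta/4$, which places $\hat f_{\mathrm{LS}}$ in the nonrandom class $\mathcal F_{\mathrm{LS}}(\sqrt{\Gamma_{n,\delta/4}^{\mathrm{LS}}},M_0)$. It then applies contraction with the local constant $L_{n,\delta/4}^{\mathrm{LS}}$, Hoeffding for $(III)$, and a union bound. Your derivative check $2(1-z)_+\le 2(1+|z|)$ is the right justification for the local Lipschitz constant.

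The step that does not work as written is your confidence accounting for $(I)$. If you convert expected to empirical Rademacher complexity by a second McDiarmid step, that deviation is multiplied by the symmetrization factor $2$. With both steps at level $\delta/8$ you therefore get
\[
(I)\le \frac{2L_{n,\delta/4}^{\mathrm{LS}}\bigl(\kappa\sqrt{\Gamma_{n,\delta/4}^{\mathrm{LS}}}+M_0\bigr)}{\sqrt n}+3B_n^{\mathrm{LS}}\sqrt{\frac{\log(8/\delta)}{2n}},
\]
not a single $B_n^{\mathrm{LS}}\sqrt{\log(8/\delta)/(2n)}$. The surplus $2B_n^{\mathrm{LS}}\sqrt{\log(8/\delta)/(2n)}$ cannot be absorbed into the gap between your factor $2$ and the stated factor $4$ for all $\delta\in(0,1)$. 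Whenever the extreme margin $-(M_0+\kappa\sqrt{\Gamma_{n,\delta/4}^{\mathrm{LS}}})$ is attained,
\[
B_n^{\mathrm{LS}}\ge(1+M_0+\kappa\sqrt{\Gamma_{n,\delta/4}^{\mathrm{LS}}})^2\ge\tfrac12 L_{n,\delta/4}^{\mathrm{LS}}(\kappa\sqrt{\Gamma_{n,\delta/4}^{\mathrm{LS}}}+M_0),
\]
so the surplus exceeds that gap as soon as $\log(8/\delta)>8$.

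The fix is to drop the second McDiarmid step. The bound \eqref{eq:radball-intercept} uses only $K(\xx,\xx)\le\kappa^2$, so it holds for every realization of the sample. It therefore bounds the expected Rademacher complexity of the loss class directly. One McDiarmid application to $\sup_f\{R_{\mathrm{LS}}(f)-\hat R_{\mathrm{LS}}(f)\}$ over the deterministic class, at level $\delta/4$, then gives
\[
(I)\le \frac{2L_{n,\delta/4}^{\mathrm{LS}}\bigl(\kappa\sqrt{\Gamma_{n,\delta/4}^{\mathrm{LS}}}+M_0\bigr)}{\sqrt n}+B_n^{\mathrm{LS}}\sqrt{\frac{\log(4/\delta)}{2n}},
\]
which is within the stated bound.

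This matches the paper, which uses three events of failure probability $\delta/4$ each: the radius event, one concentration event for $(I)$, and Hoeffding for $(III)$. The $\log(8/\delta)$ in the statement is just a looser form of $\log(4/\delta)$, not the result of a $\delta/8$ split. With this correction your proof coincides with the paper's.
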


\begin{proof}
We decompose
\[
R_{\mathrm{LS}}(\hat f_{\mathrm{LS}})-R_{\mathrm{LS}}(f_{\mathrm{LS}}^*)
=
(I)+(II)+(III),
\]
where
\begin{align*}
(I)&=R_{\mathrm{LS}}(\hat f_{\mathrm{LS}})-\hat R_{\mathrm{LS}}(\hat f_{\mathrm{LS}}),\\
(II)&=\hat R_{\mathrm{LS}}(\hat f_{\mathrm{LS}})-\hat R_{\mathrm{LS}}(f_{\mathrm{LS}}^*),\\
(III)&=\hat R_{\mathrm{LS}}(f_{\mathrm{LS}}^*)-R_{\mathrm{LS}}(f_{\mathrm{LS}}^*).
\end{align*}
By Corollary~\ref{cor:ls-det-radius} applied at level $\delta/4$, with probability at least $1-\delta/4$,
\[
\|\hat f_{\mathrm{LS}}\|_K\le \sqrt{\Gamma_{n,\delta/4}^{\mathrm{LS}}},
\]
so $\hat f_{\mathrm{LS}}$ belongs to the deterministic class
\[
\mathcal F_{\mathrm{LS}}(\sqrt{\Gamma_{n,\delta/4}^{\mathrm{LS}}},M_0).
\]
On this class the squared hinge loss is $L_{n,\delta/4}^{\mathrm{LS}}$-Lipschitz, so by contraction and \eqref{eq:radball-intercept}, with probability at least $1-\delta/4$,
\[
(I)\le
\frac{4L_{n,\delta/4}^{\mathrm{LS}}\bigl(\kappa\sqrt{\Gamma_{n,\delta/4}^{\mathrm{LS}}}+M_0\bigr)}{\sqrt n}
+
B_n^{\mathrm{LS}}\sqrt{\frac{\log(8/\delta)}{2n}}.
\]
Next, from \eqref{eq:empirical-lsedsvm},
$
\hat R_n^{\mathrm{LS,ED}}(f)\ge \omega \hat R_{\mathrm{LS}}(f),\quad f\in\mathcal H_K.
$
Using optimality,
\[
\frac12\|\hat f_{\mathrm{LS}}\|_K^2 + Cn\,\hat R_n^{\mathrm{LS,ED}}(\hat f_{\mathrm{LS}})
\le
\frac12\|f_{\mathrm{LS}}^*\|_K^2 + Cn\,\hat R_n^{\mathrm{LS,ED}}(f_{\mathrm{LS}}^*),
\]
which implies
\[
\hat R_n^{\mathrm{LS,ED}}(\hat f_{\mathrm{LS}})
\le
\hat R_n^{\mathrm{LS,ED}}(f_{\mathrm{LS}}^*) + \frac{\|f_{\mathrm{LS}}^*\|_K^2}{2Cn}.
\]
Hence
\[
\hat R_{\mathrm{LS}}(\hat f_{\mathrm{LS}})
\le
\frac{\hat R_n^{\mathrm{LS,ED}}(f_{\mathrm{LS}}^*)}{\omega}
+
\frac{\|f_{\mathrm{LS}}^*\|_K^2}{2Cn\omega}.
\]
Expanding $\hat R_n^{\mathrm{LS,ED}}(f_{\mathrm{LS}}^*)$ gives
\[
\hat R_n^{\mathrm{LS,ED}}(f_{\mathrm{LS}}^*)
=
\omega\,\hat R_{\mathrm{LS}}(f_{\mathrm{LS}}^*)
+
(1-\omega)\frac{m}{n}\mathcal E_m^{*,\mathrm{LS}},
\]
so
\[
(II)\le
\frac{1-\omega}{\omega}\frac{m}{n}\mathcal E_m^{*,\mathrm{LS}}
+
\frac{\|f_{\mathrm{LS}}^*\|_K^2}{2Cn\omega}.
\]

Finally, since $f_{\mathrm{LS}}^*$ is fixed, Hoeffding's inequality implies, with probability at least $1-\delta/4$,
\[
(III)\le
B_n^{\mathrm{LS}}\sqrt{\frac{\log(8/\delta)}{2n}}.
\]

Combining the three bounds: the radius event and the McDiarmid event for $(I)$ each have failure probability $\delta/4$, and the Hoeffding event for $(III)$ has failure probability $\delta/4$. Since $\delta/4+\delta/4+\delta/4=3\delta/4\le\delta$, the union bound gives that all three hold simultaneously with probability at least $1-\delta$, and \eqref{eq:genbound-ls} follows.
\end{proof}

\subsubsection{Localized bound for LS-EDSVM}

The least-squares analysis is parallel once one works on the deterministic affine class delivered by Corollary~\ref{cor:ls-det-radius} together with the intercept envelope. In particular, the localized excess-loss class is restricted to
\[
\mathcal F_{\mathrm{LS}}(\sqrt{\Gamma_{n,\delta}^{\mathrm{LS}}},M_0),
\]
and the contraction step uses the local Lipschitz constant \eqref{eq:Ln-ls}.

\begin{corollary}[Localized LS-EDSVM rate]
\label{cor:localized-ls}
Assume that the squared hinge excess loss on the deterministic class
\[
\mathcal F_{\mathrm{LS}}(\sqrt{\Gamma_{n,\delta/4}^{\mathrm{LS}}},M_0)
\]
satisfies a Bernstein condition analogous to \eqref{eq:bernstein} with constant $A_{\mathrm{LS}}$. Then, with probability at least $1-\delta$,
\begin{equation}
\label{eq:localized-ls}
R_{\mathrm{LS}}(\hat f_{\mathrm{LS}})-R_{\mathrm{LS}}(f_{\mathrm{LS}}^*)
\lesssim
\frac{A_{\mathrm{LS}}(L_{n,\delta/4}^{\mathrm{LS}})^2\bigl(\kappa\sqrt{\Gamma_{n,\delta/4}^{\mathrm{LS}}}+M_0\bigr)^2}{n}
+
\frac{\|f_{\mathrm{LS}}^*\|_K^2}{2Cn\omega}
+
\frac{1-\omega}{\omega}\frac{m}{n}\mathcal E_m^{*,\mathrm{LS}}
+
\frac{\log(4/\delta)}{n}.
\end{equation}
\end{corollary}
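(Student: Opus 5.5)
The argument will follow the proof of Theorem~\ref{thm:localized}, adapted to the squared hinge loss. Write $\ell_f(\xx,y)=(1-yf(\xx))_+^2$. Split the excess risk into a deterministic optimization part and a localized empirical-process part. The first part needs no probability. From the optimality argument in the proof of Theorem~\ref{thm:genbound-ls}, using $\hat R_n^{\mathrm{LS,ED}}\ge\omega\hat R_{\mathrm{LS}}$ and expanding $\hat R_n^{\mathrm{LS,ED}}(f_{\mathrm{LS}}^*)$, we get
\[
\PP_n\ell_{\hat f_{\mathrm{LS}}}-\PP_n\ell_{f_{\mathrm{LS}}^*}
\le
\frac{\|f_{\mathrm{LS}}^*\|_K^2}{2Cn\omega}
+
\frac{1-\omega}{\omega}\frac{m}{n}\mathcal E_m^{*,\mathrm{LS}} .
\]

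Next, localize to a nonrandom class. Apply Corollary~\ref{cor:ls-det-radius} at level $\delta/4$: with probability at least $1-\delta/4$, $\|\hat f_{\mathrm{LS}}\|_K\le\sqrt{\Gamma_{n,\delta/4}^{\mathrm{LS}}}$. Together with the intercept envelope, this places $\hat f_{\mathrm{LS}}$ in $\mathcal F_{\mathrm{LS}}(\sqrt{\Gamma_{n,\delta/4}^{\mathrm{LS}}},M_0)$. On this class every margin satisfies $|yf(\xx)|\le M_0+\kappa\sqrt{\Gamma^{\mathrm{LS}}_{n,\delta/4}}$, so the excess losses $\ell_f-\ell_{f^*_{\mathrm{LS}}}$ are uniformly bounded and $\ell$ is $L_{n,\delta/4}^{\mathrm{LS}}$-Lipschitz in the margin, by \eqref{eq:Ln-ls}. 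Define the localized class $\mathcal G_{\mathrm{LS}}(r)$ and $\Psi_n^{\mathrm{LS}}(r)$ exactly as $\mathcal G(r)$ and $\Psi_n(r)$, with $\phi$ replaced by the squared hinge and the radius replaced accordingly. On the radius event, invoke Theorem~3.3 of \citet{bartlett2005local} under the assumed Bernstein condition with constant $A_{\mathrm{LS}}$, at confidence level $\delta/4$. This gives
\[
\PP\ell_{\hat f_{\mathrm{LS}}}-\PP\ell_{f^*_{\mathrm{LS}}}\le c\{r_n^{*,\mathrm{LS}}+(\PP_n\ell_{\hat f_{\mathrm{LS}}}-\PP_n\ell_{f^*_{\mathrm{LS}}})+\log(4/\delta)/n\},
\]
where $r_n^{*,\mathrm{LS}}$ is the fixed point of $\Psi_n^{\mathrm{LS}}(r)\le r/(16A_{\mathrm{LS}})$. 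A union bound gives total failure probability at most $\delta/2\le\delta$.

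The remaining step, and the main obstacle, is bounding the fixed point $r_n^{*,\mathrm{LS}}$. I would first use the contraction principle with the local Lipschitz constant. Each $g\in\mathcal G_{\mathrm{LS}}(r)$ has the form $\ell_f-\ell_{f^*}$ with $\PP g^2\le A_{\mathrm{LS}}r$. Following Lemma~3.2 of \citet{bartlett2005local} with a peeling argument, as done for the hinge case after Theorem~\ref{thm:localized}, I expect
\[
\Psi_n^{\mathrm{LS}}(r)\lesssim L^{\mathrm{LS}}_{n,\delta/4}\bigl(\kappa\sqrt{\Gamma^{\mathrm{LS}}_{n,\delta/4}}+M_0\bigr)\sqrt{r/n}
\]
from \eqref{eq:radball-intercept}, at the same level of rigor the paper uses in the hinge case. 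The delicate point is that contraction acts on the margin class while the localization is in the excess-loss variance. I would handle this as the paper does: treat the $\sqrt{r}$ extraction as a standard consequence of the variance condition, and absorb constants into $\lesssim$. Solving $L(\kappa\sqrt{\Gamma}+M_0)\sqrt{r/n}\asymp r/A_{\mathrm{LS}}$ gives
\[
r_n^{*,\mathrm{LS}}\lesssim A_{\mathrm{LS}}^2(L^{\mathrm{LS}}_{n,\delta/4})^2(\kappa\sqrt{\Gamma^{\mathrm{LS}}_{n,\delta/4}}+M_0)^2/n .
\]
One factor of $A_{\mathrm{LS}}$ can be absorbed by normalizing the fixed-point convention, as in \eqref{eq:localized-simple}, which yields the first term of \eqref{eq:localized-ls}. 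Substituting the deterministic bound on the empirical excess then completes \eqref{eq:localized-ls}.
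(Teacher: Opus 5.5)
Your route is the paper's route, written out in more detail. It uses the deterministic empirical-excess inequality from the proof of Theorem~\ref{thm:genbound-ls}, the radius event of Corollary~\ref{cor:ls-det-radius} to place $\hat f_{\mathrm{LS}}$ in a nonrandom class, Theorem~3.3 of \citet{bartlett2005local} on that class, a union bound, and a contraction-plus-peeling bound on the fixed point. The confidence bookkeeping differs, harmlessly. The paper splits $\delta/2+\delta/2$ and implicitly uses $\Gamma^{\mathrm{LS}}_{n,\delta/2}\le\Gamma^{\mathrm{LS}}_{n,\delta/4}$ to land in the stated class. Your $\delta/4+\delta/4$ split lands there directly.

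The step that does not go through as written is the last one. Write $L=L^{\mathrm{LS}}_{n,\delta/4}$ and $R=\kappa\sqrt{\Gamma^{\mathrm{LS}}_{n,\delta/4}}+M_0$. Your convention localizes by $\PP g\le r$, uses the fixed point $\Psi^{\mathrm{LS}}_n(r)\le r/(16A_{\mathrm{LS}})$, and has oracle inequality $c\{r_n^{*,\mathrm{LS}}+\cdots\}$. Under it, the estimate $\Psi^{\mathrm{LS}}_n(r)\lesssim LR\sqrt{r/n}$ honestly gives $r_n^{*,\mathrm{LS}}\asymp A_{\mathrm{LS}}^2L^2R^2/n$. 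Since $A_{\mathrm{LS}}$ is a problem-dependent constant, no factor of it can be absorbed into $\lesssim$. Appealing to \eqref{eq:localized-simple} does not help. The paper's display just before it derives $r_n^*\lesssim A_\phi(\kappa\Lambda_{0,\delta}+M_0)^2/n$ from the same fixed-point condition, so it contains the same slip and should read $A_\phi^2$.

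What does give linear dependence is Theorem~3.3 of \citet{bartlett2005local} in its own normalization. Localize by the variance, $\{g:\PP g^2\le r\}$, and choose a sub-root $\psi(r)\ge A_{\mathrm{LS}}\,\E\hat{\mathfrak R}_n\{g:\PP g^2\le r\}$. The resulting oracle inequality then contains $c_1Kr^*/A_{\mathrm{LS}}$ rather than $r^*$. With your complexity estimate, $r^*\asymp A_{\mathrm{LS}}^2L^2R^2/n$, hence $r^*/A_{\mathrm{LS}}\asymp A_{\mathrm{LS}}L^2R^2/n$, which is exactly the first term of \eqref{eq:localized-ls}.

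Two further weaknesses are inherited from the paper rather than introduced by you, but you should be aware of them.
\begin{enumerate}[(i)]
\item That theorem's deviation term is $x\{11(b-a)+c_2A_{\mathrm{LS}}K\}/n$, not $c\,x/n$. Your $\log(4/\delta)/n$ term therefore carries a factor of order $L^2+A_{\mathrm{LS}}$, which grows with $\Gamma^{\mathrm{LS}}_{n,\delta/4}$ and is not universal.
\item As you suspected, the $\sqrt r$ scaling of the localized complexity is asserted rather than derived. Peeling cannot extract it from the global bound \eqref{eq:radball-intercept}; for an RKHS ball the localized complexity is governed by the kernel eigenvalues.
\end{enumerate}
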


\begin{proof}
The proof repeats the argument of Theorem~\ref{thm:localized} on the deterministic class
\[
\mathcal F_{\mathrm{LS}}(\sqrt{\Gamma_{n,\delta/4}^{\mathrm{LS}}},M_0),
\]
which is a fixed set because $\Gamma_{n,\delta/4}^{\mathrm{LS}}$ is nonrandom and $M_0$ is deterministic.
Corollary~\ref{cor:ls-det-radius} at level $\delta/2$ yields the required radius control. Conditional on that event, the localized empirical-process bound is applied at confidence level $\delta/2$, and a union bound gives overall probability at least $1-\delta$. The optimization step is exactly the one established in the proof of Theorem~\ref{thm:genbound-ls}. Combining the resulting fixed-point bound with the local Lipschitz constant $L_{n,\delta/4}^{\mathrm{LS}}$ and the affine-class complexity term $\kappa\sqrt{\Gamma_{n,\delta/4}^{\mathrm{LS}}}+M_0$ gives \eqref{eq:localized-ls}.
\end{proof}

\subsubsection{Comparison with LS-SVM}

\begin{corollary}[Empirical complexity comparison under a good LS benchmark]
\label{cor:ls-vs-svm}
Let
\[
\Gamma_n^{\mathrm{LS,SVM}}
=
\|f_{\mathrm{LS}}^*\|_K^2 + 2Cn\,\hat R_{\mathrm{LS}}(f_{\mathrm{LS}}^*).
\]
Then
\[
\Gamma_n^{\mathrm{LS}}-\Gamma_n^{\mathrm{LS,SVM}}
=
-2Cn(1-\omega)\hat R_{\mathrm{LS}}(f_{\mathrm{LS}}^*)
+
2Cm(1-\omega)\mathcal E_m^{*,\mathrm{LS}}.
\]
In particular, if $\mathcal E_m^{*,\mathrm{LS}}=0$, then
\[
\Gamma_n^{\mathrm{LS}}\le \Gamma_n^{\mathrm{LS,SVM}}.
\]
More generally, LS-EDSVM improves the empirical joint complexity-risk parameter whenever
\[
m\,\mathcal E_m^{*,\mathrm{LS}} < n\,\hat R_{\mathrm{LS}}(f_{\mathrm{LS}}^*).
\]
\end{corollary}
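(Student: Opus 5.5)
The argument is a direct algebraic comparison, mirroring the proof of Corollary~\ref{cor:vs-svm}. First we write out $\Gamma_n^{\mathrm{LS}}$ from its definition \eqref{eq:Gamma-ls}:
\[
\Gamma_n^{\mathrm{LS}}=\|f_{\mathrm{LS}}^*\|_K^2+2Cn\omega\,\hat R_{\mathrm{LS}}(f_{\mathrm{LS}}^*)+2Cm(1-\omega)\mathcal E_m^{*,\mathrm{LS}},
\]
and place it next to the stated definition of $\Gamma_n^{\mathrm{LS,SVM}}$. The latter is the right-hand side of the joint norm-risk bound of Lemma~\ref{lem:ls-normbound} with $\omega=1$, i.e.\ the plain LS-SVM analogue of \eqref{eq:normboundSVM}.

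Subtracting, the RKHS-norm terms $\|f_{\mathrm{LS}}^*\|_K^2$ cancel. The risk terms combine to
\[
2Cn(\omega-1)\hat R_{\mathrm{LS}}(f_{\mathrm{LS}}^*)=-2Cn(1-\omega)\hat R_{\mathrm{LS}}(f_{\mathrm{LS}}^*),
\]
and the benchmark term $2Cm(1-\omega)\mathcal E_m^{*,\mathrm{LS}}$ survives unchanged. This gives the displayed identity exactly.

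For the two consequences we use only signs. If $\mathcal E_m^{*,\mathrm{LS}}=0$, the difference equals $-2Cn(1-\omega)\hat R_{\mathrm{LS}}(f_{\mathrm{LS}}^*)$. This is nonpositive because $C>0$, $\omega\in(0,1]$, and $\hat R_{\mathrm{LS}}\ge 0$ (it is an average of squared hinge values). Hence $\Gamma_n^{\mathrm{LS}}\le\Gamma_n^{\mathrm{LS,SVM}}$.

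In general, for $\omega<1$ we factor out $2C(1-\omega)>0$. The difference is then negative if and only if $m\,\mathcal E_m^{*,\mathrm{LS}}<n\,\hat R_{\mathrm{LS}}(f_{\mathrm{LS}}^*)$, which is the stated improvement condition. The case $\omega=1$ is degenerate: the two quantities coincide, and the strict condition gives no improvement, which is consistent with the statement.

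There is no real obstacle. The only point needing care is the interpretation: the comparison is between the upper-bound parameters supplied by Lemma~\ref{lem:ls-normbound}, not between the actual norms or risks of the two estimators. It should be stated that way, exactly as in Corollary~\ref{cor:vs-svm}.
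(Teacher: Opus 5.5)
Your proposal is correct and follows the paper's proof. Both expand $\Gamma_n^{\mathrm{LS}}$ from \eqref{eq:Gamma-ls}, subtract $\Gamma_n^{\mathrm{LS,SVM}}$, and read off the two consequences from the signs of $C$, $1-\omega$, and $\hat R_{\mathrm{LS}}(f_{\mathrm{LS}}^*)$. Your observation that strict improvement needs $\omega<1$, since the two parameters coincide at $\omega=1$, is a small sharpening that the paper leaves implicit.
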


\begin{proof}
By \eqref{eq:Gamma-ls},
\[
\Gamma_n^{\mathrm{LS}}
=
\|f_{\mathrm{LS}}^*\|_K^2
+
2Cn\omega\,\hat R_{\mathrm{LS}}(f_{\mathrm{LS}}^*)
+
2Cm(1-\omega)\mathcal E_m^{*,\mathrm{LS}}.
\]
Subtracting
\[
\Gamma_n^{\mathrm{LS,SVM}}
=
\|f_{\mathrm{LS}}^*\|_K^2 + 2Cn\,\hat R_{\mathrm{LS}}(f_{\mathrm{LS}}^*)
\]
gives the identity. The remaining claims follow immediately.
\end{proof}

\subsubsection{Excess classification risk for LS-EDSVM}

\begin{theorem}[Excess $0$--$1$ risk for LS-EDSVM]
\label{thm:excessrisk-ls}
Assume the conditions of Corollary~\ref{cor:localized-ls}.
Suppose additionally that the chosen comparator satisfies
$R_{\mathrm{LS}}(f_{\mathrm{LS}}^*)=R_{\mathrm{LS}}^*$ (for non-universal kernels, $R_{\mathrm{LS}}^*$ is replaced throughout by $R_{\mathrm{LS}}(f_{\mathrm{LS}}^*)$ at the cost of an additional approximation-error term), and that the calibration function for the squared hinge loss satisfies
\[
\psi_{\mathrm{LS}}(t)\ge c_{\mathrm{LS}}t
\qquad\text{for all } 0\le t\le u_{0,\mathrm{LS}}.
\]
Assume moreover that
\begin{equation}
\label{eq:small-excess-ls}
\frac{A_{\mathrm{LS}}(L_{n,\delta/4}^{\mathrm{LS}})^2\bigl(\kappa\sqrt{\Gamma_{n,\delta/4}^{\mathrm{LS}}}+M_0\bigr)^2}{n}
+
\frac{\|f_{\mathrm{LS}}^*\|_K^2}{2Cn\omega}
+
\frac{1-\omega}{\omega}\frac{m}{n}\mathcal E_m^{*,\mathrm{LS}}
+
\frac{\log(4/\delta)}{n}
\le u_{0,\mathrm{LS}}
\end{equation}
(for example, for all sufficiently large $n$ whenever the left-hand side tends to zero). Then, with probability at least $1-\delta$,
\begin{equation}
\label{eq:excessrisk-ls}
R_{0\text{-}1}(\hat f_{\mathrm{LS}})-R^*
\lesssim
\frac{1}{c_{\mathrm{LS}}}\left\{
\frac{A_{\mathrm{LS}}(L_{n,\delta/4}^{\mathrm{LS}})^2\bigl(\kappa\sqrt{\Gamma_{n,\delta/4}^{\mathrm{LS}}}+M_0\bigr)^2}{n}
+
\frac{\|f_{\mathrm{LS}}^*\|_K^2}{2Cn\omega}
+
\frac{1-\omega}{\omega}\frac{m}{n}\mathcal E_m^{*,\mathrm{LS}}
+
\frac{\log(4/\delta)}{n}
\right\}.
\end{equation}
\end{theorem}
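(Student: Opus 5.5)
The plan is to follow the proof of Theorem~\ref{thm:excessrisk} essentially line by line, with Corollary~\ref{cor:localized-ls} playing the role of Theorem~\ref{thm:localized}. First I invoke Corollary~\ref{cor:localized-ls}. On an event of probability at least $1-\delta$, it gives
\[
R_{\mathrm{LS}}(\hat f_{\mathrm{LS}})-R_{\mathrm{LS}}(f_{\mathrm{LS}}^*)\le c'\,\Xi_n,
\]
where $\Xi_n$ is the bracketed quantity on the left of \eqref{eq:small-excess-ls} and $c'$ is the implicit universal constant hidden in $\lesssim$. Because $R_{\mathrm{LS}}(f_{\mathrm{LS}}^*)=R_{\mathrm{LS}}^*$ by assumption, the left side equals the surrogate excess risk $R_{\mathrm{LS}}(\hat f_{\mathrm{LS}})-R_{\mathrm{LS}}^*$, which is nonnegative.

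Second, I use that the squared hinge loss is convex, differentiable at $0$, and has derivative $-2<0$ there. So it is classification calibrated by Theorem~\ref{thm:calibration}, and there is a nondecreasing $\psi_{\mathrm{LS}}$ with $\psi_{\mathrm{LS}}(0)=0$ satisfying the analogue of \eqref{eq:calibtransform}:
\[
\psi_{\mathrm{LS}}\bigl(R_{0\text{-}1}(f)-R^*\bigr)\le R_{\mathrm{LS}}(f)-R_{\mathrm{LS}}^*.
\]
I take this Bartlett--Jordan--McAuliffe direction as the working form, and the display written as $\psi^{-1}$ in the paper is equivalent to it for invertible $\psi$.

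Third, I combine the two. If the excess $0$--$1$ risk $t:=R_{0\text{-}1}(\hat f_{\mathrm{LS}})-R^*$ satisfies $t\le u_{0,\mathrm{LS}}$, the linear lower bound gives
\[
c_{\mathrm{LS}}\,t\le \psi_{\mathrm{LS}}(t)\le c'\,\Xi_n,
\]
and hence $t\le c'\Xi_n/c_{\mathrm{LS}}$, which is \eqref{eq:excessrisk-ls}.

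The main obstacle is justifying that $t$ actually lies in $[0,u_{0,\mathrm{LS}}]$. Assumption \eqref{eq:small-excess-ls} controls the surrogate bound, not $t$ itself, and it also omits the constant $c'$. I would handle this in one of two ways.

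\begin{enumerate}
\item Mirror the paper's treatment in Theorem~\ref{thm:excessrisk}: read \eqref{eq:small-excess-ls} as placing the surrogate excess in the linear regime, and absorb $c'$ into the $\lesssim$ by interpreting the smallness condition up to that constant.
\item More carefully, use convexity of $\psi_{\mathrm{LS}}$, which holds for the BJM transform and satisfies $\psi_{\mathrm{LS}}(0)=0$. For $t>u_{0,\mathrm{LS}}$ this gives $\psi_{\mathrm{LS}}(t)\ge (t/u_{0,\mathrm{LS}})\,\psi_{\mathrm{LS}}(u_{0,\mathrm{LS}})\ge c_{\mathrm{LS}}t$, so the linear bound holds for all $t\ge0$ and the restriction disappears.
\end{enumerate}

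I would present the second route, noting that it needs only convexity of the calibration transform. Since $t\le 1$ always, a weaker alternative is to enlarge the constant.
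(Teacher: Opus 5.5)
Your argument is correct. It has the same skeleton as the paper's proof: invoke Corollary~\ref{cor:localized-ls}, use $R_{\mathrm{LS}}(f_{\mathrm{LS}}^*)=R_{\mathrm{LS}}^*$ to identify the left side with the surrogate excess, and convert through the linear lower bound on $\psi_{\mathrm{LS}}$.

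You differ from the paper in how you reach the linear regime. The paper reads \eqref{eq:small-excess-ls} as saying that the surrogate excess $s$ lies in $[0,u_{0,\mathrm{LS}}]$, and then writes $\psi_{\mathrm{LS}}^{-1}(s)\le s/c_{\mathrm{LS}}$. Strictly, that inversion is justified only when $s\le c_{\mathrm{LS}}u_{0,\mathrm{LS}}$. If $t=R_{0\text{-}1}(\hat f_{\mathrm{LS}})-R^*>u_{0,\mathrm{LS}}$, monotonicity gives only $s\ge c_{\mathrm{LS}}u_{0,\mathrm{LS}}$, which does not contradict $s\le u_{0,\mathrm{LS}}$ when $c_{\mathrm{LS}}<1$. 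The paper also silently absorbs the hidden constant of the $\lesssim$. So your concern that the hypothesis controls the surrogate bound rather than $t$ is well founded.

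Your second route closes the gap cleanly. Convexity together with $\psi_{\mathrm{LS}}(0)=0$ gives $\psi_{\mathrm{LS}}(t)\ge c_{\mathrm{LS}}t$ for all $t\ge0$, which makes \eqref{eq:small-excess-ls} unnecessary. The price is convexity, which the theorem does not state; \eqref{eq:calibtransform} asks only for a nondecreasing $\psi$.

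That price deserves a caveat. Convexity is automatic for the Bartlett--Jordan--McAuliffe transform. However, for the squared hinge loss that transform is $\psi(\theta)=\theta^2$, which cannot satisfy $\psi(t)\ge c_{\mathrm{LS}}t$ near $0$. For the hypothesis to be non-vacuous, $\psi_{\mathrm{LS}}$ must therefore be some sharper calibration function (e.g.\ noise-dependent), and for such a function convexity has to be assumed or checked.

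Your fallback avoids this and needs only monotonicity. Because $t\le1$, you get $\psi_{\mathrm{LS}}(t)\ge c_{\mathrm{LS}}\min\{u_{0,\mathrm{LS}},1\}\,t$ on $[0,1]$, at the cost of a factor $1/\min\{u_{0,\mathrm{LS}},1\}$. Alternatively, strengthen the smallness condition to $c'\Xi_n<c_{\mathrm{LS}}u_{0,\mathrm{LS}}$. This forces $t\le u_{0,\mathrm{LS}}$ and recovers the constant $1/c_{\mathrm{LS}}$ exactly; it is what the paper's argument actually needs.
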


\begin{proof}
Since $R_{\mathrm{LS}}(f_{\mathrm{LS}}^*)=R_{\mathrm{LS}}^*$ by assumption,
Corollary~\ref{cor:localized-ls} gives, with probability at least $1-\delta$,
\begin{align*}
R_{\mathrm{LS}}(\hat f_{\mathrm{LS}})-R_{\mathrm{LS}}^*
&=
R_{\mathrm{LS}}(\hat f_{\mathrm{LS}})-R_{\mathrm{LS}}(f_{\mathrm{LS}}^*)\\
&\lesssim
\frac{A_{\mathrm{LS}}(L_{n,\delta/4}^{\mathrm{LS}})^2\bigl(\kappa\sqrt{\Gamma_{n,\delta/4}^{\mathrm{LS}}}+M_0\bigr)^2}{n}
+
\frac{\|f_{\mathrm{LS}}^*\|_K^2}{2Cn\omega}
+
\frac{1-\omega}{\omega}\frac{m}{n}\mathcal E_m^{*,\mathrm{LS}}
+
\frac{\log(4/\delta)}{n}.
\end{align*}
By \eqref{eq:small-excess-ls}, the surrogate excess bound lies in $[0,u_{0,\mathrm{LS}}]$. Applying the calibration inequality
\begin{align*}
R_{0\text{-}1}(\hat f_{\mathrm{LS}})-R^*
&\le
\psi_{\mathrm{LS}}^{-1}\!\bigl(R_{\mathrm{LS}}(\hat f_{\mathrm{LS}})-R_{\mathrm{LS}}^*\bigr)\\
&\le
\frac{1}{c_{\mathrm{LS}}}\bigl(R_{\mathrm{LS}}(\hat f_{\mathrm{LS}})-R_{\mathrm{LS}}^*\bigr)
\end{align*}
and substituting the surrogate bound yields \eqref{eq:excessrisk-ls}.\qedhere
\end{proof}

\subsection{Benchmark quality and asymptotic rate preservation}
\label{sec:minimax}

The main finite-sample bounds contain the term $m\mathcal{E}_m^*/n$, which so far has been treated as a fixed quantity. We now give it concrete probabilistic content by modeling the benchmark as an estimator, and then state a conservative rate-preservation result: whenever the benchmark-bias term is asymptotically negligible relative to the baseline fast rate available for the corresponding localized hinge problem, C-EDSVM inherits that same rate.

\subsubsection{Rate of benchmark-quality decay}

The key tool is the following deterministic bound, which follows from the $1$-Lipschitz property of the hinge function and the reproducing property of $\mathcal{H}_K$.

\begin{proposition}[Benchmark-quality decay]
\label{prop:Em-rate}
Let $\tilde f=\tilde\beta_0+\tilde h$ with $\tilde h\in\mathcal{H}_K$ be the benchmark, and write $f_\phi^*=\beta_{0,\phi}^*+h_\phi^*$. Let
\[
\epsilon_N:=\sup_{\xx\in\mathcal{X}}|\tilde f(\xx)-f_\phi^*(\xx)|.
\]
Then
\begin{equation}
\label{eq:Em-uniform}
  \mathcal{E}_m^* \;\le\; \epsilon_N^2,
\end{equation}
and in particular
\begin{equation}
\label{eq:Em-rkhs}
  \epsilon_N \;\le\; |\tilde\beta_0-\beta_{0,\phi}^*| + \kappa\|\tilde h - h_\phi^*\|_K.
\end{equation}
If $m=O(n^\alpha)$ for some $\alpha\in[0,1]$ and
\[
|\tilde\beta_0-\beta_{0,\phi}^*| + \|\tilde h-h_\phi^*\|_K = O_P(N^{-r})
\qquad (N\to\infty),
\]
then
\[
  \frac{m\mathcal{E}_m^*}{n} = O_P\!\left(n^{\alpha-1}N^{-2r}\right)
  \;\xrightarrow{\;P\;}\; 0
  \quad\text{whenever } N\gg n^{\alpha/(2r)}.
\]
\end{proposition}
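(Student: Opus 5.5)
The proof rests on an implicit identification: the benchmark slacks are $\xi_i^*=\xi_i(\tilde f)=(1-y_i\tilde f(\xx_i))_+$, i.e.\ they are produced by the benchmark $\tilde f$ exactly as in Section~\ref{sec:edsvm}. The statement leaves this implicit, and I would state it at the outset, since without it $\epsilon_N$ carries no information about $\mathcal E_m^*$.

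I would then argue pointwise on the elite set. The map $z\mapsto(1-z)_+$ is $1$-Lipschitz and $y_i\in\{-1,+1\}$, so for each $i\le m$
\[
\bigl|\xi_i(f_\phi^*)-\xi_i^*\bigr|
\le \bigl|y_if_\phi^*(\xx_i)-y_i\tilde f(\xx_i)\bigr|
=\bigl|f_\phi^*(\xx_i)-\tilde f(\xx_i)\bigr|
\le \epsilon_N .
\]
The equality here is the same sign-cancellation used in \eqref{eq:marginSlackRelation}. Squaring and averaging over $i=1,\dots,m$ in the definition \eqref{eq:benchqual} gives \eqref{eq:Em-uniform}. Note that this holds for every realization and any elite set, since the $\xx_i$ lie in $\mathcal X$.

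For \eqref{eq:Em-rkhs}, I would write $\tilde f-f_\phi^*=(\tilde\beta_0-\beta_{0,\phi}^*)+(\tilde h-h_\phi^*)$ and apply the triangle inequality. The kernel part is controlled by the reproducing property together with Cauchy--Schwarz:
\[
|g(\xx)|=|\langle g,K(\xx,\cdot)\rangle_K|\le \|g\|_K\sqrt{K(\xx,\xx)}\le\kappa\|g\|_K .
\]
Taking the supremum over $\xx\in\mathcal X$ then yields the claim.

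For the rate, I would combine the two bounds to get $\epsilon_N\le\max(1,\kappa)\bigl(|\tilde\beta_0-\beta_{0,\phi}^*|+\|\tilde h-h_\phi^*\|_K\bigr)=O_P(N^{-r})$. Hence $\mathcal E_m^*\le\epsilon_N^2=O_P(N^{-2r})$, using that squaring preserves $O_P$. Multiplying by the deterministic factor $m/n=O(n^{\alpha-1})$ gives $m\mathcal E_m^*/n=O_P(n^{\alpha-1}N^{-2r})$.

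Convergence to zero needs $n^{\alpha-1}N^{-2r}\to0$. The stated condition $N\gg n^{\alpha/(2r)}$ gives $N^{-2r}=o(n^{-\alpha})$, hence $n^{\alpha-1}N^{-2r}=o(n^{-1})\to0$. This is in fact stronger than necessary, since for $\alpha<1$ the sequence already vanishes whenever $N\to\infty$. An $O_P$ sequence multiplied by a deterministic $o(1)$ sequence is $o_P(1)$, which completes the argument.

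The main obstacle is bookkeeping rather than analysis. First, the identification $\xi_i^*=\xi_i(\tilde f)$ must be made explicit, since the statement does not define the benchmark slacks in terms of $\tilde f$. Second, the joint asymptotics in $n$ and $N$ (with $N$ a function of $n$) must be handled carefully so that the $O_P$ manipulations are legitimate, including absorbing the constant $\kappa$ into the $O_P$.
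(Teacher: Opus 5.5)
Your proof is correct and follows the paper's argument essentially step for step. You apply the $1$-Lipschitz property of the hinge pointwise on the elite set, use the triangle inequality with the reproducing property and Cauchy--Schwarz for the sup-norm bound, and substitute $\epsilon_N^2=O_P(N^{-2r})$ into $m\mathcal E_m^*/n\le (m/n)\epsilon_N^2$. Your explicit identification $\xi_i^*=(1-y_i\tilde f(\xx_i))_+$, which the paper uses tacitly, and your remark that $N\gg n^{\alpha/(2r)}$ is stronger than needed (it in fact forces $m\mathcal E_m^*\to 0$ in probability, not merely $m\mathcal E_m^*/n$) are both accurate.
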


\begin{proof}
By the $1$-Lipschitz property of $(1-z)_+$,
$|\xi_i(f_\phi^*)-\xi_i^*|\le|f_\phi^*(\xx_i)-\tilde f(\xx_i)|\le\epsilon_N$
for each $i\le m$, giving \eqref{eq:Em-uniform} after squaring and averaging.
For \eqref{eq:Em-rkhs},
\[
|\tilde f(\xx)-f_\phi^*(\xx)|
\le |\tilde\beta_0-\beta_{0,\phi}^*|+|(\tilde h-h_\phi^*)(\xx)|
\le |\tilde\beta_0-\beta_{0,\phi}^*|+\kappa\|\tilde h-h_\phi^*\|_K,
\]
by the reproducing property and Cauchy--Schwarz. The regime statement follows by substituting $\epsilon_N^2=O_P(N^{-2r})$ into $m\mathcal{E}_m^*/n\le mn^{-1}\epsilon_N^2$.
\end{proof}

\begin{remark}[Three useful regimes]
\label{rem:three-regimes}
Proposition~\ref{prop:Em-rate} covers three practically relevant settings.
(i) {Independent benchmark dataset of size $N$}: under standard RKHS estimation rates, the condition $N\gg n^{\alpha/(2r)}$ ensures $m\mathcal{E}_m^*/n\to 0$.
(ii) {Sparse elite set}: if $m=o(n)$ and $\mathcal{E}_m^*\le\epsilon^2$ is bounded, then $m\mathcal{E}_m^*/n\to 0$ without any requirement that the benchmark be globally optimal.
(iii) {Customer-guided benchmark with shrinking mismatch}: if $|\xi_i^*-\xi_i(f_\phi^*)|\le\varepsilon_m$ for all $i\le m$ and $m\varepsilon_m^2=o(n)$, then again $m\mathcal{E}_m^*/n\to 0$.
\end{remark}

\subsubsection{Rate preservation under Tsybakov noise}

We now make the generic benchmark-free rate $a_n$ explicit under the Tsybakov noise condition \eqref{eq:tsybakov} and a corresponding tuning of $C_n$. The purpose of this subsection is conservative: rather than deriving a new minimax theorem from scratch, we show that if the benchmark-free localized hinge analysis yields the usual fast rate, then the additional benchmark term preserves that rate whenever it is asymptotically negligible.

\begin{assumption}[Tuning of $C_n$]
\label{ass:Cn}
Let $q>0$ be the Tsybakov exponent from Assumption~\ref{ass:tsybakov}, let $B>0$ satisfy $\|h_\phi^*\|_K\le B$, and let $R_\phi(f_\phi^*)>0$. Set
\begin{equation}
\label{eq:Cn-tuning}
C_n \;\asymp\; \frac{B^2}{R_\phi(f_\phi^*)}\,n^{-1/(q+1)}.
\end{equation}
\end{assumption}

Under Assumption~\ref{ass:Cn} and \eqref{eq:Lambda0}, the deterministic radius satisfies
\begin{equation}
\label{eq:Lambda-scale}
\Lambda_{0,\delta}^2
\;\asymp\;
B^2 + 2C_n n\omega R_\phi(f_\phi^*)
\;\asymp\;
2\omega B^2\,n^{q/(q+1)},
\end{equation}
where the second $\asymp$ uses $C_n n\asymp \frac{B^2}{R_\phi(f_\phi^*)}n^{q/(q+1)}$ from Assumption~\ref{ass:Cn}, so that $2C_n n\omega R_\phi(f_\phi^*)=2\omega B^2 n^{q/(q+1)}$, which dominates the constant $B^2$ term for large $n$.  Hence $(\kappa\Lambda_{0,\delta}+M_0)^2 \asymp n^{q/(q+1)}$ up to multiplicative constants depending on $\kappa$, $\omega$, $B$, and $M_0$. In the benchmark-free localized hinge analysis, Tsybakov noise yields the fast classification scale $n^{-q/(q+1)}$ under this tuning; see, for example, \citet{bartlett2005local}, \citet{koltchinskii2002empirical}, and \citet{tsybakov2004aggregation}. We therefore take
\begin{equation}
\label{eq:an-tsybakov}
a_n := n^{-q/(q+1)}
\end{equation}
as the benchmark-free target rate in the theorem below.

\begin{theorem}[Fast-rate preservation under Tsybakov noise]
\label{thm:minimax}
Let Assumptions~\ref{ass:tsybakov} and~\ref{ass:Cn} hold with $q>0$, and let $\omega\in[\omega_{\min},1]$ for some fixed $\omega_{\min}\in(0,1]$. Assume that the benchmark-free localized hinge analysis for the same function class and tuning yields the fast excess classification rate
\begin{equation}
\label{eq:benchmarkfree-fast}
R_{0\text{-}1}(\hat f_{\mathrm{base}})-R^* = O_{\PP}(a_n)
\qquad\text{with}\qquad a_n=n^{-q/(q+1)}.
\end{equation}
Assume also that the comparator is Bayes-surrogate optimal, namely
\begin{equation}
\label{eq:comparator-opt}
R_\phi(f_\phi^*) = R_\phi^*.
\end{equation}
If
\begin{equation}
\label{eq:benchmark-negligible}
\frac{m\mathcal{E}_m^*}{n} \;=\; o(a_n),
\end{equation}
then, for all sufficiently large $n$, with probability at least $1-\delta$,
\begin{equation}
\label{eq:minimax-rate}
R_{0\text{-}1}(\hat f)-R^*
\;\le\;
C\,a_n
\;+
C\,\frac{1-\omega}{\omega}\frac{m\mathcal{E}_m^*}{n}
\;+
C\,\frac{\log(4/\delta)}{n},
\end{equation}
where $C>0$ depends on $\omega_{\min}$ and the constants in the benchmark-free localized hinge bound. In particular,
\[
R_{0\text{-}1}(\hat f)-R^* = O_{\PP}(a_n)=O_{\PP}\!\left(n^{-q/(q+1)}\right).
\]
Thus EDSVM preserves the benchmark-free fast classification rate whenever the additional benchmark term is asymptotically negligible.
\end{theorem}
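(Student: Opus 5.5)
The plan is to derive \eqref{eq:minimax-rate} directly from the localized surrogate bound of Theorem~\ref{thm:localized}, converted to a $0$--$1$ bound as in Corollary~\ref{cor:tsybakov-fast}. Each benchmark-free term will then be identified with the rate $a_n$, so that the only new quantity is the benchmark term. Under Assumption~\ref{ass:tsybakov}, the Bernstein condition \eqref{eq:bernstein} holds on $\mathcal F(\Lambda_{0,\delta},M_0)$. Combined with \eqref{eq:comparator-opt}, Corollary~\ref{cor:tsybakov-fast} then yields, with probability at least $1-\delta$,
\[
R_{0\text{-}1}(\hat f)-R^*\le C'\Bigl\{r_n^*+\frac{\|f_\phi^*\|_K^2}{2C_nn\omega}+\frac{1-\omega}{\omega}\frac{m}{n}\mathcal E_m^*+\frac{\log(4/\delta)}{n}\Bigr\},
\]
where the linear calibration $\psi(t)=t$ for the hinge loss is used to pass from surrogate to $0$--$1$ risk.

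Next I would show that the first two terms are $O(a_n)$ with constants depending only on $\omega_{\min}$. For the comparator term, Assumption~\ref{ass:Cn} gives $C_nn\asymp B^2R_\phi(f_\phi^*)^{-1}n^{q/(q+1)}$. Hence $\|f_\phi^*\|_K^2/(2C_nn\omega)\le B^2/(2C_nn\omega_{\min})\lesssim R_\phi(f_\phi^*)\,n^{-q/(q+1)}/\omega_{\min}$, which is $O(a_n)$.

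For the fixed-point term, the crude estimate $r_n^*\lesssim A_\phi(\kappa\Lambda_{0,\delta}+M_0)^2/n$ combined with \eqref{eq:Lambda-scale} gives only $n^{q/(q+1)}/n=n^{-1/(q+1)}$. This is slower than $a_n$ when $q>1$, so the naive substitution fails. Here I would instead invoke hypothesis \eqref{eq:benchmarkfree-fast}. The benchmark-free localized hinge analysis with the same class and tuning is assumed to deliver $a_n$, which means its fixed point plus regularization term on the same deterministic ball $\mathcal F(\Lambda_{0,\delta},M_0)$ is $O(a_n)$.

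The key observation is that $\Lambda_{0,\delta}$ for EDSVM differs from the benchmark-free radius only through the factor $\omega\le1$ on the risk part and the additive term $2C_nm(1-\omega)\bar{\mathcal E}_m$. The local complexity $\Psi_n$ is monotone in the radius, so $r_n^*$ is bounded by the benchmark-free fixed point on a ball whose squared radius is enlarged by at most $2C_nm(1-\omega)\bar{\mathcal E}_m$. Under \eqref{eq:benchmark-negligible}, with $\mathcal E_m^*$ in place of the envelope, this additive piece is of lower order. This is because $C_nm\mathcal E_m^*=C_nn\cdot(m\mathcal E_m^*/n)=o(n^{q/(q+1)}a_n)=o(1)$ relative to $B^2$. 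The fixed point is therefore still $O(a_n)$. I would carry this out by comparing the two radii explicitly and absorbing the difference into $C$.

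This radius comparison is the main obstacle. The envelope $\bar{\mathcal E}_m$ in \eqref{eq:Lambda0} is a crude deterministic bound rather than $\mathcal E_m^*$. The first remedy I would try is to rerun Corollary~\ref{cor:det-radius} with an envelope $\bar{\mathcal E}_m$ chosen so that $m\bar{\mathcal E}_m/n=o(a_n)$, which is possible with high probability under \eqref{eq:benchmark-negligible}, at the cost of an extra $\delta$-split in the union bound. The constant $C$ then depends on $\omega_{\min}$ and on the benchmark-free constants, and it absorbs $(1-\omega)/\omega\le1/\omega_{\min}$.

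Collecting the terms gives \eqref{eq:minimax-rate} for large $n$. Since $\log(4/\delta)/n=o(a_n)$ for fixed $\delta$ and the benchmark term is $o(a_n)$, letting $\delta$ be arbitrary yields $R_{0\text{-}1}(\hat f)-R^*=O_{\PP}(a_n)$.
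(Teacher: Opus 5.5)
Your proposal is correct to the paper's standard and has the same skeleton as the paper's proof: Corollary~\ref{cor:tsybakov-fast} with \eqref{eq:comparator-opt}, the comparator term bounded by $B^2/(2C_nn\omega_{\min})\asymp a_n$, $r_n^*$ controlled through \eqref{eq:benchmarkfree-fast} plus a radius comparison, and the bias and $\log(4/\delta)/n$ terms being $o(a_n)$. Two steps are taken from the paper rather than proved, exactly as the paper does: that the Tsybakov assumption supplies the Bernstein condition, and that \eqref{eq:benchmarkfree-fast} bounds the benchmark-free fixed point. Your remark that \eqref{eq:localized-simple} with \eqref{eq:Lambda-scale} only gives $n^{-1/(q+1)}$ is right, and it explains why the fixed point has to be routed through that hypothesis.

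The real difference is the radius step.

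\emph{The paper's version.} It claims containment $\Lambda_{0,\delta}\le\Lambda_n^{\mathrm{SVM}}$ from Corollary~\ref{cor:vs-svm}. It uses the saving $-2C_nn(1-\omega)\hat R_\phi(f_\phi^*)$ coming from the factor $\omega$, which needs only $m\mathcal E_m^*<n\hat R_\phi(f_\phi^*)$. But that corollary concerns the empirical radius $\Lambda_n$, built from $\mathcal E_m^*$. By contrast, $\Lambda_{0,\delta}$ in \eqref{eq:Lambda0} is built from the envelope $\bar{\mathcal E}_m$. The envelope suggested in Corollary~\ref{cor:det-radius} satisfies $\bar{\mathcal E}_m\ge B_\phi^2\ge1$, so containment would need roughly $m<nR_\phi(f_\phi^*)$, which nothing guarantees.

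\emph{Your version.} You flag exactly this mismatch. Your repair is a deterministic envelope with $m\bar{\mathcal E}_m/n=o(a_n)$ that dominates $\mathcal E_m^*$ on an extra high-probability event, paid for by one more split of $\delta$. That is what makes the comparison legitimate.

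\emph{An unnecessary detour.} By discarding the factor $\omega$ you only get an enlarged ball. You then need the benchmark-free fixed point to be stable under that enlargement, which is an unstated regularity property. You can avoid this. With your new envelope, $\Lambda_{0,\delta}^2$ minus the deterministic benchmark-free radius $\|f_\phi^*\|_K^2+2C_nn\{R_\phi(f_\phi^*)+B_\phi\sqrt{\log(2/\delta)/(2n)}\}$ equals $2C_n(1-\omega)\{m\bar{\mathcal E}_m-n(R_\phi(f_\phi^*)+B_\phi\sqrt{\log(2/\delta)/(2n)})\}$. This is negative for large $n$, because $m\bar{\mathcal E}_m/n\to0<R_\phi(f_\phi^*)$. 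So you get genuine containment in a deterministic benchmark-free ball, and monotonicity of $\Psi_n$ finishes the step with nothing absorbed into $C$.

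Combining the paper's $\omega$-saving with your envelope fix gives the cleanest correct version of this step.
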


\begin{proof}
From Corollary~\ref{cor:tsybakov-fast} and the linear calibration of the hinge loss, for all sufficiently large $n$ such that the surrogate excess bound lies in the linear-calibration regime,
\[
R_{0\text{-}1}(\hat f)-R^*
\le
c\left\{
r_n^*
+\frac{\|f_\phi^*\|_K^2}{2C_n n\omega}
+\frac{1-\omega}{\omega}\frac{m\mathcal{E}_m^*}{n}
+\frac{\log(4/\delta)}{n}
\right\}.
\]
Under Assumption~\ref{ass:Cn}, the comparator term satisfies
\[
\frac{\|f_\phi^*\|_K^2}{2C_n n\omega}
\le
\frac{B^2}{2C_n n\omega_{\min}}
\asymp a_n.
\]
For the fixed-point rate: by \eqref{eq:benchmark-negligible}, condition $m\mathcal E_m^*/n=o(a_n)$ holds, and by Corollary~\ref{cor:vs-svm} this implies $\Lambda_{0,\delta}\le\Lambda_n^{\mathrm{SVM}}$ for large $n$ (since $m\mathcal E_m^*<n\hat R_\phi(f_\phi^*)$ eventually). Consequently, the EDSVM working class $\mathcal F(\Lambda_{0,\delta},M_0)$ is a subset of the corresponding benchmark-free class $\mathcal F(\Lambda_n^{\mathrm{SVM}},M_0)$ for large $n$. The fixed-point $r_n^*$ is therefore bounded by the fixed-point of the benchmark-free analysis on $\mathcal F(\Lambda_n^{\mathrm{SVM}},M_0)$, which by assumption \eqref{eq:benchmarkfree-fast} is $O(a_n)$. Hence $r_n^*=O(a_n)$ for the EDSVM estimator. By \eqref{eq:benchmark-negligible}, the benchmark term is $o(a_n)$. Substituting these bounds into the preceding display proves \eqref{eq:minimax-rate}. The final $O_{\PP}(a_n)$ conclusion follows immediately.
\end{proof}

The chain of implications is therefore:
\begin{align}
\label{eq:chain}
\underbrace{|\tilde\beta_0-\beta_{0,\phi}^*|+\|\tilde h-h_\phi^*\|_K\to 0}_{\text{consistent benchmark}}
&\;\Longrightarrow\;
\underbrace{\frac{m\mathcal{E}_m^*}{n}\to 0}_{\text{Prop.~\ref{prop:Em-rate}}}
\nonumber\\
&\;\Longrightarrow\;
\underbrace{R_{0\text{-}1}(\hat f)-R^*=O_{\PP}\!\left(n^{-q/(q+1)}\right)}_{\text{Thm.~\ref{thm:minimax} with }a_n=n^{-q/(q+1)}}
\end{align}

{Theorem~\ref{thm:minimax} delivers a key statistical guarantee. Incorporating benchmark information via EDSVM does {not} degrade the minimax-optimal classification rate, provided the benchmark is sufficiently accurate in the sense that $m\mathcal{E}_m^*/n = o_P(a_n)$. This condition is satisfied, for example, when the benchmark is estimated from an independent sample of size $N \gg n^{\alpha/(2r)}$, or when the elite set is sparse ($m = o(n)$). Finite-sample improvements may still arise through a smaller effective radius $\Lambda_{0,\delta}$. Thus, EDSVM offers a genuine statistical advantage. It can improve upon standard SVM in finite samples when the benchmark is good, while never sacrificing the optimal asymptotic rate.}

\begin{table}[ht]
\centering
\footnotesize
\caption{Summary of generalization guarantees. Global bounds require no localization.
Localized bounds additionally require a Bernstein-type low-noise condition on the
corresponding excess-loss class. Here $\gamma=m/n$, while $\Lambda_{0,\delta}$ and
$\Gamma_{n,\delta}^{\mathrm{LS}}$ denote the deterministic benchmark-dependent complexity
parameters for C-EDSVM and LS-EDSVM, respectively.}
\label{tab:theory-summary}
\setlength{\tabcolsep}{4pt}
\tiny
\begin{tabular}{lcccc}
\toprule
 & \textbf{C-SVM} & \textbf{LS-SVM} & \textbf{C-EDSVM} & \textbf{LS-EDSVM} \\
\midrule
\textbf{Effective radius / parameter}
& $\Lambda_n^{\mathrm{SVM}}$
& $\Gamma_n^{\mathrm{LS,SVM}}$
& $\Lambda_{0,\delta}$
& $\Gamma_{n,\delta}^{\mathrm{LS}}$ \\[4pt]

\textbf{Global complexity}
& $O\!\left(\dfrac{\Lambda_n^{\mathrm{SVM}}}{\sqrt n}\right)$
& $O\!\left(\dfrac{L_n^{\mathrm{LS}}\sqrt{\Gamma_n^{\mathrm{LS,SVM}}}}{\sqrt n}\right)$
& $O\!\left(\dfrac{\kappa\Lambda_{n,\delta/4}^{\circ}+M_0}{\sqrt n}\right)$
& $O\!\left(\dfrac{L_{n,\delta}^{\mathrm{LS}}(\kappa\sqrt{\Gamma_{n,\delta}^{\mathrm{LS}}}+M_0)}{\sqrt n}\right)$ \\[8pt]

\textbf{Localized complexity}
& $O\!\left(\dfrac{(\Lambda_n^{\mathrm{SVM}})^2}{n}\right)$
& $O\!\left(\dfrac{(L_n^{\mathrm{LS}})^2\Gamma_n^{\mathrm{LS,SVM}}}{n}\right)$
& $O\!\left(\dfrac{A_\phi(\kappa\Lambda_{0,\delta}+M_0)^2}{n}\right)$
& $O\!\left(\dfrac{A_{\mathrm{LS}}(L_{n,\delta}^{\mathrm{LS}})^2(\kappa\sqrt{\Gamma_{n,\delta}^{\mathrm{LS}}}+M_0)^2}{n}\right)$ \\[8pt]

\textbf{Benchmark bias}
& --- & ---
& $\dfrac{1-\omega}{\omega}\,\gamma\,\mathcal{E}_m^*$
& $\dfrac{1-\omega}{\omega}\,\gamma\,\mathcal{E}_m^{*,\mathrm{LS}}$ \\[8pt]

\textbf{Comparator penalty}
& --- & ---
& $\dfrac{\|f_\phi^*\|_K^2}{2Cn\omega}$
& $\dfrac{\|f_{\mathrm{LS}}^*\|_K^2}{2Cn\omega}$ \\[8pt]

\textbf{Sufficient condition for smaller radius}
& --- & ---
& $m\,\mathcal{E}_m^* < n\,\hat{R}_\phi(f_\phi^*)$
& $m\,\mathcal{E}_m^{*,\mathrm{LS}} < n\,\hat{R}_{\mathrm{LS}}(f_{\mathrm{LS}}^*)$ \\[4pt]

\bottomrule
\end{tabular}
\end{table}

\section{Simulation study}
\label{sec:numerical}

We now investigate the finite-sample performance of the proposed C-EDSVM and LS-EDSVM models in comparison with standard C-SVM \citep{scholkopf2002kernels}, LINEX-SVM \citep{ma2019linexsvm,tang2021mvlinex}, and LS-SVM \citep{suykens1999lssvm,suykens2002lssvm}.\ The goals are twofold: (i) to assess whether the elite-driven formulations can track the behaviour of one or more benchmark classifiers while maintaining competitive predictive performance; and (ii) to study how the additional slack-guidance term behaves in a nonlinear classification problem where the Bayes decision boundary is curved and cannot be well approximated by a single linear separator.

To keep the geometry transparent and to allow visualization of the decision boundaries, we follow a controlled two-dimensional experimental setting based on mixtures of Gaussian distributions, in the spirit of \citet{hastie2009esl}.\ Unless otherwise stated, all experiments use a shared identity covariance matrix
\[
  \boldsymbol{I}_2 = \begin{pmatrix} 1 & 0 \\ 0 & 1 \end{pmatrix},
\]
a total of $200$ observations in a balanced design ($100$ per class), and a random split of $70\%$ for training and $30\%$ for testing.\ The same train-test split is used across competing methods in each replicate to ensure a fair comparison.\  All optimization problems arising in the fitting of SVM-type models are solved using the \texttt{CVXR} package in \textsf{R}
\citep{cvxr}.

We construct two overlapping Gaussian mixtures, one for each class, labelled  ``$+1$'' and ``$-1$''.\  For the $+1$ class we generate $10$ centres
$\boldsymbol{p}_i$ from
\[
  \boldsymbol{p}_i \sim \mathcal{N}\big((1,0)^\top,\boldsymbol{I}_2\big),
  \qquad i=1,\ldots,10,
\]
and for the $-1$ class we generate $10$ centres $\boldsymbol{q}_i$ from
\[
  \boldsymbol{q}_i \sim \mathcal{N}\big((0,1)^\top,\boldsymbol{I}_2\big),
  \qquad i=1,\ldots,10.
\]
We regard the sets $\{\boldsymbol{p}_i\}_{i=1}^{10}$ and $\{\boldsymbol{q}_i\}_{i=1}^{10}$ as fixed throughout the simulation study.\ Conditionally on these centres, and for each class, we then generate $100$ observations by drawing $10$ points from
\[
  \mathcal{N}\big(\boldsymbol{p}_i,\boldsymbol{I}_2/5\big)
  \quad \text{(respectively, } \mathcal{N}\big(\boldsymbol{q}_i,\boldsymbol{I}_2/5\big)\text{)}
\]
for each $i=1,\ldots,10$.\  Thus, each class is modelled as a mixture of ten Gaussian clusters, yielding a moderately complex nonlinear boundary with overlapping regions where the conditional class probabilities are close to one half; see \citet[Chapter~2]{hastie2009esl} for a related construction.

The Bayes decision boundary can be obtained by equating the posterior probabilities of the two classes.\  For a generic point
$\boldsymbol{z}\in\mathbb{R}^2$, and after cancelling common normalizing constants on both sides, the decision boundary is implicitly defined by
\[
  \PP(+1)\sum_{i=1}^{10}
  \exp\!\Big(-\tfrac{5}{2}\|\boldsymbol{p}_i-\boldsymbol{z}\|^2\Big)
  \;=\;
  \PP(-1)\sum_{j=1}^{10}
  \exp\!\Big(-\tfrac{5}{2}\|\boldsymbol{q}_j-\boldsymbol{z}\|^2\Big),
\]
where we adopt equal class priors $\PP(+1)=\PP(-1)=1/2$ in the balanced design.\  The associated Bayes classifier is the best possible classifier for this problem.\  To quantify this benchmark, we approximate the Bayes rule numerically on a fine grid over the feature space, classify the grid points according to the Bayes decision function, and integrate the resulting misclassification probabilities.\ This yields a Bayes accuracy of approximately $0.85$, corresponding to a Bayes error of about $0.15$, which we use as a natural performance ceiling for all methods.

Hyperparameters for all models are tuned by grid search combined with 5-fold cross-validation on the training data.\  For the penalty parameter $C$ we consider a grid of powers of two from $2^{-3}$ to $2^{5}$. For LINEX-SVM, the asymmetry parameter $a$ in \eqref{eq:LINEX} is varied over the set $\{-1,-2,\ldots, -8\}$, allowing us to explore different degrees of penalization for positive and negative margin violations.\ For the elite-driven models, the trade-off parameter $\omega$ is chosen from $\{0.1,0.2,\ldots,0.9\}$, ranging from heavy reliance on benchmark slacks to almost purely data-driven behaviour. Kernel parameters (e.g., the bandwidth of the Gaussian RBF kernel) are tuned over a logarithmically spaced grid.\ In each case, the selected hyperparameters minimize the cross-validated misclassification error.

Model performance is summarized using standard classification metrics, including overall accuracy, sensitivity (recall), specificity, precision, F1-score, and area under the ROC curve (AUC). We also report the area under the precision-recall curve (PR-AUC), which is particularly informative about performance on the positive class and complements AUC even in balanced designs.\ By examining these metrics jointly, and by comparing the estimated decision boundaries to the numerically approximated Bayes rule, we can assess not only the raw predictive performance of C-EDSVM and LS-EDSVM, but also the extent to which they preserve the elite behaviour induced by the benchmark models.

For the elite-driven models we explore several constructions of the benchmark slacks $\{\xi_i^*\}$. Let $\xi_{i,\mathrm{hinge}}$, $\xi_{i,\mathrm{LINEX}}$ and $\xi_{i,\mathrm{LS\text{-}SVM}}$ denote the slacks obtained from C-SVM, LINEX-SVM and LS-SVM, respectively. We consider four choices:
\begin{enumerate}[(i)]
\item a ``best--case'' benchmark:  $
    \xi_i^{*,\min} = \min\bigl\{
    \xi_{i,\mathrm{hinge}},\,
    \xi_{i,\mathrm{LINEX}},\,
    \xi_{i,\mathrm{LS\text{-}SVM}}
    \bigr\};$
\item an average benchmark based on LINEX-SVM and LS-SVM: 
$
    \xi_i^{*,\mathrm{mean}} =
    \tfrac{1}{2}\bigl(\xi_{i,\mathrm{LINEX}} + \xi_{i,\mathrm{LS\text{-}SVM}}\bigr);$ 
\item a conservative benchmark: 
 $
    \xi_i^{*,\max} = \max\bigl\{
    \xi_{i,\mathrm{hinge}},\,
    \xi_{i,\mathrm{LINEX}},\,
    \xi_{i,\mathrm{LS\text{-}SVM}}
    \bigr\};
 $
\item a LINEX-only benchmark: 
 $
    \xi_i^{*,\mathrm{LINEX}} = \xi_{i,\mathrm{LINEX}}.
 $
\end{enumerate}
In all cases the same elite set is used for C-EDSVM and LS-EDSVM, constructed from the union of the support vectors of the benchmark SVM-type models.\  The Bayes rule, approximated as above, is overlaid in all plots as an ideal reference.\ Figures~\ref{fig:balanced-min}-\ref{fig:balanced-linex} display the resulting decision boundaries.\  Each figure is organized as a $3\times 1$ panel.\  The left panel shows the training data together with the Bayes boundary (dashed grey curve), the C-SVM boundary (solid black), and the LINEX-SVM and LS-SVM boundaries (coloured curves). The middle panel overlays the Bayes boundary with the C-EDSVM decision contour, while the right panel shows the Bayes boundary together with LS-EDSVM. This layout makes it easy to see how the elite-driven models relate both to the Bayes rule and to the underlying benchmark SVMs.

In Figure~\ref{fig:balanced-min} the target slacks are given by the minimum of the three benchmark slacks.\ The C-EDSVM and LS-EDSVM boundaries  seem to approximately  follow the
Bayes rule  while remaining well within the envelope formed by the three reference SVMs.\  Intuitively, shrinking towards $\xi_i^{*,\min}$ pushes the elite-driven classifiers towards the ``best'' local margin configuration across the benchmarks.\  Numerically, both C-EDSVM and LS-EDSVM achieve an accuracy of $0.867$ and an F1-score of $0.846$, compared with $0.850$ and $0.836$ for C-SVM and LS-SVM; the AUC and PR-AUC are $0.921$ and $0.933$, respectively, again slightly improving on the classical SVMs. However, as we mentioned earlier, the primary objective of the EDSVM framework is not to guarantee accuracy gains in every scenario, but to provide a principled mechanism for integrating expert knowledge, benchmark models, and elite observations into the SVM learning process.\

\begin{figure}[ht]
\centering
\includegraphics[width=0.9\textwidth]{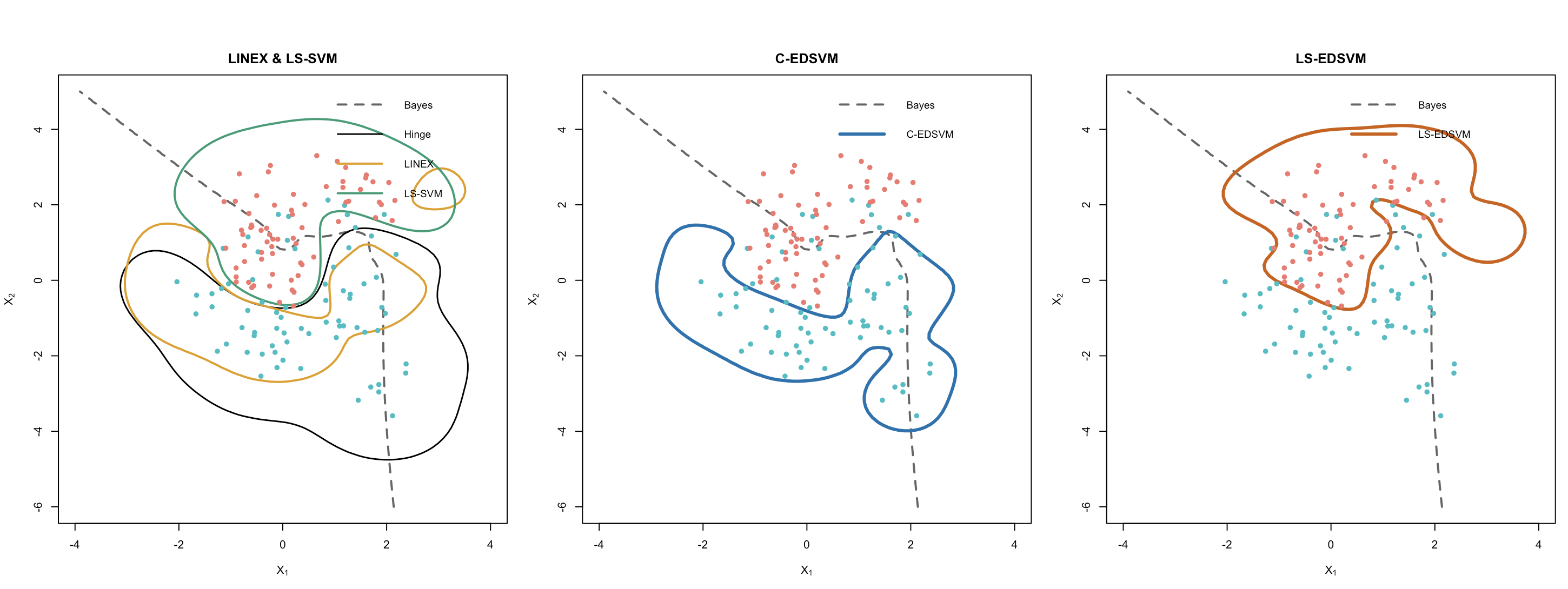}
\caption{Balanced nonlinear mixture: Bayes, hinge, LINEX-SVM and LS-SVM (left), and C-EDSVM (middle) and LS-EDSVM (right) when the elite target slack is $\xi_i^{*,\min} = \min\{\xi_{i,\mathrm{hinge}},
\xi_{i,\mathrm{LINEX}},\xi_{i,\mathrm{LS\text{-}SVM}}\}$.}
\label{fig:balanced-min}
\end{figure}

\begin{figure}[ht]
\centering
\includegraphics[width=0.9\textwidth]{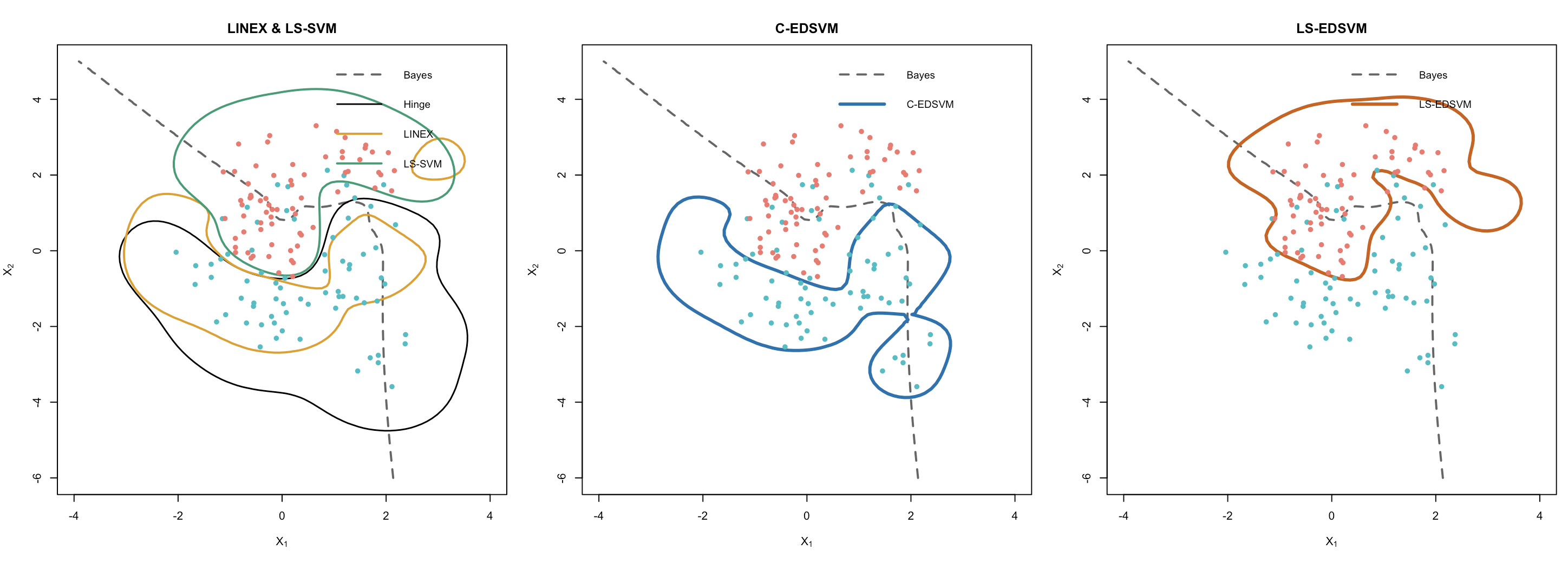}
\caption{Balanced nonlinear mixture: same layout as Figure~\ref{fig:balanced-min}, with elite target slack $\xi_i^{*,\mathrm{mean}} =
\frac{1}{2}(\xi_{i,\mathrm{LINEX}}+\xi_{i,\mathrm{LS\text{-}SVM}})$.}
\label{fig:balanced-mean2}
\end{figure}

Figure~\ref{fig:balanced-mean2} corresponds to the average benchmark $\xi_i^{*,\mathrm{mean}}$ based on LINEX-SVM and LS-SVM.\ In this case the elite-driven boundaries lie almost exactly between the two reference curves throughout most of the feature space, illustrating the ability of EDSVM to synthesize different modelling choices into a single compromise classifier.\  The performance of C-EDSVM and LS-EDSVM essentially matches that of the best benchmark SVMs: both methods attain an accuracy of $0.850$, an AUC of $0.919$, an F1-score of $0.824$, and a PR-AUC of $0.929$.

In Figure~\ref{fig:balanced-maxall} the target slack is the maximum of the three benchmarks.\  This more conservative choice encourages the elite-driven
boundaries to stay on the ``safe'' side of difficult observations, resulting in decision contours that are slightly more cautious near overlapping regions.\ Here C-EDSVM and LS-EDSVM again achieve an accuracy of $0.867$, but with higher discrimination metrics: an AUC of $0.932$ and a PR-AUC of $0.942$, compared with $0.918$ and $0.939$ for C-SVM and $0.939$ and $0.945$ for LS-SVM.\  The F1-score for the elite-driven models rises to $0.852$, indicating that the conservative slack target does not sacrifice overall balance between precision and recall.

\begin{figure}[ht]
\centering
\includegraphics[width=0.9\textwidth]{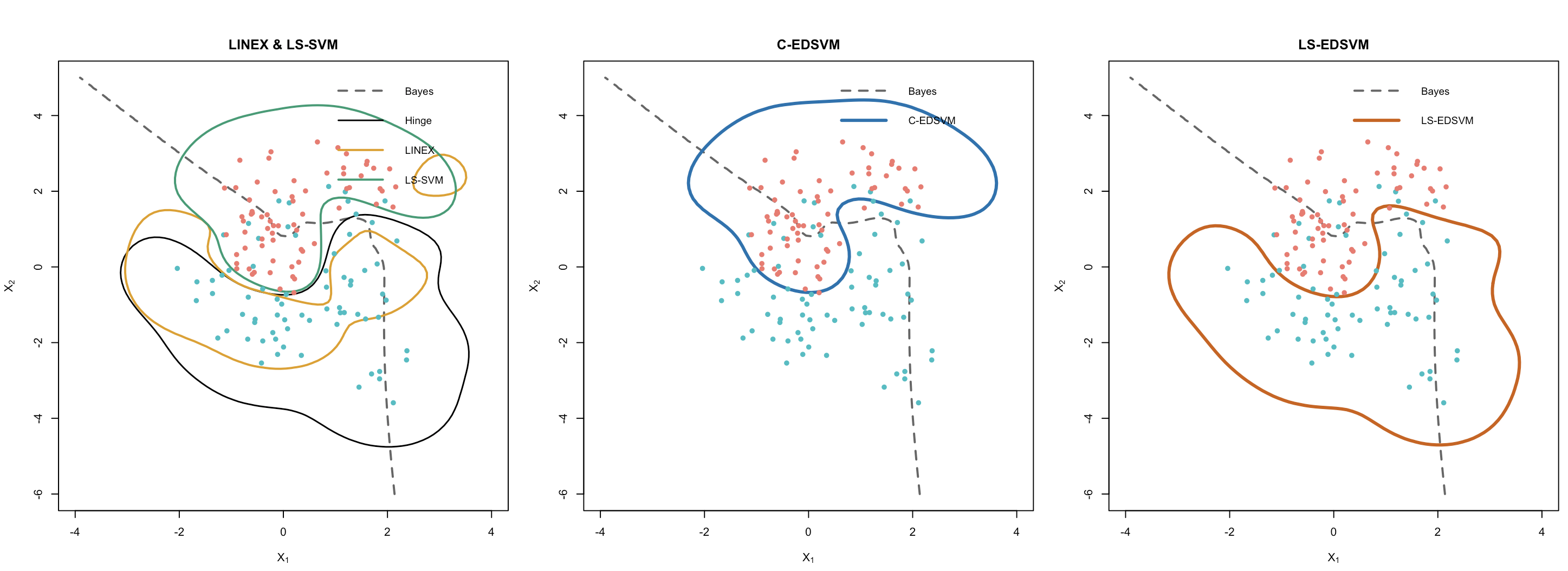}
\caption{Balanced nonlinear mixture: same layout as
Figure~\ref{fig:balanced-min}, with conservative target slack
$\xi_i^{*,\max} = \max\{\xi_{i,\mathrm{hinge}},
\xi_{i,\mathrm{LINEX}},\xi_{i,\mathrm{LS\text{-}SVM}}\}$.}
\label{fig:balanced-maxall}
\end{figure}

Finally, Figure~\ref{fig:balanced-linex} shows the case where the target slack comes solely from the LINEX-SVM benchmark, $\xi_i^{*,\mathrm{LINEX}}$.\  In this configuration the C-EDSVM and LS-EDSVM boundaries inherit much of the asymmetric shape induced by the LINEX loss, particularly in regions where the Bayes boundary is steep, but they still adapt to the data through the empirical slack term.\ The resulting numerical performance is very close to the standard SVMs: both EDSVM variants attain an accuracy of $0.850$, an AUC of $0.920$, an F1-score of $0.824$ and a PR-AUC of $0.925$.

\begin{figure}[ht]
\centering
\includegraphics[width=0.9\textwidth]{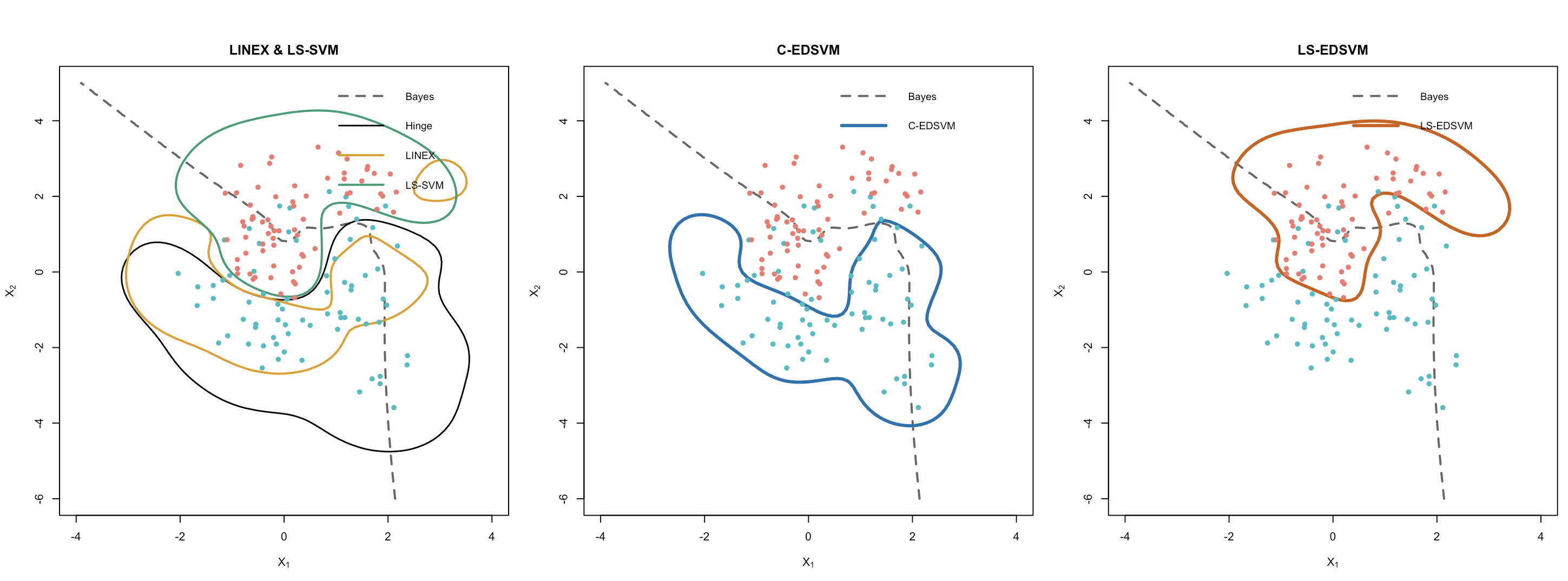}
\caption{Balanced nonlinear mixture: same layout as Figure~\ref{fig:balanced-min}, with LINEX-only target slack $\xi_i^{*,\mathrm{LINEX}} = \xi_{i,\mathrm{LINEX}}$.}
\label{fig:balanced-linex}
\end{figure}

Table~\ref{tab:balanced-slack} summarizes the numerical results across all four constructions of the target slack.\  Overall, C-EDSVM and LS-EDSVM either match or slightly improve upon the strongest benchmark SVMs in terms of accuracy, AUC, F1-score and PR-AUC. The differences between C-EDSVM and LS-EDSVM are negligible in this design, indicating that the main driver of elite-driven behaviour is the choice of target slack rather than the particular quadratic or linear form of the slack penalty.\  These findings suggest that slack-based elite guidance can be used flexibly, with different definitions of $\xi_i^*$, without compromising predictive performance.

\begin{table}[ht]
\centering
\caption{Balanced nonlinear mixture: performance of SVM-type methods under
different constructions of the elite target slack $\xi_i^*$.}
\label{tab:balanced-slack}
\begin{tabular}{llcccc}
\toprule
         Target $\xi_i^*$                  & Method     & Accuracy & AUC   & F1   & PR-AUC \\
\toprule
 & C-SVM     & 0.850 & 0.918 & 0.836 & 0.939 \\
 & LINEX-SVM & 0.767 & 0.897 & 0.696 & 0.914 \\
 & LS-SVM    & 0.850 & 0.939 & 0.836 & 0.945 \\
\midrule
$\xi^{*, \text{min}}$
 & C-EDSVM   & 0.867 & 0.921 & 0.846 & 0.933 \\
 & LS-EDSVM  & 0.867 & 0.921 & 0.846 & 0.933 \\
\midrule
$\xi^{*, \text{mean}}$
 & C-EDSVM   & 0.850 & 0.919 & 0.824 & 0.929 \\
 & LS-EDSVM  & 0.850 & 0.919 & 0.824 & 0.929 \\
\midrule
$\xi^{*, \text{max}}$
 & C-EDSVM   & 0.867 & 0.932 & 0.852 & 0.942 \\
 & LS-EDSVM  & 0.867 & 0.932 & 0.852 & 0.942 \\
\midrule
$\xi^{*, \text{LINEX}}$
 & C-EDSVM   & 0.850 & 0.920 & 0.824 & 0.925 \\
 & LS-EDSVM  & 0.850 & 0.920 & 0.824 & 0.925 \\
\bottomrule
\end{tabular}
\end{table}

%
%

\section{Real data analysis}
\label{sec:realdata}

To complement the simulation study, we assess the proposed EDSVM models on five benchmark datasets from the UCI repository.\  These datasets cover a range of sample sizes, levels of class imbalance, and feature types, and have been widely used to evaluate classification algorithms.\ The Pima dataset comprises $768$ observations on Pima Indian women aged $21$ years or older, with the task of predicting the presence of diabetes from eight clinical measurements, including glucose level, body mass index, and insulin \citep{smith1988pima}. The Australian dataset contains $690$ credit applications, of which $383$ are rejected and $307$ approved, described by $14$ mixed-type attributes and used for credit risk assessment \citep{quinlan1987credit,uci2019}.\  The Bupa-liver dataset consists of $345$ instances, $185$ of which correspond to patients with liver disorders; each observation includes six blood-test variables, such as ALP, ALT, and GGT, together with reported alcohol consumption \citep{forsyth1986bupa}.\ The Haberman dataset records $306$ cases of breast cancer surgery patients treated between 1958 and 1970, with predictors given by age, year of operation, and the number of positive axillary lymph nodes, and a binary outcome indicating whether the patient survived at least five years \citep{haberman1973dataset}.\  Finally, the Ionosphere dataset comprises $351$ radar returns collected at Goose Bay, Labrador; each record is described by $34$ continuous features derived from the radar signal, and the label ``good'' or ``bad'' indicates whether the return corresponds to a well-structured ionospheric target \citep{sigillito1989ionosphere}.

For each dataset we fit both linear and nonlinear SVM-type classifiers, using Gaussian RBF kernels as the nonlinear choice. Hyperparameters are selected by 5-fold cross-validation on the training data, using the same grids as in the simulation study.\  The construction of benchmark slacks is as follows: for C-EDSVM, the reference slack $\xi_i^*$ is defined as the average of the LINEX-SVM and LS-SVM slacks for observation $i$, whereas for LS-EDSVM we use the average of the C-SVM and LINEX-SVM slacks.\  This design ensures that the elite-driven models are guided by combinations of strong baseline methods, while still being estimated from the same data and tuning protocol as their benchmarks.

We report four performance metrics: accuracy, AUC, F1-score, and PR-AUC. The choice of which metric is emphasized for interpretation is purpose-driven and depends on which class is of primary interest in the application. For instance, in settings with a rare positive class F1-score and PR-AUC are more informative than overall accuracy.\  Consequently, some entries in the tables correspond to metrics that are not directly aligned with the primary objective for a given dataset and may therefore appear suboptimal; they are included to provide a complete picture of the behaviour of all methods across standard metrics.\ Tables~\ref{tab:uci-linear} and~\ref{tab:uci-rbf} summarize the results for linear and nonlinear classifiers, respectively.\  Each table reports 5-fold cross-validated means and standard deviations for all four metrics, with boldface indicating cases where proposed methods performed the best.

%
%

\subsection{Accuracy}

The rows labelled ``Acc.'' in Tables~\ref{tab:uci-linear} and~\ref{tab:uci-rbf} report the accuracy of the competing methods. In the linear setting, EDSVM models are frequently among the best performers. On the Australian dataset, C-EDSVM attains the highest accuracy, while on Bupa-liver LS-EDSVM yields the best result.\  On Haberman, LS-EDSVM again provides the top accuracy, slightly improving upon both C-SVM and LS-SVM. On Pima, LINEX-SVM achieves the highest accuracy, with both C-EDSVM and LS-SVM performing very similarly and marginally better than C-SVM. For Ionosphere, the classical C-SVM remains marginally superior, with C-EDSVM and LS-EDSVM close behind.

In the nonlinear case (Table~\ref{tab:uci-rbf}), the picture is similar.\ C-EDSVM achieves the highest accuracy on Australian and Haberman and remains competitive on the other datasets, while LS-EDSVM performs best on Bupa-liver.\ On Pima and Ionosphere, LS-SVM attains the highest accuracy, but here again the elite-driven models do not suffer large losses and often narrow the gap between different baseline losses.\   Overall, these results indicate that slack-based elite guidance can be layered on top of existing SVM formulations without sacrificing their good empirical performance in terms of accuracy.

\begin{table}[ht]
\centering
\caption{Classification performance (\%) of SVM models on UCI datasets with linear classifiers.\ Entries are 5-fold cross-validated means $\pm$ standard deviations.\ Boldface highlights the best-performing EDSVM variants; baseline methods are reported without emphasis, even when they attain the top score.}
\label{tab:uci-linear}
\renewcommand{\arraystretch}{1.2}
\setlength{\tabcolsep}{3pt}
\resizebox{\textwidth}{!}{%
\begin{tabular}{llccccc}
\toprule
Dataset & Metric & C-SVM & LINEX-SVM & LS-SVM & C-EDSVM & LS-EDSVM \\
\midrule
\multirow{4}{*}{Australian}
 & Acc.    & 84.78 $\pm$ 2.17 & 85.87 $\pm$ 2.54 & 85.14 $\pm$ 2.54 & \textbf{86.23 $\pm$ 2.17} & 85.51 $\pm$ 2.17 \\
 & AUC     & {93.71 $\pm$ 2.98} & 92.54 $\pm$ 2.79 & 92.46 $\pm$ 2.61 & 92.49 $\pm$ 2.72 & 92.35 $\pm$ 2.99 \\
 & F1      & 82.86 $\pm$ 1.77 & 83.81 $\pm$ 2.37 & 83.11 $\pm$ 2.29 & \textbf{84.13 $\pm$ 2.05} & 83.56 $\pm$ 1.85 \\
 & PR-AUC  & {83.24 $\pm$ 2.18} & 82.62 $\pm$ 3.22 & 82.59 $\pm$ 3.39 & 82.02 $\pm$ 4.85 & 81.50 $\pm$ 5.92 \\
\midrule
\multirow{4}{*}{Bupa-liver}
 & Acc.    & 68.84 $\pm$ 3.95 & 72.46 $\pm$ 3.45 & 67.54 $\pm$ 3.86 & 71.74 $\pm$ 2.49 & \textbf{73.19 $\pm$ 3.62} \\
 & AUC     & 67.84 $\pm$ 4.02 & 67.51 $\pm$ 4.52 & 68.17 $\pm$ 4.32 & 68.01 $\pm$ 4.73 & \textbf{69.84 $\pm$ 4.76} \\
 & F1      & 52.39 $\pm$ 3.61 & 52.50 $\pm$ 2.50 & 50.21 $\pm$ 6.31 & 54.33 $\pm$ 8.17 & \textbf{55.69 $\pm$ 9.53} \\
 & PR-AUC  & 31.27 $\pm$ 7.33 & 29.77 $\pm$ 9.92 & {33.64 $\pm$ 7.91} & 32.69 $\pm$ 9.89 & 33.08 $\pm$ 11.38 \\
\midrule
\multirow{4}{*}{Pima}
 & Acc.    & 71.65 $\pm$ 3.68 &{73.38 $\pm$ 4.55} & 73.16 $\pm$ 4.11 & 73.16 $\pm$ 4.76 & 72.94 $\pm$ 4.33 \\
 & AUC     & 80.51 $\pm$ 4.21 &{80.75 $\pm$ 3.84} & 80.27 $\pm$ 4.00 & 80.67 $\pm$ 3.88 & 80.59 $\pm$ 4.08 \\
 & F1      & {60.71 $\pm$ 7.31} & 57.86 $\pm$ 4.43 & 57.98 $\pm$ 4.03 & 57.45 $\pm$ 3.90 & 57.56 $\pm$ 3.79 \\
 & PR-AUC  & 59.57 $\pm$ 12.82 & {59.96 $\pm$ 10.79} & 59.26 $\pm$ 10.26 & 59.33 $\pm$ 11.54 & 58.58 $\pm$ 9.97 \\
\midrule
\multirow{4}{*}{Haberman}
 & Acc.    & 76.23 $\pm$ 0.82 & 79.51 $\pm$ 2.46 & 77.78 $\pm$ 0.82 & 78.68 $\pm$ 1.64 & \textbf{79.87 $\pm$ 2.46} \\
 & AUC     & 72.83 $\pm$ 5.67 & 75.46 $\pm$ 4.64 & 74.51 $\pm$ 5.28 & 74.70 $\pm$ 5.40 & \textbf{75.65 $\pm$ 4.07} \\
 & F1      & 44.47 $\pm$ 7.14 &{52.71 $\pm$ 6.55} & 26.67 $\pm$ 3.24 & 35.58 $\pm$ 10.58 & 26.14 $\pm$ 2.14 \\
 & PR-AUC  & 47.04 $\pm$ 8.30 & {52.01 $\pm$ 13.67} & 51.41 $\pm$ 13.45 & 51.27 $\pm$ 14.03 & 50.01 $\pm$ 10.31 \\
\midrule
\multirow{4}{*}{Ionosphere}
 & Acc.    & {92.14 $\pm$ 0.71} & 90.68 $\pm$ 2.20 & 90.71 $\pm$ 2.14 & 91.64 $\pm$ 1.71 & 92.06 $\pm$ 2.08 \\
 & AUC     & {92.68 $\pm$ 1.73} & 89.64 $\pm$ 3.59 & 92.37 $\pm$ 1.17 & 92.62 $\pm$ 1.57 & 91.90 $\pm$ 1.80 \\
 & F1      & 92.84 $\pm$ 2.59 & 93.60 $\pm$ 2.74 & 94.05 $\pm$ 3.19 & \textbf{94.86 $\pm$ 2.29} & 94.84 $\pm$ 2.30 \\
 & PR-AUC  & 76.00 $\pm$ 10.12 & 76.87 $\pm$ 8.24 & 77.73 $\pm$ 4.78 & \textbf{77.96 $\pm$ 5.82} & 76.98 $\pm$ 8.12 \\
\bottomrule
\end{tabular}
}
\end{table}

%
%

\begin{table}[ht]
\centering
\caption{Classification performance (\%) of SVM models on UCI datasets with nonlinear classifiers (Gaussian RBF kernel).\ Entries are 5-fold cross-validated means $\pm$ standard deviations.\ Boldface highlights the best-performing EDSVM variants; baseline methods are reported without emphasis, even when they attain the top score.}
\label{tab:uci-rbf}
\renewcommand{\arraystretch}{1.2}
\setlength{\tabcolsep}{3pt}
\resizebox{\textwidth}{!}{%
\begin{tabular}{llccccc}
\toprule
Dataset & Metric & C-SVM & LINEX-SVM & LS-SVM & C-EDSVM & LS-EDSVM \\
\midrule
\multirow{4}{*}{Australian}
 & Acc.    & 83.70 $\pm$ 1.09 & 70.29 $\pm$ 5.07 & 84.06 $\pm$ 0.72 & \textbf{85.14 $\pm$ 1.09} & 84.42 $\pm$ 1.81 \\
 & AUC     & 88.57 $\pm$ 2.20 & 84.96 $\pm$ 2.49 &{89.49 $\pm$ 2.87} & 89.01 $\pm$ 3.34 & 88.01 $\pm$ 2.07 \\
 & F1      & 79.87 $\pm$ 4.08 & 78.46 $\pm$ 4.60 & 80.25 $\pm$ 5.05 & \textbf{81.24 $\pm$ 4.43} & 80.94 $\pm$ 4.61 \\
 & PR-AUC  & 75.44 $\pm$ 8.77 & 74.45 $\pm$ 5.58 & 77.53 $\pm$ 6.29 & \textbf{80.83 $\pm$ 4.54} & 75.74 $\pm$ 4.03 \\
\midrule
\multirow{4}{*}{Bupa-liver}
 & Acc.    & 68.24 $\pm$ 3.55 & 67.39 $\pm$ 3.62 & 68.84 $\pm$ 3.92 & 68.12 $\pm$ 1.45 & \textbf{69.26 $\pm$ 2.90} \\
 & AUC     & {67.99 $\pm$ 2.88} & 67.57 $\pm$ 2.52 & 67.48 $\pm$ 2.45 & 67.26 $\pm$ 2.60 & 66.67 $\pm$ 1.19 \\
 & F1      & 60.13 $\pm$ 5.34 & 58.53 $\pm$ 3.49 & 58.61 $\pm$ 5.13 & \textbf{60.34 $\pm$ 4.83} & 58.76 $\pm$ 2.88 \\
 & PR-AUC  & 41.81 $\pm$ 17.99 & 36.75 $\pm$ 12.75 & 40.59 $\pm$ 15.76 & 41.86 $\pm$ 21.22 & \textbf{43.59 $\pm$ 19.55} \\
\midrule
\multirow{4}{*}{Pima}
 & Acc.    & 66.67 $\pm$ 2.81 & 70.78 $\pm$ 1.65 & {73.81 $\pm$ 4.11} & 71.33 $\pm$ 2.70 & 70.56 $\pm$ 1.43 \\
 & AUC     & 81.31 $\pm$ 3.35 & 79.74 $\pm$ 1.47 & 81.35 $\pm$ 3.42 & \textbf{81.84 $\pm$ 2.18} & 79.91 $\pm$ 2.46 \\
 & F1      & 23.02 $\pm$ 3.17 & 28.83 $\pm$ 13.67 & {63.19 $\pm$ 5.81} & 43.71 $\pm$ 10.83 & 42.03 $\pm$ 4.01 \\
 & PR-AUC  & {56.77 $\pm$ 4.42} & 45.72 $\pm$ 4.17 & 54.78 $\pm$ 4.87 & 55.65 $\pm$ 3.26 & 51.34 $\pm$ 2.92 \\
\midrule
\multirow{4}{*}{Haberman}
 & Acc.    & 77.05 $\pm$ 1.64 & 78.34 $\pm$ 1.64 & 76.23 $\pm$ 0.82 & \textbf{78.69 $\pm$ 1.64} & 77.05 $\pm$ 1.64 \\
 & AUC     &{72.64 $\pm$ 2.36} & 70.29 $\pm$ 2.13 & 68.66 $\pm$ 4.83 & 70.71 $\pm$ 3.23 & 68.50 $\pm$ 5.13 \\
 & F1      & 46.22 $\pm$ 1.78 & 37.65 $\pm$ 5.88 &{48.89 $\pm$ 4.37} & 37.78 $\pm$ 5.56 & 33.71 $\pm$ 4.76 \\
 & PR-AUC  & 43.66 $\pm$ 12.91 & 45.69 $\pm$ 4.80 & 53.34 $\pm$ 8.87 & \textbf{53.58 $\pm$ 10.80} & 47.39 $\pm$ 6.14 \\
\midrule
\multirow{4}{*}{Ionosphere}
 & Acc.    & 92.14 $\pm$ 0.71 & 94.29 $\pm$ 1.43 & {95.06 $\pm$ 2.14} & 93.88 $\pm$ 3.34 & 94.29 $\pm$ 3.17 \\
 & AUC     &{96.24 $\pm$ 2.81} & 96.02 $\pm$ 2.23 & 95.12 $\pm$ 2.03 & 95.91 $\pm$ 2.34 & 95.75 $\pm$ 2.30 \\
 & F1      & 94.96 $\pm$ 2.62 & 96.22 $\pm$ 2.81 &{96.71 $\pm$ 3.29} & 96.29 $\pm$ 2.14 & 96.23 $\pm$ 2.19 \\
 & PR-AUC  & 93.25 $\pm$ 1.62 & 93.37 $\pm$ 2.52 & {93.70 $\pm$ 2.74} & 93.68 $\pm$ 3.39 & 90.97 $\pm$ 5.36 \\
\bottomrule
\end{tabular}
}
\end{table}

\subsection{AUC, F1-score, and PR-AUC}

The remaining rows in Tables~\ref{tab:uci-linear} and~\ref{tab:uci-rbf} describe the behaviour of the methods with respect to AUC, F1-score, and PR-AUC. For linear classifiers, LS-EDSVM achieves the highest AUC on Bupa-liver and Haberman, while C-EDSVM and LINEX-SVM are highly competitive on Australian and Pima. In terms of F1-score, C-EDSVM leads on Australian and Ionosphere, and LS-EDSVM on Bupa-liver, whereas LINEX-SVM performs best on Haberman. For PR-AUC, C-SVM tends to dominate on Australian and Ionosphere, with LS-SVM and LINEX-SVM performing strongly on Bupa-liver, Pima, and Haberman; in all cases the elite-driven models remain close to the best baseline.

For nonlinear kernels, C-EDSVM often improves upon C-SVM in terms of AUC and F1-score, particularly on Australian and Bupa-liver, while LS-EDSVM and LS-SVM tend to dominate on Bupa-liver and Ionosphere.\ On Pima and Haberman, LS-SVM provides the strongest F1-score, reflecting its good performance on the minority class, but C-EDSVM usually attains comparable or better PR-AUC, indicating competitive precision-recall trade-offs in regions of practical interest.\ Across datasets and kernel choices, the elite-driven models rarely underperform their benchmarks by a large margin and frequently offer modest gains on at least one of the metrics that are most relevant for the positive class.

Taken together, the simulation and real-data results suggest that slack-based elite guidance is a robust regularization mechanism.\ It preserves, and in some cases enhances, the predictive performance of standard SVM formulations across a range of metrics, while embedding them in a richer framework that explicitly links new models to trusted benchmark behaviour at the level of elite observations.

%
%

\section{Discussion and future work}
\label{sec:discussion}

We have introduced Elite-Driven Support Vector Machines (EDSVM), a flexible framework that incorporates benchmark guidance and elite information directly into the SVM training process via slack variables.\ The central idea is to define a composite slack-level loss that balances empirical fit with proximity to benchmark slacks on a curated set of elite observations, thereby localizing the influence of prior information to the most influential points near the decision boundary.\ Within this framework we derived two concrete formulations, C-EDSVM and LS-EDSVM, and obtained their dual representations, showing that elite guidance can be implemented within the familiar quadratic programming infrastructure of standard SVMs.\ Simulation studies and real data analyses indicate that EDSVM models typically achieve predictive performance that is comparable to or better than standard SVM variants, while anchoring the decision boundary to meaningful benchmark behaviour.\ The main contribution of EDSVM, however, is not limited to raw accuracy gains.\ By working at the slack level on an elite set, the framework provides a principled mechanism to encode expert or client preferences, to introduce controlled shrinkage toward trusted benchmark models, and to enhance interpretability by making explicit the connection between new decision rules and existing reference classifiers.

Several directions for future research naturally emerge from this work.\ A first line of extension concerns fairness-aware learning.\ Elite-driven ideas can be combined with fairness criteria such as demographic parity or equalized odds by constructing benchmark slacks that satisfy fairness constraints or by defining elite sets on sensitive subgroups.\  A second direction is Bayesian EDSVM, in which one places priors on $(\beta_0,\bbeta)$ and on slack deviations $(\xi_i-\xi_i^*)$ and uses posterior inference to quantify uncertainty about the decision boundary; approximate methods such as variational inference or MCMC could be used to make such models computationally feasible.\  A third avenue concerns adaptive elite selection.\  In the present work, elites are derived from the support vectors of benchmark models, but alternative constructions based on leverage scores, or local margin geometry may yield more refined sets of elite observations, and semi-supervised or active learning variants could exploit partial elite labelling.\ Finally, EDSVM ideas can be transported to multi-task and high-dimensional settings, where elite guidance might be shared across related tasks and combined with sparsity-inducing penalties or dimension-reduction techniques.

Overall, EDSVM provides a useful building block for a broader program aimed at integrating expert knowledge, fairness considerations, and interpretability into modern kernel-based learning methods, without abandoning the computational and theoretical advantages of the SVM paradigm.

\section*{Acknowledgment:} 
Mohammad Jafari Jozani gratefully acknowledges partial support from NSERC Canada.

\bibliographystyle{imsart-nameyear}

\end{document}